\author{
Arpan Mukherjee  \qquad  Ali Tajer
\thanks{The authors are with the Electrical, Computer, and System Engineering Department, Rensselaer Polytechnic Institute.}
}
\newtheorem{theorem}{Theorem}
\newtheorem{proposition}{Proposition}
\newtheorem{definition}{Definition}
\newtheorem{lemma}{Lemma}
\newcommand*\diff{\mathop{}\!\mathrm{d}}
\DeclareMathOperator*{\arginf}{arg\,inf}
\DeclareMathOperator*{\argmax}{arg\,max}
\DeclareMathOperator*{\argmin}{arg\,min}
\DeclareMathAlphabet\mathbfcal{OMS}{cmsy}{b}{n}
\title{\bf \large SPRT-based Efficient Best Arm Identification in Stochastic Bandits}
\date{}
\begin{document}
\allowdisplaybreaks
\maketitle

\begin{abstract}
	This paper investigates the best arm identification (BAI) problem in stochastic multi-armed bandits in the fixed confidence setting. The general class of the exponential family of bandits is considered. 
	{The existing algorithms for the exponential family of bandits face computational challenges. {To mitigate these challenges, the BAI problem is viewed and analyzed as a sequential composite hypothesis testing task, and a framework is proposed that adopts the likelihood ratio-based tests known to be effective for sequential testing. Based on this test statistic, a BAI algorithm is designed that leverages the canonical sequential probability ratio tests for arm selection and is amenable to tractable analysis for the exponential family of bandits.} This algorithm has two key features: (1) its sample complexity is asymptotically optimal, and (2) it is guaranteed to be $\delta-$PAC. {Existing efficient approaches focus on the Gaussian setting and require Thompson sampling for the arm deemed the best and the challenger arm.} Additionally, this paper analytically quantifies the computational expense of identifying the challenger in an existing approach.} Finally, numerical experiments are provided to support the analysis.
	
\end{abstract}

\section{Introduction}

This paper considers the problem of best arm identification (BAI) in stochastic multi-armed bandits (MABs). In a stochastic MAB, each arm generates rewards from distributions with \emph{unknown} mean values. The objective of a learner in BAI is to identify the arm with the largest mean value, using the fewest number of samples.

The BAI problem is broadly studied under two key settings: the \emph{fixed budget} setting and the \emph{fixed confidence} setting. The objective in the fixed budget setting is to identify the arm having the largest mean within a pre-specified sampling budget while minimizing the decision error probability. On the other hand, in the fixed confidence setting, the learner identifies the best arm while having a guarantee on the error probability, and the objective is to minimize the sample complexity. The problem of identifying the best arm was first proposed in~\cite{Bubeck}, where the problem was posed in the fixed budget setting. More investigations in this setting can be found in~\cite{pmlr-v33-hoffman14} and~\cite{FB_JKS}. Representative studies in the fixed confidence setting can be found in~\cite{Gabillon,Kalyanakrishnan2012,pmlr-v49-garivier16a, LinGapE,Jamieson2014,pmlr-v28-karnin13}. Algorithms in this setting can be further classified into two categories, namely, non-Bayesian algorithms and Bayesian algorithms. 

{In the non-Bayesian setting, an optimal algorithm for parametric stochastic bandits was first proposed in~\cite{pmlr-v49-garivier16a} for the single parameter exponential family, which is based on a tracking procedure for arm selection. Specifically, in each round,~\cite{pmlr-v49-garivier16a} computes the optimal sampling proportion at the current mean estimates and selects an arm based on the optimal sampling proportion. While the track and stop (TaS) algorithm exhibits asymptotic optimality, it is computationally expensive due to solving an optimization problem to compute the optimal allocation in each iteration. To reduce the computational complexity, \cite{jedra2020optimal} showed that tracking the optimal proportion in intervals with exponentially increasing gaps is sufficient. However, this study focuses only on linear bandits with Gaussian noise. To address the computational complexity of track and stop, another approach to solving BAI is the gamification approach~\cite{degenne2019,pmlr-v119-degenne20a}. In this approach, BAI is posed as an unknown two-player game in which the sampling rules are obtained from the iterative strategies of the two players, which converge to a saddle point. The state-of-the-art algorithm for BAI in the non-Bayesian parametric setting is the Frank-Wolfe sampling algorithm (FW)~\cite{FW}. In this approach, the sampling rule is obtained from a single iteration of the Frank-Wolfe algorithm, which involves solving a two-player zero-sum game in each iteration. Despite being computationally efficient compared to TaS, the FW algorithm involves solving a linear program in each round.}

{Some of the non-Bayesian approaches to solving BAI for the non-parametric class of stochastic MABs (such as the class of sub-Gaussian stochastic MABs or bounded variance stochastic MABs) include confidence interval-based approaches (see~\cite{Gabillon,LinGapE,Jameison2014a}), successive elimination-based approaches (see~\cite{JMLR:v7:evendar06a,Jameison2014a,RAGE,Soare,Tao,pmlr-v28-karnin13}) and tracking based approaches~\cite{pmlr-v117-agrawal20a}.} In the confidence interval-based approach, the learner computes the sample mean of each arm, and a confidence interval around these empirical estimates, within which the true mean exists with a high probability. The rationale behind this strategy is to gather more evidence until there is no overlap between the confidence intervals and the learner decides the best arm based on the empirical estimates. On the other hand, the successive elimination-based strategy involves eliminating the potentially suboptimal arms in each round and continuing sampling from all other arms until only one arm remains to be eliminated.

While non-Bayesian approaches have been investigated extensively, the Bayesian setting is far less investigated. The first Bayesian algorithm was investigated in~\cite{russo2016}, which introduced the \emph{top-two} design philosophy and designed three Bayesian algorithms based on that. The top-two algorithmic design mitigated the computational challenge encountered in TaS, and provided a computationally simple alternative to the arm selection strategy. Among them, a modification of the Thompson sampling algorithm~\cite{Thompson1, Thompson2, Thompson3}, called the top-two Thompson sampling (TTTS) has received more attention due to its simplicity and optimality properties. The sample complexity of TTTS, however, was not analyzed in~\cite{russo2016}. To address the sample complexity of the top-two algorithms in Gaussian bandits, an improved algorithm was devised in~\cite{TTEI}, where the sample complexity was shown to be asymptotically optimal up to a constant factor. The sample complexity of TTTS in the Gaussian setting was later analyzed in~\cite{pmlr-v108-shang20a}, and it was shown to be asymptotically optimal. However, the sample complexity analysis was based on a non-informative prior, reducing it to a non-Bayesian setting. Furthermore, despite its simplicity, TTTS faces a computational challenge in its sampling strategy, which becomes significant in the regime of diminishing error rates. {To circumvent this, a transportation cost-based sampler was adopted in designing an efficient Thompson sampling-based BAI algorithm called the top-two transportation cost (T3C)~\cite{pmlr-v108-shang20a}. However, the sample complexity of T3C is only analyzed in the Gaussian setting.}

This paper leverages a sequential hypothesis testing framework for formalizing and solving the BAI problem in the fixed confidence setting. The arm selection and stopping rules are, in spirit, similar to the sequential probability ratio test~\cite{Wald1945}.  {BAI has been viewed as a hypothesis testing problem in a wide body of literature starting from the investigation in~\cite{pmlr-v49-garivier16a}, and subsequently in~\cite{TTEI,degenne2019,pmlr-v108-shang20a,FW}. Despite that, the algorithms offered generally do not adopt the statistics known to be effective for sequential composite hypothesis testing. In contrast, in this investigation, the arm selection rules dynamically update generalized likelihood ratios that compare the relative likelihood of different arms for being among the best. We refer to this algorithm by the top-two sequential probability ratio test (TT-SPRT). The idea of using SPRT-based rules was first introduced in~\cite{C89}, the key advantage of which is being amenable to analysis in the broader class of the exponential family of bandits. Specifically, the analysis for Thompson sampling-based approaches relies on non-asymptotic concentration inequalities for the convergence of the posterior mean to the ground truth. These concentration results exist in the literature for Gaussian bandits~\cite{pmlr-v108-shang20a}. However, for the broader class of the exponential family, the analysis of posterior sampling-based approaches is contingent on tail bounds for the posterior means, which need to be derived. Furthermore, empirically, computing a posterior distribution may involve Monte Carlo integration~\cite{russo2016}, in case a closed-form conjugate prior does not exist. Both of these issues make the log-likelihood ratio-based test statistic a more reasonable choice compared to posterior sampling.} Prior to this, the sample complexity of top-two algorithms~\cite{russo2016,TTEI,pmlr-v108-shang20a} has only been analyzed in the special setting of Gaussian bandits. This investigation is a generalization of~\cite{TTEI,pmlr-v108-shang20a} in the sense that the  TT-SPRT  algorithm is shown to be asymptotically optimal for the single-parameter exponential family. {Thus, we have addressed the open question of developing an {\em efficient} BAI algorithm in the fixed-confidence setting for the single parameter exponential family of distributions.} 

We show that in the special case of Gaussian bandits, TT-SPRT addresses a computational weakness of the TTTS algorithm.	Specifically, for dynamically identifying the top two arms, the TTTS sampling strategy generates random samples from the posterior distributions of arm rewards. Subsequently, the coordinate with the largest value in the first sample is deemed the best arm's index. For identifying the second arm (the challenger), TTTS keeps generating more samples until the coordinate with the largest value is distinct from the index already identified as the best arm. After enough explorations, the posterior distribution converges to the true model, and the largest coordinate of any random sample will be pointing to the best arm. This increases the delay in encountering a challenger. TT-SPRT does not have such a computational challenge. Finally, we note that our arm selection strategy is, in spirit, similar to the sequential probability ratio test\cite{Wald1945}~and~\cite{rev4}, which is a powerful test in a wide range of sequential testing problems, owing to its optimality properties and computational simplicity. 

{Besides this paper (and its earlier version~\cite{C89}) leveraging SPRT-based rules for the top-two algorithms with various choices of the top arm and the challenger arm have also been recently further investigated in~\cite{jourdan2022}, including the choice that has been considered in this paper. While the results in~\cite{jourdan2022} confirm the findings in this paper, there are a few critical differences in the assumptions on the bandit model considered. Specifically, for the exponential family analysis,~\cite{jourdan2022} assumes sub-Gaussian distributions and having distinguishable arm means. However, we make no such assumption for our analysis of the exponential family. In contrast, to account for this assumption, our algorithm involves a forced exploration stage.}

\section{BAI in Stochastic Bandits}

Consider a $K$-armed \emph{stochastic} MAB setting with $K$ arms generating rewards based on probability distributions that belong to an exponential family of distributions. Specifically, corresponding to a convex, twice-differentiable function $b : \R \mapsto \R$, we consider a single-parameter exponential family, denoted by 
\begin{align}
    \mathscr{P}_b\triangleq \left\{\nu_\theta \;:\;  \frac{\diff \nu_{\theta}}{\diff \nu} = \exp\left (\theta x - b(\theta) \right ) \; , \;  \theta\in\Theta\right\}\ ,
\end{align}
where $\Theta\subseteq \R$ is a compact parameter space, $\nu_\theta$ is the probability measure associated with parameter $\theta\in\Theta$, 
and $\nu$ is a reference measure on $\R$ such that $\nu_{\theta}\ll\nu$ for all $\theta\in\Theta$. The mean value of the distribution $\nu_\theta\in\mathscr{P}_b$ is given by $\mu(\theta)\triangleq \dot b (\theta)$, and the members of $\mathscr{P}_b$ can be uniquely identified by their mean values. We denote the \emph{unknown} mean of arm $i$ by $\mu_i$ and denote its associated probability distribution (probability density function for continuous and probability mass function for discrete distributions) from $\mathscr{P}_b$ by~$\nu(\cdot \med \mu_i)$. Accordingly, the vector of mean values is denoted by $\bmu\triangleq [\mu_1,\dots, \mu_K]$. The exponential family bandit model associated  with $\bmu$ is denoted by $\nu_{\bmu}\triangleq [\nu(\cdot \med \mu_1),\dots, \nu(\cdot \med \mu_K)]$. 
The \emph{best} arm, assumed to be unique, has the largest mean value and it is denoted by
\begin{align}
   a^\star\triangleq \argmax\limits_{i\in[K]}\;\mu_i\ .
\end{align}
The gap between the expected values of the best arm and  arm  $i\in[K]$ is denoted by $\Delta_i\triangleq \mu_{a^\star} - \mu_i$, 
and the smallest gap among all possible arm pairs is captured by
\begin{align}
    \Delta_{\min}\triangleq \min_{i\neq j}|\mu_i - \mu_j|\ .
\end{align}
{We allow $\Delta_{\min}=0$ for the analysis of the single-parameter exponential family. In the special case of Gaussian bandits, we prove an implicit exploration property of our proposed algorithm with the additional assumption that $\Delta_{\min}>0$. However, such an assumption is not required in general, facilitated by the forced exploration stage in our algorithm design presented in Section~\ref{sec:algorithm}.} The learner performs a sequence of arm selections with the objective of identifying the best arm, i.e., $a^\star$, with the fewest number of samples (on average). In the sequential arm selection process, at time $n\in\N$ the leaner selects arm $A_n$ and receives the reward $X_n$, generating the filtration $\mcF_n\triangleq\{(A_t,X_t): t\in\{1,\dots, n\} \}$. Arm selection decision $A_{n+1}$ is assumed to be $\mcF_{n}-$measurable. Accordingly, the sequence of arm selections and the corresponding rewards obtained by the learner up to time $n$ are denoted by
\begin{align}
    \mcA_n\triangleq \{A_1,\cdots,A_n\}\ ,\;\;\text{and}\;\;\mcX_n\triangleq \{X_1,\cdots,X_n\}\ .
\end{align}
The sub-sequence of the rewards from only arm $i\in[K]$ up to time $n$ is denoted by
\begin{align}
    \mcX^i_n\triangleq \{X_s \;:\; s\in[n]\; , \; A_s = i\}\ .
\end{align}
We use the notation  $ D_{\sf KL} (\nu(\cdot\med\mu_i)\|\nu(\cdot\med\mu_j))$ to denote the Kullback-Leibler (KL) divergence from $\nu(\cdot\med\mu_j)$ to $\nu(\cdot\med\mu_i)$. It can be readily verified~\cite{pmlr-v49-garivier16a} that this divergence measure can be specified as a function of $\mu_i$ and $\mu_j$, for which we use the shorthand notation
\begin{align}
\label{eq:KL}
d_{\sf KL}(\mu_i\|\mu_j) & \triangleq D_{\sf KL} (\nu(\cdot\med\mu_i)\|\nu(\cdot\med\mu_j))  = b\left(\dot{b}^{-1}(\mu_j)\right) - b\left(\dot{b}^{-1}(\mu_i)\right) - \mu_i\left[\dot{b}^{-1}(\mu_j) - \dot{b}^{-1}(\mu_i)\right]\ .  
\end{align}
Let $\tau$, which is $\mcF_\tau$-measurable, denote the \emph{stochastic} stopping time of the BAI algorithm, that is the time instant at which the BAI algorithm terminates the search process and identifies an arm as the best arm.  Accordingly, define $\hat A_\tau$ as the arm identified as the best arm at the stopping time. In this paper, we consider the \emph{fixed confidence} setting, in which the learner's objective is to identify the best arm $a^\star$ with a pre-specified confidence level. We use $\delta-$PAC and $\beta$-optimality as the canonical notions of optimality for assessing the efficiency of the BAI algorithms. These notions are formalized next. First, we specify the $\delta-$PAC guarantee, which evaluates the fidelity of the terminal decision.
\begin{definition}[$\delta-$PAC]
A BAI algorithm is $\delta-$PAC 
if the algorithm has a stopping time $\tau$ adapted to $\{\mcF_t:t\in\N\}$, and at the stopping time it ensures \begin{align}
    \P_{\bmu}\{\tau<+\infty,\;\hat A_\tau = a^\star\} > 1 - \delta\ ,
\end{align}
where $\P_{\bmu}$ denotes the probability measure induced by the interaction of the BAI algorithm with the bandit instance $\nu_{\bmu}$.
\end{definition}
\noindent For analyzing the sample complexity, we leverage a setting-specific notion of {\em problem complexity} defined next. The problem complexity characterizes the level of difficulty that an algorithm faces for identifying the best arm with sufficient fidelity. For this purpose, we define $\mathscr{W}(\beta)$ as the following set of $K$-dimensional probability simplexes 
\begin{align}\label{eq:W}
    \mathscr{W}(\beta)\triangleq \left\{\bomega\triangleq[\omega_1,\dots,\omega_K]\in[0,1]^K\; :\; \omega_{a^\star}=\beta,\;\sum\limits_{i\in[K]}\omega_i = 1 \right\}\ .
\end{align}

\begin{definition}[Problem Complexity] For a given bandit model $\nu_{\bmu}$, the problem complexity is defined as 
\begin{align}\label{eq:PC}
    \Gamma_{\bmu}(\beta) \triangleq \max\limits_{\bomega \in \mathscr{W}(\beta)}\min\limits_{i\neq a^\star}\; \left(\beta\; d_{\sf KL}(\mu_{a^\star}\|\mu_{a^\star,i}(\bomega)) + \omega_i\; d_{\sf KL}(\mu_i\|\mu_{a^\star,i}(\bomega))\right)\ ,
\end{align}
where we have defined
\begin{align}
    \mu_{i,j}(\bomega)\triangleq \frac{\mu_i\;\omega_i + \mu_j\;\omega_j}{\omega_i + \omega_j}\ .
\end{align}
Accordingly, we define the optimal sampling proportions as
\begin{align}\label{eq:omega}
    \bomega^\star(\beta) \triangleq \argmax\limits_{\bomega \in \mathscr{W}(\beta)}\min\limits_{i\neq a^\star}\; \left(\beta\; d_{\sf KL}(\mu_{a^\star}\|\mu_{a^\star,i}(\bomega)) + \omega_i\; d_{\sf KL}(\mu_i\|\mu_{a^\star,i}(\bomega))\right)\ .
\end{align}
\end{definition}
By leveraging the the notion of problem complexity, and defining $T_{n,i}$ as the number of times that arm $i\in[K]$ is pulled until $n$, we have the following known information-theoretic lower bound on the sample complexity of any $\delta-$PAC BAI algorithm. 
\begin{theorem}[Sample Complexity - Lower bound~\cite{pmlr-v108-shang20a}]\label{lemma:SC_LB}
For any $\delta-$PAC algorithm that satisfies
\begin{align}
\frac{T_{n,a^\star}}{n}\xrightarrow{n\rightarrow\infty}\beta\ ,    \end{align}
we almost surely have
\begin{align}
\label{theorem:LB}
    \liminf\limits_{\delta\rightarrow 0}\;\frac{\E_{\bmu}[\tau]}{\log(1/\delta)} \geq \frac{1}{\Gamma_{\bmu}(\beta)}\ .
\end{align}
\end{theorem}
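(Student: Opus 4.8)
The plan is to establish the bound via the standard change-of-measure (transportation) argument, which converts the sample-complexity question into an optimization over \emph{confusing} bandit instances. First I would invoke the data-processing inequality of Kaufmann--Capp\'e--Garivier: for any instance $\nu_{\boldsymbol{\lambda}}$ and any event $\mathcal{E}\in\mcF_\tau$,
\[
\sum_{i\in[K]}\E_{\bmu}[T_{\tau,i}]\;d_{\sf KL}(\mu_i\|\lambda_i)\;\geq\;\mathrm{kl}\big(\P_{\bmu}(\mathcal{E}),\,\P_{\boldsymbol{\lambda}}(\mathcal{E})\big)\ ,
\]
where $\mathrm{kl}(\cdot,\cdot)$ denotes the binary relative entropy. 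I would apply this to the terminal decision event $\mathcal{E}=\{\hat A_\tau=a^\star\}$ and restrict attention to the \emph{alternative set} $\mathrm{Alt}(\bmu)\triangleq\{\boldsymbol{\lambda}:\argmax_i\lambda_i\neq a^\star\}$, on which the correct answer differs from $a^\star$. Since the algorithm is $\delta$-PAC on every instance, $\P_{\bmu}(\mathcal{E})\geq 1-\delta$ and $\P_{\boldsymbol{\lambda}}(\mathcal{E})\leq\delta$, so monotonicity of $\mathrm{kl}$ gives $\mathrm{kl}(\P_{\bmu}(\mathcal{E}),\P_{\boldsymbol{\lambda}}(\mathcal{E}))\geq \mathrm{kl}(1-\delta,\delta)\geq\log\frac{1}{2.4\delta}$, which is asymptotically $\log(1/\delta)$.

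Next I would normalize by $\E_{\bmu}[\tau]=\sum_{i\in[K]}\E_{\bmu}[T_{\tau,i}]$. Writing $\hat\omega_i\triangleq \E_{\bmu}[T_{\tau,i}]/\E_{\bmu}[\tau]$, the vector $\hat{\bomega}$ lies in the probability simplex, and under the hypothesis $T_{n,a^\star}/n\to\beta$ it satisfies $\hat\omega_{a^\star}\to\beta$, so $\hat{\bomega}\in\mathscr{W}(\beta)$ in the limit. Taking the infimum over $\boldsymbol{\lambda}\in\mathrm{Alt}(\bmu)$ then yields
\[
\E_{\bmu}[\tau]\cdot \inf_{\boldsymbol{\lambda}\in\mathrm{Alt}(\bmu)}\sum_{i\in[K]}\hat\omega_i\,d_{\sf KL}(\mu_i\|\lambda_i)\;\geq\;\log\tfrac{1}{2.4\delta}\ .
\]

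The crux is to evaluate this inner infimum in closed form. Since $\mathrm{Alt}(\bmu)=\bigcup_{j\neq a^\star}\{\boldsymbol{\lambda}:\lambda_j\geq\lambda_{a^\star}\}$, a minimizing alternative perturbs only the two means $\lambda_{a^\star}$ and $\lambda_j$ while leaving $\lambda_i=\mu_i$ for $i\notin\{a^\star,j\}$, so the infimum decomposes into a minimum over challenger arms $j$. Using strict convexity of $d_{\sf KL}(\mu\|\cdot)$ and a first-order (Lagrangian) optimality condition, the optimal confusing mean on the pair $(a^\star,j)$ is the proportion-weighted average $\mu_{a^\star,j}(\hat{\bomega})$, giving
\[
\inf_{\boldsymbol{\lambda}\in\mathrm{Alt}(\bmu)}\sum_{i\in[K]}\hat\omega_i\,d_{\sf KL}(\mu_i\|\lambda_i)=\min_{j\neq a^\star}\Big(\hat\omega_{a^\star}\,d_{\sf KL}(\mu_{a^\star}\|\mu_{a^\star,j}(\hat{\bomega}))+\hat\omega_j\,d_{\sf KL}(\mu_j\|\mu_{a^\star,j}(\hat{\bomega}))\Big)\ .
\]
Since $\hat{\bomega}\in\mathscr{W}(\beta)$ in the limit, the right-hand side is at most its maximum over all $\bomega\in\mathscr{W}(\beta)$, which is exactly $\Gamma_{\bmu}(\beta)$. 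Substituting back gives $\E_{\bmu}[\tau]\,\Gamma_{\bmu}(\beta)\geq\log\frac{1}{2.4\delta}$, and dividing by $\log(1/\delta)$ and letting $\delta\to 0$ yields the claim.

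I expect the main obstacle to be twofold. First, rigorously justifying the closed-form evaluation of the inner infimum: one must verify that the minimizer remains within $\mathscr{P}_b$, that only the two means $(\lambda_{a^\star},\lambda_j)$ move, and that the optimum is attained in the interior of the compact parameter space $\Theta$ (handling boundary cases where it is not). Second, carefully controlling the limiting interchange, namely passing from the realized empirical proportions---whose convergence to a point of $\mathscr{W}(\beta)$ holds only \emph{almost surely} as $n\to\infty$---to the $\delta\to 0$ asymptotics; this interplay is precisely the source of the almost-sure qualifier in the statement, and it is where the argument requires the most care.
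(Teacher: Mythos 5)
The paper does not actually prove this theorem; it is imported from the cited reference, where it is established by exactly the change-of-measure argument you outline (the Kaufmann--Capp\'e--Garivier transportation inequality applied to the event $\{\hat A_\tau=a^\star\}$ over the alternative set, the $\mathrm{kl}(1-\delta,\delta)\geq\log\frac{1}{2.4\delta}$ bound, the closed-form evaluation of the inner infimum at the weighted mean $\mu_{a^\star,j}(\bomega)$, and the constraint $\omega_{a^\star}=\beta$ yielding $\Gamma_{\bmu}(\beta)$). Your reconstruction is sound, and the two delicate points you flag --- attainment of the pairwise infimum inside $\mathscr{P}_b$ and the passage from the almost-sure convergence of $T_{n,a^\star}/n$ to the $\delta\to 0$ behavior of $\E_{\bmu}[T_{\tau,i}]/\E_{\bmu}[\tau]$ --- are precisely where the cited proof spends its effort.
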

The universal lower bound in Theorem~\ref{lemma:SC_LB} provides the minimum number of samples that any $\delta-$PAC BAI algorithm requires asymptotically, provided that $\beta$ fraction of measurement effort is allocated to the best arm. Accordingly, we specify $\beta$-optimality as a measure of sample complexity. 
\begin{definition}[$\beta$-optimality]
A $\delta-$PAC BAI algorithm is called asymptotically $\beta$-optimal if it satisfies
\begin{align}
    \frac{T_{n,a^\star}}{n}\xrightarrow{n\rightarrow\infty}\beta\quad\mbox{\rm almost surely}\ ,\quad \text{and}\;\;\limsup\limits_{\delta\rightarrow 0}\;\frac{\E_{\bmu}[\tau]}{\log(1/\delta)} \leq \frac{1}{\Gamma_{\bmu}}\ ,
\end{align}
where $\E_{\bmu}$ denotes expectation under the measure $\P_{\bmu}$.
\end{definition}

\section{TT-SPRT Algorithm for BAI}\label{sec:algorithm}

\noindent\textbf{A Hypothesis Testing Framework.} We pose the BAI problem as a collection of binary \emph{composite} hypothesis testing problems as follows. For each pair of distinct arms $(i,j)\in[K]^2$, define the following binary composite hypothesis, testing which discerns which of the two arms $i$ and $j$ has a larger mean value:
\begin{align}\label{eq:test}
    \mcH_{i,j} : \mu_i > \mu_j \ .
\end{align}
We highlight that unlike in sequential hypothesis testing that aims to solve problem~\eqref{eq:test}, our objective is \emph{not} forming decisions for all these ${K \choose 2}$ different hypotheses. Instead, we use them to form relevant statistical measures that capture the likelihood of different hypotheses. We then use these measures to design the arm selection and stopping rules. Specifically, we form generalized likelihood ratios as the key ingredient for our decision rules. To this end, at time $n$ and  corresponding to each arm pair $(i,j)$ we define the generalized log-likelihood ratio (GLLR) 
\begin{align}\label{eq:LLR}
    \Lambda_n(i,j) \triangleq \log\;  \frac{\sup_{\bmu}\P_{\bmu}(\mcX_n\;|\; \mcH_{i,j}  \mbox{ is true})}{\sup_{\bmu}\P_{\bmu}(\mcX_n\;|\; \mcH_{j,i} \mbox{ is true})}\ .
\end{align}
It can be readily verified that for the exponential family of bandits, $\Lambda_n(i,j)$ can be simplified to 
\begin{align}\label{eq:LLR1}
    \Lambda_n(i,j) = \log \; \frac{\max\limits_{\mu_i > \mu_j}\;  \prod\limits_{x_i\in\mcX_{n}^i}\nu (x_i\med \mu_i)\prod\limits_{x_j\in\mcX_{n}^j}\nu(x_j\med \mu_j)}{\max\limits_{\mu_j>\mu_i}\;\prod\limits_{x_i\in\mcX_{n}^i}\nu (x_i\med \mu_i)\prod\limits_{x_j\in\mcX_{n}^j}\nu(x_j\med \mu_j)}\ .
\end{align}
It can be readily verified (for instance see~\cite{pmlr-v49-garivier16a}) that the GLLRs in~\eqref{eq:LLR1} have simple closed form, specified in the next lemma. To specify the closed form, we 
define the empirical mean 
\begin{align}
\label{eq:sample_mean}
    \mu_{n,i}\triangleq \frac{1}{T_{n,i}}\sum\limits_{t\in[n]: a_t=i} X_t\ ,
\end{align}
as an empirical estimate of $\mu_i$. Accordingly, define the weighted average terms
\begin{align}
    \mu_{n,i,j}\triangleq \frac{T_{n,i}\mu_{n,i} + T_{n,j}\mu_{n,j}}{T_{n,i}+T_{n,j}}\ , \quad \forall i,j\in[K]\ .
\end{align}
\begin{lemma}[\cite{pmlr-v49-garivier16a}]\label{lemma:GLLR_closedform}
Under the exponential family of distributions, the GLLRs defined in~\eqref{eq:LLR1} have a closed forms given by
\begin{align}\label{eq:exp_LLR}
  \Lambda_n(i,j) & = \mathds{1}_{\{\mu_{n,i}>\mu_{n,j}\}}\cdot \Big[T_{n,i}\; d_{\sf KL}(\mu_{n,i} \| \mu_{n,i,j})  + T_{n,j}\;d_{\sf KL}(\mu_{n,j}\| \mu_{n,i,j}) \Big]  \ .
\end{align}
\end{lemma}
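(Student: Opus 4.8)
The plan is to reduce the generalized likelihood ratio in~\eqref{eq:LLR1} to a difference of two maximized joint log-likelihoods and then carry out the two constrained maximizations explicitly. First I would write the log-likelihood of the rewards from arm $i$, viewed as a function of a candidate mean $m$, in the exponential-family form. Using $\frac{\diff\nu_\theta}{\diff\nu}=\exp(\theta x-b(\theta))$ with $\theta=\dot{b}^{-1}(m)$, the contribution of arm $i$ collapses to $\ell_i(m)\triangleq T_{n,i}\big[\dot{b}^{-1}(m)\,\mu_{n,i}-b(\dot{b}^{-1}(m))\big]$, since the rewards enter only through their empirical average $\mu_{n,i}$ defined in~\eqref{eq:sample_mean}. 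Because $b$ is convex, $m\mapsto\ell_i(m)$ is concave with a stationary point exactly at $m=\mu_{n,i}$, so the unconstrained maximizer of $\ell_i$ is the empirical mean $\mu_{n,i}$. The joint likelihood of the two arms factorizes, so the numerator and denominator of~\eqref{eq:LLR1} are the maxima of $\ell_i(m_i)+\ell_j(m_j)$ over the open half-planes $\{m_i>m_j\}$ and $\{m_j>m_i\}$, respectively.

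Next I would resolve the two constrained maximizations by a case split on the sign of $\mu_{n,i}-\mu_{n,j}$. In the case $\mu_{n,i}>\mu_{n,j}$, the unconstrained optimizer $(\mu_{n,i},\mu_{n,j})$ already lies in $\{m_i>m_j\}$, so the numerator equals the free maximum $\ell_i(\mu_{n,i})+\ell_j(\mu_{n,j})$; the denominator's constraint is violated by the free optimum, so its maximum migrates to the boundary $\{m_i=m_j=m\}$. On this boundary the objective becomes the log-likelihood of a single exponential-family parameter fed with all $T_{n,i}+T_{n,j}$ samples, whose MLE is the pooled empirical mean; by the same concavity argument this equals exactly the weighted average $\mu_{n,i,j}$. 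The case $\mu_{n,i}<\mu_{n,j}$ is the mirror image with the roles of $i$ and $j$ interchanged, and together with the tie case $\mu_{n,i}=\mu_{n,j}$ it accounts for the indicator $\mathds{1}_{\{\mu_{n,i}>\mu_{n,j}\}}$ in~\eqref{eq:exp_LLR}.

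Finally I would subtract the two maximized values. In the case $\mu_{n,i}>\mu_{n,j}$ this yields $[\ell_i(\mu_{n,i})-\ell_i(\mu_{n,i,j})]+[\ell_j(\mu_{n,j})-\ell_j(\mu_{n,i,j})]$, and it remains to identify each bracket with a Kullback--Leibler term. Substituting $\theta=\dot{b}^{-1}(\cdot)$ into $\ell_i(\mu_{n,i})-\ell_i(\mu_{n,i,j})$ and comparing with the closed form of $d_{\sf KL}$ in~\eqref{eq:KL} shows that this difference equals precisely $T_{n,i}\,d_{\sf KL}(\mu_{n,i}\|\mu_{n,i,j})$, and symmetrically for arm $j$; assembling the two terms gives~\eqref{eq:exp_LLR}. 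I expect the main obstacle to be the constrained optimization step rather than the algebra: one must argue carefully that when the free optimum is infeasible the maximum moves to the diagonal $\{m_i=m_j\}$ (using concavity of the separable objective together with the monotonicity of $\dot{b}$), and then verify that the pooled stationary point is the weighted mean $\mu_{n,i,j}$. The \textsc{KL} identification is afterwards a direct substitution into~\eqref{eq:KL}.
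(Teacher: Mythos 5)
The paper offers no proof of this lemma --- it is stated as a known result and attributed to~\cite{pmlr-v49-garivier16a} --- so there is nothing internal to compare against; your derivation is, in substance, the standard one from that reference, and it is correct where it matters. The reduction of each arm's log-likelihood to $\ell_i(m)=T_{n,i}[\dot b^{-1}(m)\mu_{n,i}-b(\dot b^{-1}(m))]$, the observation that the constrained maximum migrates to the diagonal when the free optimizer $(\mu_{n,i},\mu_{n,j})$ is infeasible, the identification of the pooled MLE with $\mu_{n,i,j}$, and the final matching of $\ell_i(\mu_{n,i})-\ell_i(\mu_{n,i,j})$ with $T_{n,i}\,d_{\sf KL}(\mu_{n,i}\|\mu_{n,i,j})$ via~\eqref{eq:KL} are all sound. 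Two refinements are worth making explicit. First, $\ell_i$ is concave in the natural parameter $\theta=\dot b^{-1}(m)$ but only unimodal in $m$ (its second derivative in $m$ involves the derivative of the variance function and need not be negative); you should therefore run the half-space argument in $\theta$-coordinates, where the constraint $m_i\ge m_j$ becomes $\theta_i\ge\theta_j$ by monotonicity of $\dot b$ and the objective is genuinely concave and separable --- you gesture at this, but it is the step that actually closes the argument. Second, your claim that the mirror case ``accounts for the indicator'' is not quite right: when $\mu_{n,i}<\mu_{n,j}$ the literal definition~\eqref{eq:LLR1} evaluates to the \emph{negative} of the bracketed quantity (numerator pinned to the diagonal, denominator free), not to zero, so~\eqref{eq:exp_LLR} agrees with~\eqref{eq:LLR1} only on the event $\{\mu_{n,i}\ge\mu_{n,j}\}$. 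This discrepancy is in the paper's statement rather than in your argument, and it is harmless for the algorithm, which only ever evaluates $\Lambda_n(a_n^1,j)$ with $\mu_{n,a_n^1}\ge\mu_{n,j}$; still, a careful write-up should either restrict the identity to that event or state the antisymmetric form $\Lambda_n(i,j)=-\Lambda_n(j,i)$.
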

\noindent We note that in the Gaussian setting, the closed forms for the GLLRs in (\ref{eq:exp_LLR}) simplify to
\begin{align}\label{eq:gaussian_LLR}
    \Lambda_n(i,j)= \frac{1}{2\sigma^2}\cdot \frac{(\mu_{n,i}-\mu_{n,j})^2}{{T_{n,i}^{-1}}+{T_{n,j}^{-1}}}\cdot \mathds{1}_{\{\mu_{n,i}>\mu_{n,j}\}}\ .
\end{align}
Based on these definitions, the detailed steps of the TT-SPRT algorithm are specified in Algorithm~\ref{algorithm:TT_SPRT}. The decision rules involved are dicussed next. \vspace{.1 in}


		\begin{algorithm}[h]
		\caption{Top-Two SPRT (TT-SPRT)}
		\label{algorithm:TT_SPRT}
		
			\textbf{Input:} $\beta$
			
			\textbf{Initialize:} $n=0$, $\mcI_n = [K]$, $\mu_{n,i} = 0\;\forall\;i\in[K]$, $\Lambda(a_n^1,a_n^2)=0$, $c_{n,\delta} = 0$\\
			\While{$\Lambda(a_n^1,a_n^2) \leq c_{n,\delta}$}{
			    $n\leftarrow n + 1$\\
			    Sample $D_n\sim {\sf Bern}(\beta)$\\
			    Play an arm $a_{n}$ specified by~(\ref{eq:sampling rule_B}) or~(\ref{eq:sampling rule}) and obtain reward $X_n$\\
			    Update $\mu_{n,a_{n}}$ using~(\ref{eq:sample_mean})\\
			    $a^1_n \leftarrow \argmax\limits_{i\in[K]}\mu_{n,i}$\\
			    Compute $\Lambda_n(a_n,i)$ for every $i\in[K]\setminus\{a_n^1\}$ using~(\ref{eq:exp_LLR})\\
			    $a_n^2 \leftarrow \argmin\limits_{i\in[K]\setminus\{a_n^1\}} \Lambda_n(a_n^1,i)$\\
			    Update $c_{n,\delta}$ using~(\ref{eq:stop_exp}) or~(\ref{eq:stop_G}) \\
                Update $\mcI_n$ using~\eqref{eq:under-sampled}
			}
			\textbf{Output:} Top arm $a_n^1$
	\end{algorithm}

\noindent \textbf{Arm Selection Rule.} At each instant $n$, the TT-SPRT identifies a \emph{top} arm as the the arm that has the largest sample mean $\mu_{n,i}$, denoted by
\begin{align}\label{eq:selectionrule1}
    a_n^1 \in  \argmax\limits_{i\in[K]} \; \mu_{n,i}\ .
\end{align}
In addition to the the top arm, we also define the \emph{challenger} arm as the one which is the closest competitor of the \emph{top} arm for being the {\em best} arm. The challenger arm at time $n$ is the arm that minimizes the log-likelihood ratio computed with respect to the top arm $a_n^1$. We denote the \emph{challenger} arm at time $n$ by $a_n^2$ and it is specified by
\begin{align}
    a_n^2\triangleq \argmin\limits_{j\in[K]: \mu_{n,j}<\mu_{n,a_n^1}} \Lambda_n(a_n^1,j)\ .
\end{align}
The TT-SPRT sampling strategy consists of an explicit exploration phase to ensure that each arm is pulled sufficiently often. Specifically, the arm selection strategy works as follows. We say an arm is \emph{under-explored}, if at time $n$ it is selected fewer than $\sqrt{n/K}$ times. Accordingly, define the set of under-explored arms at time $n$ as
\begin{align}
\label{eq:under-sampled}
    \mcI_n\triangleq\left\{i\in[K] : T_{n,i}\leq \left\lceil\sqrt{n/K}\right\rceil\right\}\ .
\end{align}
If $\mcI_n\neq \emptyset$, then TT-SPRT selects the arm in $\mcI_n$ that has been explored the least. Otherwise, it randomizes between the top arm $a_n^1$ and the challenger $a_n^2$ based on a Bernoulli random variable $D_n\sim{\sf Bern}(\beta)$, where $\beta\in(0,1)$ is a tunable parameter. This mechanism is included to ensure sufficient exploration of all arms. Hence, the arm action rule at time instant $n+1$ is given by
\begin{align}\label{eq:sampling rule_B}
	a_{n+1} \triangleq \left\{
	\begin{array}{ll}
	\argmin\limits_{i\in\mcI_n}\;T_{n,i}\ , & \mbox{if} \;\; \mcI_n\neq\emptyset\\
	a_n^1\ , & \mbox{if} \;\; D_n = 1\text{ and }\mcI_n = \emptyset\\
	a_n^2\ ,  & \mbox{if} \;\; D_n = 0\text{ and }\mcI_n = \emptyset\\
	\end{array}\right. \ .
\end{align}
We note that in the Gaussian bandit setting, the explicit form of the GLLR defined in~(\ref{eq:gaussian_LLR}) automatically ensures a sufficient exploration of the same order, i.e., at least $\sqrt{n/K}$ times for each arm. We will show this property in Section~\ref{sec:samplecomplexity} (Lemma~\ref{theorem:explore_gaussian}). However, this is not true in general, and the exploration mechanism is necessary. For example, in the case of Bernoulli bandits, unlike for the Gaussian bandits, the GLLRs for some of the arms, especially the ones with low average reward could be infinite in the early stages. Thus, TT-SPRT without the forced exploration phase might never sample these arms. Hence, in the special case of the Gaussian bandit setting, TT-SPRT does not require the explicit exploration phase, and is only based on randomizing between the top-two arms. This is formalized next. 

\vspace{.1 in}
\noindent\textbf{Arm Selection Rule for Gaussian Bandits.} Based on the choices of the top and challenger arms, at time~$n$, the TT-SPRT sampling strategy selects one of the two arms based on a Bernoulli random variable $D_n\sim{\sf Bern}(\beta)$, where $\beta\in(0,1)$ is a tunable parameter. Specifically, given $\mcF_n$-measurable decisions $a^1_n$ and $a^2_n$ at time $n$, the arm to be sampled at time $n+1$ is specified by the randomized rule
\begin{align}\label{eq:sampling rule}
	a_{n+1} \triangleq \left\{
	\begin{array}{ll}
	a_n^1\ , & \mbox{if} \;\; D_n = 1\\
	a_n^2\ ,  & \mbox{if} \;\; D_n = 0\\
	\end{array}\right. \ .
\end{align}
 
\noindent\textbf{Stopping Rule.} We specify the following thresholding-based stopping criterion, in which we select the threshold such that the algorithm ensures the $\delta-$PAC guarantee:\begin{align}\label{eq:stop}
    \tau\triangleq \inf\Big\{n\in\N : \Lambda_n(a_n^1,a_n^2) > c_{n,\delta}\Big\}\ .
\end{align}
This stopping rule at each time $n$ evaluates the GLLR between the top and challenger arms, i.e., $\Lambda_n(a_n^1,a_n^2)$, and terminates the sampling process as soon as this GLLR exceeds a pre-specified threshold $c_{n,\delta}$. This threshold will be specified in Section~\ref{sec:results}. 
We note that the stopping criterion is different from that of the canonical SPRT's for sequential binary hypothesis testing, which involves two thresholds in order to control the two types of decision errors. In contrast to hypothesis testing, for BAI we have only one error event $\P_{\bmu}(\hat A_\tau\neq a^\star)$ to control.

\section{Main Results}
\label{sec:results}
In this section, we establish the TT-SPRT algorithm's optimality for the general exponential family in Section~\ref{sec:performance exponential}. Subsequently, in Section~\ref{sec:samplecomplexity}, we specialize them to the Gaussian setting. The results in the Gaussian setting have two main distinctions from the results for the general exponential family. First, we show that the exploration mechanism for the Gaussian setting is not necessary. Secondly, we provide a less stringent stopping rule, according to which the sampling process requires fewer samples. 
Based on these specialized rules, we recover the optimality guarantees that were also established in~\cite{TTEI,pmlr-v108-shang20a}. Even though the TT-SPRT achieves the optimality guarantees similar to those in~\cite{TTEI,pmlr-v108-shang20a}, it has a major computational advantage, which we discuss in Section~\ref{sec:TTTS}.

\subsection{Exponential Family of Bandits}
\label{sec:performance exponential}
Our main technical results are on the optimality of the sample complexity. Specifically, we demonstrate that the sample complexity of the TT-SPRT algorithm coincides with the known information-theoretic lower bounds on the sample complexity of the algorithms that are $\delta-$PAC. Hence, as the first step, and for completeness, we provide the results that establish that TT-SPRT algorithm is $\delta-$PAC. An algorithm being $\delta-$PAC, generally, depends only on its design of the terminal decision rule for identifying the best arm and is independent of the arm selection strategy. For this purpose, we leverage an existing result for the exponential family~\cite{Kaufmann_JMLR}, which demonstrates that the combination of any arm selection strategy, the stopping rule specified in~(\ref{eq:stop}), and a proper choices of the threshold $c_{n,\delta}$ ensures a $\delta-$PAC guarantee. For completeness, the result is provided in the following theorem.
{\begin{theorem}[$\delta-$PAC -- Exponential]
\label{theorem:PAC_bern}
For the exponential family, the stopping rule in~(23) with the choice of the threshold
\begin{align}
\label{eq:stop_exp}
    c_{n,\delta}\triangleq 2\mcC_{\rm exp}\ \left( \log\frac{K-1}{\delta} \right ) + 6\log\left( \log\frac{n}{2} + 1\right)\ ,
\end{align}
coupled with any arm selection strategy is $\delta-$PAC, where the function $\mcC_{\rm exp} : \mathbb{R}_+\mapsto \mathbb{R}_+$ is defined similarly to~\cite{Kaufmann_JMLR}, and satisfies $\mcC_{\rm exp}(x) \simeq x + 4\log(1+x + \sqrt{2x})$.
\end{theorem}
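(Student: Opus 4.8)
The plan is to bound the misidentification probability $\P_{\bmu}\{\tau<+\infty,\ \hat A_\tau \neq a^\star\}$ directly and show it does not exceed $\delta$; since being $\delta-$PAC constrains only the terminal decision, the bound must hold for any arm selection strategy, so I keep the selection rule arbitrary throughout. First I would reduce the error to a union of pairwise events. On the event that the algorithm stops and declares a wrong arm $\hat A_\tau = a_\tau^1 = i \neq a^\star$, the arm $i$ carries the largest empirical mean, so $\mu_{\tau,a^\star}\leq \mu_{\tau,i}$ and (in the generic case of strict inequality, the equality case being atomic and handled separately) $a^\star$ is an admissible challenger for $i$. Because the challenger $a_\tau^2$ is the minimizer of $\Lambda_\tau(i,\cdot)$ over admissible arms, the stopping condition in~\eqref{eq:stop} gives $c_{\tau,\delta}<\Lambda_\tau(i,a_\tau^2)\leq \Lambda_\tau(i,a^\star)$. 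The indicator in~\eqref{eq:exp_LLR} is automatically active whenever $\Lambda_n(i,a^\star)>0$, so the error event is contained in $\bigcup_{i\neq a^\star}\{\exists\, n\in\N : \Lambda_n(i,a^\star)>c_{n,\delta}\}$, and a union bound over the $K-1$ suboptimal arms reduces the task to controlling a single-pair crossing probability.

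The second step is a deterministic inequality that converts each GLLR into a statistic amenable to concentration. Writing the GLLR through its constrained maximum-likelihood (variational) form, whenever $\mu_{n,i}>\mu_{n,a^\star}$ one has
\[
\Lambda_n(i,a^\star)=\inf_{\lambda_i\leq\lambda_{a^\star}}\Big[T_{n,i}\,d_{\sf KL}(\mu_{n,i}\|\lambda_i)+T_{n,a^\star}\,d_{\sf KL}(\mu_{n,a^\star}\|\lambda_{a^\star})\Big]\ ,
\]
the infimum being taken over the alternative region $\{\lambda_{a^\star}\geq\lambda_i\}$. Since the ground truth satisfies $\mu_i\leq\mu_{a^\star}$, the pair $(\mu_i,\mu_{a^\star})$ is feasible for this infimum, which yields $\Lambda_n(i,a^\star)\leq T_{n,i}\,d_{\sf KL}(\mu_{n,i}\|\mu_i)+T_{n,a^\star}\,d_{\sf KL}(\mu_{n,a^\star}\|\mu_{a^\star})$. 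It therefore suffices to show that the sum of the two self-normalized KL processes, evaluated at the true means, stays below $c_{n,\delta}$ uniformly in $n$ with probability at least $1-\delta/(K-1)$.

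For the final step I would invoke the time-uniform deviation inequality for the single-parameter exponential family from~\cite{Kaufmann_JMLR}. Applying their mixture-martingale construction to each arm's reward stream and combining the two streams bounds the probability that $T_{n,i}\,d_{\sf KL}(\mu_{n,i}\|\mu_i)+T_{n,a^\star}\,d_{\sf KL}(\mu_{n,a^\star}\|\mu_{a^\star})$ ever exceeds $2\mcC_{\rm exp}(x)+6\log(\log(n/2)+1)$ by $e^{-x}$: the factor $2$ reflects the combination of the two arms' streams, the function $\mcC_{\rm exp}$ is calibrated so that the associated mixture is a supermartingale, and the $6\log(\log(n/2)+1)$ term is the cost of the peeling argument making the bound hold simultaneously for all $n$. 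Choosing $x=\log\frac{K-1}{\delta}$ reproduces the threshold in~\eqref{eq:stop_exp} and makes each pairwise crossing probability at most $\delta/(K-1)$; summing over the $K-1$ suboptimal arms bounds the total error by $\delta$, which is the claimed $\delta-$PAC guarantee.

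The step I expect to be the main obstacle is the time-uniform deviation inequality invoked last, namely certifying that $2\mcC_{\rm exp}(x)+6\log(\log(n/2)+1)$ is precisely the threshold for which the relevant mixture martingale remains a supermartingale and the peeling over $n$ closes with the stated constants. This is exactly the technical content of~\cite{Kaufmann_JMLR}, so the work here is to verify that the two-arm GLLR fits their hypotheses (compact $\Theta$, single-parameter exponential family) rather than to reprove the martingale bound; by comparison, the reduction via the challenger minimization and the feasibility inequality of the first two steps is routine.
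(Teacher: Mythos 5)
Your proposal is correct and follows essentially the same route as the paper, whose proof consists solely of deferring to \cite{Kaufmann_JMLR}: the reduction of the error event to pairwise GLLR crossings via the challenger's minimality, the variational upper bound $\Lambda_n(i,a^\star)\leq T_{n,i}\,d_{\sf KL}(\mu_{n,i}\|\mu_i)+T_{n,a^\star}\,d_{\sf KL}(\mu_{n,a^\star}\|\mu_{a^\star})$ from feasibility of the true means, and the two-stream time-uniform mixture-martingale deviation bound are exactly the ingredients of the cited Theorem~7. The only cosmetic discrepancy is that the cited bound naturally yields $2\mcC_{\rm exp}\bigl(\tfrac{1}{2}\log\tfrac{K-1}{\delta}\bigr)$ for a pair of arms, so the paper's threshold with argument $\log\tfrac{K-1}{\delta}$ is simply a more conservative (still valid) choice.
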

\begin{proof}
Follows similarly to~\cite[Theorem 7]{Kaufmann_JMLR}.
\end{proof}}
To show that the TT-SPRT algorithm achieves the optimal sample complexity, we provide an upper bound on its sample complexity, which matches the information-theoretic lower bound on the sample complexity of any $\delta-$PAC BAI algorithm. Note that the explicit exploration phase incorporated in TT-SPRT ensures that each arm is explored sufficiently often and that the sample means converge to the true values. Before delineating the sample complexity achieved by the TT-SPRT, we provide the key properties of the TT-SPRT arm selection strategy. The first property pertains to the sufficiency of exploration. 
\begin{lemma}[Sufficient Exploration -- Exponential]
\label{theorem:explore_exp}
Under the TT-SPRT sampling strategy~(\ref{eq:sampling rule_B}), for all $i\in[K]$ and for any $n\in\N$, we have $T_{n,i}\geq \sqrt{n/K}-1$. 
\end{lemma}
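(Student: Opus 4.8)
The plan is to reduce the claimed bound to a single combinatorial inequality about the pull counts and then prove that inequality by a counting argument driven by the forced-exploration branch of~(\ref{eq:sampling rule_B}). Writing $m\triangleq \min_{j\in[K]}T_{n,j}$, the statement ``$T_{n,i}\ge\sqrt{n/K}-1$ for every $i$'' is equivalent to $m\ge\sqrt{n/K}-1$, i.e. to the single inequality $n\le K(m+1)^2$. I would establish this last inequality and then read off the lemma.

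To prove $n\le K(m+1)^2$, I classify each of the $n$ rounds by how many samples the chosen arm already had. Call the pull at round $r+1$ \emph{shallow} if the selected arm held at most $m$ samples beforehand (i.e. $T_{r,A_{r+1}}\le m$), and \emph{deep} otherwise. Each arm $j$ receives exactly $\min(T_{n,j},m+1)$ shallow pulls, so the number of shallow pulls is $\sum_j\min(T_{n,j},m+1)\le K(m+1)$. It then suffices to confine all deep pulls to the early rounds: I claim a deep pull can occur at round $r+1$ only when $r\le K(m-1)^2$, which caps the deep pulls at $K(m-1)^2+1$ and yields $n\le K(m+1)+K(m-1)^2+1\le K(m+1)^2$ whenever $m\ge1$ (a one-line algebraic check, since the last inequality reduces to $K+1\le 3Km$).

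The heart of the argument, and the step I expect to be the main obstacle, is this claim about deep pulls; it is where the structure of~(\ref{eq:sampling rule_B}) is used. Fix a deep pull at round $r+1$, set $L_r\triangleq\min_j T_{r,j}$ and $h(r)\triangleq\lceil\sqrt{r/K}\rceil$, and recall two facts: counts are non-decreasing, so $L_r\le L_n=m$, and $\mcI_r=\{j:T_{r,j}\le h(r)\}$. If round $r+1$ is an exploitation round then $\mcI_r=\emptyset$, so every arm carries more than $h(r)$ samples and $L_r\ge h(r)+1$. If instead it is a forced-exploration round, the selected arm is the least-sampled one in $\mcI_r$, and its being deep means this in-$\mcI_r$ minimum is at least $m+1$; hence every arm in $\mcI_r$ has at least $m+1$ samples while every arm outside $\mcI_r$ has more than $h(r)$, so $L_r\ge\min(m+1,h(r)+1)$. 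In both cases $m\ge L_r$ together with the impossibility of $m\ge m+1$ forces $h(r)\le m-1$, whence $\sqrt{r/K}\le m-1$ and $r\le K(m-1)^2$, exactly as needed.

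Finally I would treat the boundary case $m=0$ on its own, since the algebra above needs $m\ge1$. If some arm has zero samples at time $n$, that arm lies in $\mcI_s$ at every round $s<n$ (its count $0$ never exceeds $h(s)\ge0$), so every round up to $n$ is a forced-exploration round pulling a currently least-sampled arm; this makes the opening pulls a round-robin over the never-sampled arms and forces all $K$ arms to be sampled within the first $K$ rounds. Thus $m=0$ implies $n\le K-1$, for which $\sqrt{n/K}-1<0=m$ holds trivially, completing the argument.
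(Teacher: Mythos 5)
Your argument is correct, and it is worth noting that it is more than the paper itself provides: the paper's ``proof'' of this lemma is a one-line deferral to \cite[Lemma 8]{pmlr-v49-garivier16a}, which treats a forced-exploration rule with a different threshold (of the form $(\sqrt{n}-K/2)_+$) rather than the set $\mcI_n=\{i: T_{n,i}\le\lceil\sqrt{n/K}\rceil\}$ actually used here, so the constant $\sqrt{n/K}-1$ is never verified in the text. Your shallow/deep decomposition does verify it directly: the reduction to $n\le K(m+1)^2$ with $m=\min_j T_{n,j}$ is exact, the count of at most $K(m+1)$ shallow pulls is right, and the key claim that a deep pull at round $r+1$ forces $\lceil\sqrt{r/K}\rceil\le m-1$ (hence $r\le K(m-1)^2$) checks out in both branches of~(\ref{eq:sampling rule_B}), since in the exploitation branch $\mcI_r=\emptyset$ bounds every count below by $h(r)+1$, while in the forced branch a deep selection pushes every arm of $\mcI_r$ to at least $m+1$, and either way $L_r\le m$ rules out the $m+1$ alternative. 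The closing algebra ($K+1\le 3Km$ for $m\ge 1$) and the separate treatment of $m=0$ via the initial round-robin are both sound. What your route buys is a self-contained, rule-specific proof with explicit constants; what the paper's citation buys is brevity at the cost of leaving the adaptation to the $\lceil\sqrt{n/K}\rceil$ threshold implicit.
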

\begin{proof}
{The proof follows similar arguments as in the proof of~\cite[Lemma 8]{pmlr-v49-garivier16a}.}
\end{proof}
\noindent This property ensures that each arm is sampled sufficiently often, such that the empirical estimates converge to the true mean value if the sampling strategy is allowed to continue drawing samples without stopping. The second property pertains to the convergence of the empirical mean estimates to the true mean values. Specifically, we show that there exists a time after which the mean empirical values are within an $\epsilon$-band of their corresponding ground truth values. 
\begin{lemma}[Convergence in Mean -- Exponential]
\label{theorem:mean:convergence:exp}
For any $\epsilon>0$, define $N_\epsilon^\mu$ as 
\begin{align}
    N_\epsilon^\mu \triangleq \inf\{s\in\N \;:\; |\mu_{n,i} - \mu_i | \leq \epsilon \;\; \forall n\geq s,\;\;\forall i\in[K]\}\ .
\end{align}
Under the sampling rule in~\eqref{eq:sampling rule_B} for the exponential family we have $\E_{\bmu}[N_\epsilon^\mu]<+\infty$.
\end{lemma}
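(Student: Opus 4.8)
The plan is to control $\E_{\bmu}[N_\epsilon^\mu]$ through its tail sum and reduce the problem to summability of per-arm, per-time deviation probabilities. First I would write $N_\epsilon^\mu = L+1$, where $L\triangleq\sup\{n\in\N : |\mu_{n,i}-\mu_i|>\epsilon\text{ for some }i\in[K]\}$ is the last time any empirical mean leaves the $\epsilon$-band (with $\sup\emptyset=0$). Since $N_\epsilon^\mu$ is a nonnegative integer-valued random variable, $\E_{\bmu}[N_\epsilon^\mu]=1+\E_{\bmu}[L]=1+\sum_{n\geq1}\P_{\bmu}(L\geq n)$, and $\{L\geq n\}\subseteq\bigcup_{m\geq n}\bigcup_{i\in[K]}\{|\mu_{m,i}-\mu_i|>\epsilon\}$. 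A union bound followed by interchanging the order of summation yields
\begin{align}
    \E_{\bmu}[N_\epsilon^\mu]\;\leq\; 1 + \sum_{m\geq1}\; m\sum_{i\in[K]}\P_{\bmu}\big(|\mu_{m,i}-\mu_i|>\epsilon\big)\ ,
\end{align}
so it suffices to show that $m\sum_{i}\P_{\bmu}(|\mu_{m,i}-\mu_i|>\epsilon)$ is summable in $m$.

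For a fixed arm $i$, let $\bar\mu_{i,t}$ denote the empirical average of the first $t$ rewards drawn from arm $i$. Because the rewards of arm $i$ form an i.i.d.\ sequence from $\nu(\cdot\med\mu_i)$ irrespective of the adaptive sampling rule, $\bar\mu_{i,t}$ is an average of $t$ i.i.d.\ samples and $\mu_{m,i}=\bar\mu_{i,T_{m,i}}$. The key decoupling step separates the randomness of the count $T_{m,i}$ from the concentration of $\bar\mu_{i,\cdot}$. Lemma~\ref{theorem:explore_exp} supplies the \emph{deterministic} lower bound $T_{m,i}\geq\sqrt{m/K}-1$, so writing $g(m)\triangleq\max\{0,\lceil\sqrt{m/K}-1\rceil\}$ and applying a union bound over the admissible values of the count gives
\begin{align}
    \P_{\bmu}\big(|\mu_{m,i}-\mu_i|>\epsilon\big)\;\leq\;\sum_{t\geq g(m)}\P_{\bmu}\big(|\bar\mu_{i,t}-\mu_i|>\epsilon\big)\ .
\end{align}

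The individual terms are controlled by the Chernoff bound for the single-parameter exponential family: since $b$ is smooth on the interior of the compact $\Theta$, there is a uniformly positive rate $c(\epsilon)\triangleq\min_{i\in[K]}\min\{d_{\sf KL}(\mu_i+\epsilon\|\mu_i),\,d_{\sf KL}(\mu_i-\epsilon\|\mu_i)\}>0$ such that $\P_{\bmu}(|\bar\mu_{i,t}-\mu_i|>\epsilon)\leq 2e^{-t\,c(\epsilon)}$ for all $i$ and $t$ (tails corresponding to means outside the achievable range vanish identically and are controlled trivially). Summing the geometric series gives $\P_{\bmu}(|\mu_{m,i}-\mu_i|>\epsilon)\leq\frac{2}{1-e^{-c(\epsilon)}}\,e^{-c(\epsilon)g(m)}$, hence $m\sum_i\P_{\bmu}(|\mu_{m,i}-\mu_i|>\epsilon)\leq\frac{2K}{1-e^{-c(\epsilon)}}\,m\,e^{-c(\epsilon)g(m)}$. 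Because $g(m)$ grows like $\sqrt{m/K}$, the factor $m\,e^{-c(\epsilon)\sqrt{m/K}}$ decays faster than any polynomial and is summable, while the finitely many early terms with $g(m)=0$ are bounded by $mK$ and contribute only a finite amount, establishing $\E_{\bmu}[N_\epsilon^\mu]<+\infty$.

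The main obstacle is the decoupling step: since $T_{m,i}$ is data-dependent, one cannot apply a single concentration inequality directly, and the validity of the union-over-counts bound hinges on $\bar\mu_{i,t}$ being an average of a fixed i.i.d.\ reward stream—so its fluctuations are independent of \emph{when} the pulls occur—combined with the deterministic guarantee of Lemma~\ref{theorem:explore_exp} that forces $g(m)\to\infty$. A secondary technical point is verifying that the Chernoff rate $c(\epsilon)$ is uniformly positive across arms, which follows from the compactness of $\Theta$ and smoothness of $b$.
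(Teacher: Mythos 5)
Your proposal is correct and follows essentially the same route as the paper's proof: both rely on the deterministic forced-exploration bound $T_{m,i}\geq\sqrt{m/K}-1$ to restrict the pull count, a union bound over the admissible counts combined with the Cram\'er--Chernoff inequality for the exponential family, and the summability of $m\,e^{-c\sqrt{m/K}}$ to conclude $\E_{\bmu}[N_\epsilon^\mu]<+\infty$. Your tail-sum/Fubini formulation is just the order-swapped version of the paper's bound on $\P_{\bmu}(N_\epsilon^\mu>t)$, and your explicit decoupling of the random count from the fixed i.i.d.\ reward stream makes rigorous a step the paper leaves implicit.
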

\begin{proof}
See Appendix~\ref{proof:mean:convergence:exp}.
\end{proof}
{The convergence in mean values for the case of Gaussian bandits has been analyzed in~\cite{TTEI,pmlr-v108-shang20a} under different top-two sampling rules. The analysis mainly relies on a concentration inequality for the convergence of the sample mean to the ground truth. Furthermore, the analysis relies on the assumption that $\Delta_{\min}>0$, despite showing an implicit exploration of the arms due to the sampling rule. In contrast, our analysis for the exponential family makes no such assumption at the cost of an explicit exploration phase. Our analysis leverages the explicit exploration phase and Chernoff's inequality for the convergence in mean. We refer to Appendix~\ref{proof:mean:convergence:exp} for more details.} The third property is that the TT-SPRT ensures that the ratio of the sampling resources spent on arm $i$ up to time $n$, i.e., $T_{n,i}/n$ converges to the optimal sampling proportion $\omega^*_i(\beta)$ when $n$ is sufficiently large. This is formalized in the next theorem.
\begin{lemma}[Convergence to Optimal Proportions]
\label{theorem:proportions}
    Under the sampling rule in~\eqref{eq:sampling rule_B} for the exponential family of bandits, there exists a stochastic time $N_\epsilon^\omega$, $\E_{\bmu}[N_\epsilon^\omega]<+\infty$, such that
    \begin{align}
        \left\lvert \frac{T_{n,i}}{n} - \omega^\star_i(\beta)\right\rvert \leq \epsilon\quad \forall n > N_\epsilon^\omega\ ,\;\forall i \in[K]\ .
    \end{align}
\end{lemma}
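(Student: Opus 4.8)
The plan is to combine the two structural properties already in hand — sufficient exploration (Lemma~\ref{theorem:explore_exp}) and convergence in mean (Lemma~\ref{theorem:mean:convergence:exp}) — with the \emph{equalizing} characterization of the optimal allocation $\bomega^\star(\beta)$, and to show that the greedy challenger rule is self-correcting. Throughout I write $\omega_{n,i}\triangleq T_{n,i}/n$ for the empirical proportions, collect them in the vector $\bomega_n$, and for any simplex point $\bomega$ with $\omega_{a^\star}=\beta$ abbreviate the per-arm transportation cost by
\begin{align}
g_i(\bomega)\triangleq \beta\, d_{\sf KL}\big(\mu_{a^\star}\,\|\,\mu_{a^\star,i}(\bomega)\big) + \omega_i\, d_{\sf KL}\big(\mu_i\,\|\,\mu_{a^\star,i}(\bomega)\big)\ ,\qquad i\neq a^\star\ ,
\end{align}
so that $\Gamma_{\bmu}(\beta)=\max_{\bomega}\min_{i}g_i(\bomega)$. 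First I would fix $\epsilon$ small enough that $2\epsilon<\min_{i\neq a^\star}\Delta_i$ (the best arm is unique, so this gap is strictly positive) and invoke Lemma~\ref{theorem:mean:convergence:exp}: for every $n\geq N^\mu_\epsilon$ the estimates $\mu_{n,i}$ lie within $\epsilon$ of $\mu_i$, which already forces $a^1_n=a^\star$. The max--min problem admits a unique maximizer $\bomega^\star(\beta)$ with strictly positive coordinates that varies continuously with $\bmu$ (a standard fact, see~\cite{pmlr-v49-garivier16a}), and since $\Lambda_n(a^\star,i)/n$ equals $g_i(\bomega_n)$ with $\beta,\bmu$ replaced by $\omega_{n,a^\star}$ and the estimates $\mu_{n,\cdot}$, once the means have converged and $\omega_{n,a^\star}\approx\beta$ (shown next) the challenger rule $a^2_n=\argmin_{i}\Lambda_n(a^\star,i)$ coincides with $\argmin_i g_i(\bomega_n)$ up to a vanishing error. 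Finally, the forced-exploration rounds are negligible: a counting argument shows each arm receives at most $\lceil\sqrt{n/K}\rceil+1$ forced pulls (its count-before-pull values are distinct integers bounded by the threshold), so the total number of forced rounds up to $n$ is $O(\sqrt{Kn})=o(n)$ and cannot perturb any limiting proportion.

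Next I would establish $\omega_{n,a^\star}\to\beta=\omega^\star_{a^\star}(\beta)$. Once $n\geq N^\mu_\epsilon$ the top arm is $a^\star$, so every non-forced round with $D_t=1$ pulls $a^\star$; hence $T_{n,a^\star}$ equals $\sum_{t\leq n}\mathds{1}\{D_t=1\}$ up to an additive $O(N^\mu_\epsilon)+O(\sqrt{Kn})$ correction from the pre-convergence and forced rounds. Because the $D_t\sim{\sf Bern}(\beta)$ are i.i.d., Hoeffding's inequality gives sub-Gaussian control of $|\sum_{t\leq n}\mathds{1}\{D_t=1\}-\beta n|$, so by a maximal-inequality / Borel--Cantelli argument the last time $\omega_{n,a^\star}$ departs from $\beta$ by more than $\epsilon$ has finite expectation.

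The crux is the convergence of the suboptimal proportions, which I would drive by the equalizing property of $\bomega^\star(\beta)$: at the maximizer all suboptimal costs coincide, $g_i(\bomega^\star)=\Gamma_{\bmu}(\beta)$ for every $i\neq a^\star$, because $\partial g_i/\partial\omega_i=d_{\sf KL}(\mu_i\,\|\,\mu_{a^\star,i}(\bomega))>0$ is strictly positive, so any imbalance in the costs could be removed to raise the minimum. Sampling arm $i$ strictly increases $\omega_{n,i}$ and hence $g_i(\bomega_n)$, which makes the $\argmin$ challenger rule self-correcting: if some suboptimal arm had $\omega_{n,i}\geq\omega^\star_i(\beta)+\epsilon$ persistently, monotonicity would force $g_i(\bomega_n)$ above $\Gamma_{\bmu}(\beta)$ by a fixed margin, so $i$ would never be the minimizer and would receive only $o(n)$ forced pulls, driving $\omega_{n,i}\to 0$ — a contradiction; symmetrically, a persistently under-sampled arm is selected as challenger and pushed up. To make this quantitative I would isolate a single random time $N$, built from $N^\mu_\epsilon$ and the Bernoulli-concentration time above (so $\E_{\bmu}[N]<\infty$), after which the dynamics lie in this self-correcting regime, and then prove a \emph{deterministic} tracking bound $\max_{i}|\omega_{n,i}-\omega^\star_i(\beta)|\leq \epsilon + O(N/n)$ for $n\geq N$. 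Taking $n\geq cN/\epsilon$ yields the $\epsilon$-guarantee, so that $N^\omega_\epsilon\leq cN/\epsilon$ inherits finite expectation from $N$.

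The main obstacle is precisely this last quantitative step. The self-correction argument readily yields almost-sure convergence, but converting it into a deterministic $O(N/n)$ tracking rate — uniformly over the coupled dynamics of $\omega_{n,a^\star}$ (which feeds into every $g_i$) and the discrete $\argmin$ selection, while ruling out oscillation or stalling among arms whose costs are momentarily tied — is the delicate part. A careful treatment would control the one-step change of the potential $\max_i|\omega_{n,i}-\omega^\star_i(\beta)|$ (equivalently of $\Gamma_{\bmu}(\beta)-\min_i g_i(\bomega_n)$), using the strict monotonicity of each $g_i$ to secure a definite per-round correction whenever the imbalance exceeds $\epsilon$, and then sum these increments; this is where the bulk of the technical work resides.
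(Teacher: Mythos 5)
Your proposal follows essentially the same route as the paper's proof: first establish $a_n^1=a^\star$ and $T_{n,a^\star}/n\to\beta$ via the Bernoulli randomization, then use the equalizing property $C_i(\beta,\omega^\star_i(\beta))=C_j(\beta,\omega^\star_j(\beta))$ together with the strict positivity of $\partial g_i/\partial\omega_i=d_{\sf KL}(\mu_i\|\mu_{a^\star,i})$ to show that any over-sampled arm has a strictly larger GLLR and hence is never selected as the challenger. The final quantitative step you flag as the main obstacle — converting ``the challenger is eventually never over-sampled'' into a finite-expectation time $N_\epsilon^\omega$ after which all proportions are $\epsilon$-close — is exactly the step the paper closes by invoking an existing counting lemma from~\cite{TTEI} (restated as Lemma~\ref{lemma:2_np}), so your outline is complete modulo that citation.
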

\begin{proof}
See Appendix~\ref{proof:exp_allocation}.
\end{proof}
\noindent {Convergence in the sampling proportions to the $\beta$-optimal allocation is a key step in the sample complexity analysis of top-two algorithms, as shown in~\cite{TTEI,pmlr-v108-shang20a}. The proof for the Convergence in allocation can be broken down into two steps: 1) convergence of the sampling proportion of the best arm to $\beta$, and 2) convergence of the sampling proportion of the remaining arms to the $\beta$-optimal allocation. The novelty in the analysis of TT-SPRT comes in the second part. Specifically, TT-SPRT uses the GLLR statistic for arm selection, which differs from the statistic in~\cite{TTEI,pmlr-v108-shang20a}. The analysis for Convergence in sampling proportion relies on showing that, eventually, the challenger is always contained in the set of the under-sampled arms. For details, we refer to Appendix~\ref{proof:exp_allocation}.} Finally, we use the notion of $\beta$-optimality, which was first introduced in establishing optimality guarantees for the top-two algorithms for BAI in~\cite{TTEI}, and was adopted later  for establishing the optimality of the TTTS algorithm~\cite{pmlr-v108-shang20a}. Specifically, we show that the TT-SPRT algorithm is $\beta$-optimal, i.e., its sample complexity asymptotically matches the universal lower bound on the average sample complexity provided in Theorem~\ref{lemma:SC_LB}, while satisfying the condition on the sampling proportion allocated to the best arm $a^\star$. By leveraging the three properties shown in Lemma~\ref{theorem:explore_exp}-Lemma~\ref{theorem:proportions}, we provide an upper bound on the average sample complexity of the TT-SPRT algorithm. 
\begin{theorem}[Sample Complexity - Upper bound]
\label{theorem:SC_exp}
The TT-SPRT algorithm is $\delta-$PAC and satisfies
\begin{align}
    \frac{T_{n,a^\star}}{n}\xrightarrow{n\rightarrow\infty}\beta\ .
\end{align}
Furthermore, for the exponential family of distributions, we have 
\begin{align}
\label{eq:SC_bern}\lim\limits_{\delta\rightarrow 0}\;\frac{\E_{\bmu}[\tau]}{\log(1/\delta)} \leq \frac{1}{\Gamma_{\bmu}(\beta)}\ ,
\end{align}
\end{theorem}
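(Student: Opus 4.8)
The plan is to deduce the three assertions from results already in hand: the $\delta$-PAC property from Theorem~\ref{theorem:PAC_bern}, the proportion limit from Lemma~\ref{theorem:proportions}, and the sample-complexity bound by showing that the driving statistic $\Lambda_n(a_n^1,a_n^2)$ grows linearly in $n$ at the rate $\Gamma_\bmu(\beta)$ while the threshold $c_{n,\delta}$ grows only logarithmically in $1/\delta$. The first two are essentially immediate. TT-SPRT uses exactly the stopping rule~\eqref{eq:stop} with the threshold~\eqref{eq:stop_exp}, so Theorem~\ref{theorem:PAC_bern} gives $\delta$-PAC for any arm-selection rule. Applying Lemma~\ref{theorem:proportions} with $i=a^\star$ and noting that $\omega^\star_{a^\star}(\beta)=\beta$ by the definition of $\mathscr{W}(\beta)$ in~\eqref{eq:W} yields $T_{n,a^\star}/n\to\beta$ almost surely.

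For the sample-complexity bound, I would fix $\epsilon>0$ and merge the stochastic times of Lemmas~\ref{theorem:mean:convergence:exp} and~\ref{theorem:proportions} into $T_\epsilon\triangleq\max\{N_\epsilon^\mu,N_\epsilon^\omega\}$, which has $\E_\bmu[T_\epsilon]<\infty$. Choosing $\epsilon<\tfrac12\min_{i\neq a^\star}\Delta_i$, which is strictly positive by the uniqueness of the best arm even when $\Delta_{\min}=0$, forces the empirical best arm to coincide with the true one, $a_n^1=a^\star$, for every $n\geq T_\epsilon$; consequently $\Lambda_n(a_n^1,a_n^2)=\min_{i\neq a^\star}\Lambda_n(a^\star,i)$. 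On this event I would substitute $T_{n,i}=n(\omega^\star_i(\beta)+O(\epsilon))$ and $\mu_{n,i}=\mu_i+O(\epsilon)$ into the closed form~\eqref{eq:exp_LLR} and invoke the joint continuity of the map $(\bmu,\bomega)\mapsto\beta\,d_{\sf KL}(\mu_{a^\star}\|\mu_{a^\star,i}(\bomega))+\omega_i\,d_{\sf KL}(\mu_i\|\mu_{a^\star,i}(\bomega))$ on the compact parameter space, together with the max-min characterization~\eqref{eq:PC} of $\Gamma_\bmu(\beta)$. This produces a uniform lower bound of the form $\Lambda_n(a_n^1,a_n^2)\geq n\,(\Gamma_\bmu(\beta)-\xi(\epsilon))$ for all $n\geq T_\epsilon$, with $\xi(\epsilon)\to 0$ as $\epsilon\to 0$.

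Given this linear growth, I would bound $\tau$ by the first $n\geq T_\epsilon$ at which $n(\Gamma_\bmu(\beta)-\xi(\epsilon))$ overtakes $c_{n,\delta}$. Since $c_{n,\delta}$ in~\eqref{eq:stop_exp} depends on $n$ only through the sublinear term $6\log(\log(n/2)+1)$ and is of order $\log(1/\delta)$ in the confidence, inverting this inequality gives a deterministic bound $\tau\leq T_\epsilon+C(\delta,\epsilon)$ with $C(\delta,\epsilon)/\log(1/\delta)\to 1/(\Gamma_\bmu(\beta)-\xi(\epsilon))$ as $\delta\to 0$. Taking expectations, using that $\E_\bmu[T_\epsilon]$ is a finite constant independent of $\delta$, dividing by $\log(1/\delta)$, and letting first $\delta\to 0$ and then $\epsilon\to 0$ yields $\limsup_{\delta\to 0}\E_\bmu[\tau]/\log(1/\delta)\leq 1/\Gamma_\bmu(\beta)$.

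The hard part will be the second step: promoting the qualitative convergence of Lemmas~\ref{theorem:mean:convergence:exp}--\ref{theorem:proportions} to the uniform quantitative lower bound on $\Lambda_n(a_n^1,a_n^2)$. One must control the continuity modulus $\xi$ simultaneously over all $K-1$ challengers and ensure the weighted means $\mu_{n,i,j}$ remain in the interior of $\Theta$ so that the KL terms stay finite and jointly continuous. A secondary difficulty is the implicit $n$-dependence of $c_{n,\delta}$ when inverting the stopping inequality, and verifying that the random offset $T_\epsilon$ is negligible after normalization by $\log(1/\delta)$. This is precisely where the explicit exploration phase and the compactness of $\Theta$ do the work that a $\Delta_{\min}>0$ assumption does in the Gaussian analyses of~\cite{TTEI,pmlr-v108-shang20a}.
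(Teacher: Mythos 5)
Your proposal follows essentially the same route as the paper: the $\delta$-PAC claim and the proportion limit are read off Theorem~\ref{theorem:PAC_bern} and Lemma~\ref{theorem:proportions}, and the sample-complexity bound is obtained by merging the convergence times of Lemmas~\ref{theorem:mean:convergence:exp} and~\ref{theorem:proportions} into a single stochastic time with finite expectation, showing $\Lambda_n(a_n^1,a_n^2)\geq n(\Gamma_{\bmu}(\beta)-\epsilon)$ beyond that time, and inverting the threshold crossing (the paper does this via its Lemma~\ref{Lemma:SC_T} and \cite[Lemma 18]{pmlr-v49-garivier16a}). Your choice of $\epsilon<\tfrac12\min_{i\neq a^\star}\Delta_i$ rather than $\Delta_{\min}/2$ is in fact the more careful one, since the paper allows $\Delta_{\min}=0$ in the exponential-family setting, but this does not change the argument.
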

\begin{proof}
See Appendix~\ref{proof:SC_exp}.
\end{proof}
\noindent Theorem~\ref{theorem:SC_exp} along with the universal lower bound in Theorem~\ref{lemma:SC_LB} establishes the asymptotic $\beta$-optimality of TT-SPRT for the exponential family of bandits.

\subsection{Gaussian Bandits}
\label{sec:samplecomplexity}
{Next, we specialize the properties of the TT-SPRT algorithm to the Gaussian bandit setting.} First, we leverage a tighter stopping condition from~\cite{pmlr-v108-shang20a}, which ensures that in the specific setting of Gaussian bandits, the stopping rule specified in~(\ref{eq:stop}) is $\delta-$PAC. {Specifically, the threshold for the exponential family scales as $O(\log(\log n)^6)$, while the one for the Gaussian setting scales as $O(\log(\log n)^4)$.} 
\begin{theorem}[$\delta-$PAC -- Gaussian]
In the Gaussian bandit setting, the stopping rule in~(\ref{eq:stop}) with the choice of the threshold
\begin{align}
\label{eq:stop_G}
    c_{n,\delta}\triangleq 4\log(4+\log n) + 2g\left(\frac{\log(K-1)-\log\delta)}{2}\right )\ ,
\end{align}
where we have defined $g(x) \triangleq  x + \log x$, coupled with any arm selection strategy is $\delta-$PAC.
\end{theorem}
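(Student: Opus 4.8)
The plan is to control the probability of ever stopping and declaring a suboptimal arm, i.e.\ to show $\P_{\bmu}\{\tau<\infty,\ \hat A_\tau\neq a^\star\}\le \delta$; together with the almost-sure finiteness of $\tau$ under the TT-SPRT sampling rule (which, since the Gaussian model is a special case of the single-parameter exponential family, follows from the sample-complexity analysis underlying Theorem~\ref{theorem:SC_exp}), this delivers the claimed $\delta$-PAC guarantee. The first step is to localize the error event using the stopping and selection rules. Suppose the algorithm stops at a time $n$ with $\hat A_\tau=a_n^1=i\neq a^\star$. By the stopping rule~\eqref{eq:stop}, $\Lambda_n(a_n^1,a_n^2)>c_{n,\delta}$, and since the challenger minimizes the GLLR against the top arm over all arms with smaller empirical mean---among which $a^\star$ necessarily lies because $i$ is the empirical top arm---we obtain $\Lambda_n(i,a^\star)\ge \Lambda_n(i,a_n^2)>c_{n,\delta}$ together with $\mu_{n,i}>\mu_{n,a^\star}$. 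Hence
\begin{align}
\{\tau<\infty,\ \hat A_\tau\neq a^\star\}\subseteq \bigcup_{i\neq a^\star}\Big\{\exists n\in\N:\ \mu_{n,i}>\mu_{n,a^\star},\ \Lambda_n(i,a^\star)>c_{n,\delta}\Big\}\ ,
\end{align}
and a union bound over the $K-1$ suboptimal arms reduces the problem to bounding, for each fixed $i$, the probability that the two-arm GLLR ever crosses $c_{n,\delta}$ in the ``wrong'' direction.

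For a fixed $i\neq a^\star$, I would substitute the Gaussian closed form~\eqref{eq:gaussian_LLR} and invoke a time-uniform (anytime-valid) deviation inequality. Since the truth obeys $\mu_i<\mu_{a^\star}$, the event $\{\mu_{n,i}>\mu_{n,a^\star},\ \frac{(\mu_{n,i}-\mu_{n,a^\star})^2}{2\sigma^2(T_{n,i}^{-1}+T_{n,a^\star}^{-1})}>c_{n,\delta}\}$ is a large-deviation event of two independent empirical processes. The key ingredient is the Gaussian mixture-martingale bound of Kaufmann--Koolen used in~\cite{pmlr-v108-shang20a}: for every $x\ge 0$, the probability that this self-normalized statistic ever exceeds $4\log(4+\log n)+2g(x/2)$ is at most $e^{-x}$, with $g(u)=u+\log u$. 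Setting $x=\log\frac{K-1}{\delta}$, the argument $x/2=\tfrac12(\log(K-1)-\log\delta)$ exactly matches the argument of $g$ in~\eqref{eq:stop_G}, so each term is bounded by $\delta/(K-1)$; summing over $i$ yields $\P_{\bmu}\{\tau<\infty,\ \hat A_\tau\neq a^\star\}\le\delta$. The factor $2$ and the halved budget $x/2$ reflect splitting the deviation across arms $i$ and $a^\star$, while $4\log(4+\log n)$ is the correction converting a fixed-$n$ Chernoff bound into one uniform over all $n$. Crucially, the bound uses only the adaptedness of the counts $T_{n,i},T_{n,a^\star}$ and never the sampling rule, which is precisely why the conclusion holds for \emph{any} arm selection strategy.

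The main obstacle is establishing the anytime deviation inequality invoked above. A fixed-$n$ sub-Gaussian bound is elementary, but uniformity over all $n$ while the pull counts evolve adaptively requires the method of mixtures: one builds a nonnegative supermartingale by integrating the exponential likelihood ratio against a prior on the alternative mean gap, applies Ville's maximal inequality, and inverts the resulting tail to read off the $2g(x/2)$ threshold. The delicate points are handling the two-dimensional ``time'' (both $T_{n,i}$ and $T_{n,a^\star}$ grow), controlling the $\log\log$ correction uniformly in $n$ (using $T_{n,a}\le n$), and restricting to the misordered region $\{\mu_{n,i}>\mu_{n,a^\star}\}$. Since this concentration is exactly the sharpened Gaussian result of~\cite{pmlr-v108-shang20a}, the cleanest route---and the one implicit in the statement---is to cite it rather than re-derive the martingale, which also explains why the Gaussian threshold carries the coefficient $4$ in place of the $6$ appearing in the general exponential-family threshold~\eqref{eq:stop_exp}.
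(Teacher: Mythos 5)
Your proposal is correct and follows essentially the same route as the paper, whose entire proof is a citation of \cite[Theorem 2]{pmlr-v108-shang20a}: you correctly localize the error event through the stopping and challenger rules, union-bound over the $K-1$ suboptimal arms, and invoke the same time-uniform Gaussian mixture-martingale deviation inequality, ultimately recommending the citation the paper itself uses. The additional detail you supply (why the argument is sampling-rule-agnostic and where the $4\log(4+\log n)$ and $2g(x/2)$ terms come from) is consistent with that reference and adds nothing contradictory.
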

\begin{proof}
Follows similarly to \cite[Theorem 2]{pmlr-v108-shang20a}.
\end{proof}
Next, we show that the explicit exploration phase is redundant in the Gaussian setting, since the TT-SPRT sampling rule automatically ensures sufficient exploration in this setting. 
\begin{lemma}[Sufficient Exploration -- Gaussian]\label{theorem:explore_gaussian}
The TT-SPRT sampling strategy~(\ref{eq:sampling rule}) ensures that there exists a random variable $N_1$ such that $\E_{\bmu}[N_1]<+\infty$ and for all $i\in[K]$ and for any $n>N_1$, we have $T_{n,i}\geq\sqrt{n/K}$ almost surely. 
\end{lemma}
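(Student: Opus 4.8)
The plan is to show that the under-sampling event becomes so unlikely that the first time $N_1$ after which every arm obeys $T_{n,i}\geq \sqrt{n/K}$ has finite expectation. Writing $\E_{\bmu}[N_1]=\sum_{s\geq 1}\P_{\bmu}(N_1\geq s)$ and bounding $\P_{\bmu}(N_1\geq s)$ by a union bound over the later times $n\geq s-1$ at which the exploration property can still fail, it suffices to prove that the bad event
\begin{align}
	E_n\triangleq \Big\{\exists\, i\in[K]\;:\; T_{n,i} < \sqrt{n/K}\Big\}
\end{align}
satisfies $\sum_{n} n\,\P_{\bmu}(E_n)<+\infty$. The mechanism driving this decay is that the Gaussian GLLR in~\eqref{eq:gaussian_LLR} automatically penalizes under-sampled arms, so a starved arm is forced to become the challenger $a_n^2$ and is then pulled with probability at least $1-\beta$ through the randomization $D_n\sim{\sf Bern}(\beta)$.

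First I would record the deterministic bound that for any arm $j$,
\begin{align}
	\Lambda_n(a_n^1,j)\;\leq\; \frac{1}{2\sigma^2}\, T_{n,j}\,\big(\mu_{n,a_n^1}-\mu_{n,j}\big)^2\ ,
\end{align}
which follows from $\big(T_{n,a_n^1}^{-1}+T_{n,j}^{-1}\big)^{-1}\leq T_{n,j}$. Combined with the sub-Gaussian concentration of $T_{n,j}(\mu_{n,j}-\mu_j)^2$, this shows that on a high-probability event the GLLR of a severely under-sampled arm stays bounded, whereas for any suboptimal arm $k$ that has already accumulated $\Omega(\sqrt{n})$ pulls the numerator of~\eqref{eq:gaussian_LLR} concentrates around $(\mu_{a^\star}-\mu_k)^2>0$ while the denominator vanishes, so $\Lambda_n(a_n^1,k)\to\infty$. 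Consequently, once the empirical means of the well-sampled arms lie within a small band of the truth, the challenger $a_n^2=\argmin_{j}\Lambda_n(a_n^1,j)$ must be an under-sampled arm whenever one exists. Given this, each round in which an under-sampled arm is the challenger it is pulled whenever $D_n=0$, and a Chernoff bound on the i.i.d.\ draws $D_n$ shows that such an arm accumulates a fraction $\approx(1-\beta)$ of those rounds; since the starvation would have to persist over $\Omega(\sqrt{n})$ rounds, the probability of $E_n$ is exponentially small in $\sqrt{n}$, and $\sum_n n\,e^{-c\sqrt{n}}<\infty$ delivers the claim.

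The main obstacle is the self-referential nature of this argument: the assertion that an under-sampled arm is necessarily the challenger relies on the \emph{other} arms' empirical means being well concentrated, which in turn presupposes that those arms are themselves sufficiently sampled. I would break this circularity with a peeling (induction) argument over the arms ordered by their pull counts. The base case uses the pigeonhole bound $\max_i T_{n,i}\geq n/K\geq\sqrt{n/K}$; assuming the $m$ most-sampled arms are each pulled $\Omega(\sqrt{n})$ times, I would then show that the $(m+1)$-th arm is also pulled at least $\sqrt{n/K}$ times for all large $n$, applying the GLLR-penalization and Chernoff steps above restricted to the already-controlled arms. The quantitative tail control needed at the end---a bound on $\P_{\bmu}(E_n)$ summable against the weight $n$---is supplied by Gaussian (sub-Gaussian) concentration, and the overall structure parallels the sufficient-exploration arguments for the Gaussian top-two algorithms in~\cite{TTEI,pmlr-v108-shang20a}.
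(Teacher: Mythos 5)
You have correctly isolated the mechanism that makes implicit exploration work in the Gaussian case: the factor $\big(T_{n,a_n^1}^{-1}+T_{n,j}^{-1}\big)^{-1}\leq T_{n,j}$ in~\eqref{eq:gaussian_LLR} caps the GLLR of a starved arm at $O(T_{n,j})$, while the GLLR of any pair of well-sampled arms grows proportionally to their pull counts times $\Delta_{\min}^2$, so the challenger must itself be (relatively) starved whenever a starved arm exists. This is exactly the content of the paper's Lemma~\ref{lemma:explore_1_or_2}. Where you diverge is in the probabilistic packaging. The paper never sums tail probabilities: it invokes the time-uniform, almost-sure concentration of Lemma~\ref{lemma:estimator} --- a single random variable $W_1$ with all exponential moments finite that controls $|\mu_{n,i}-\mu_i|$ for every $n$ and every arm, \emph{including arms pulled very few times}, for which the bound degrades only to $2\sigma W_1$ --- turns the challenger-must-be-starved claim into a statement holding deterministically for all $L$ above a random threshold $L_3(W_1)$, and closes with the counting argument of \cite[Lemma 11]{pmlr-v108-shang20a}; the resulting $N_1$ is an explicit function of $W_1$, so $\E_{\bmu}[N_1]<+\infty$ is immediate. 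Because starved arms only need this crude $O(W_1)$ control (their GLLR already carries the small $T_{n,j}$ prefactor), the circularity you worry about does not arise, and no peeling over arms is needed. Your Borel--Cantelli-style route ($\sum_n n\,\P_{\bmu}(E_n)<+\infty$ plus a Chernoff bound on the $D_n$'s) can plausibly be made to work, but it is heavier than what the paper does.

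Two loose ends in your sketch are worth flagging. First, the quantity that must be bounded away from zero for a well-sampled competitor $k$ is $(\mu_{n,a_n^1}-\mu_{n,k})^2$, and at this stage $a_n^1$ need not equal $a^\star$; the relevant gap is the pairwise $\Delta_{\min}=\min_{i\neq j}|\mu_i-\mu_j|$, not $\mu_{a^\star}-\mu_k$, and the lemma silently requires $\Delta_{\min}>0$, an assumption the paper makes explicitly for the Gaussian case. Second, your Chernoff step treats the rounds in which a starved arm is the challenger as if they were fixed in advance; since the challenger identity is $\mcF_n$-measurable and $D_n$ is independent of $\mcF_n$, this is repairable by an Azuma-type argument, but it is not a plain i.i.d.\ Chernoff bound. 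Relatedly, the peeling induction on ``the $(m+1)$-th most-sampled arm'' targets the wrong object: what the mechanism delivers is that \emph{some} starved arm is pulled with probability $1-\beta$ whenever one exists, after which a counting argument over rounds (the role played by \cite[Lemma 11]{pmlr-v108-shang20a}) finishes the proof, rather than an arm-by-arm induction.
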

\begin{proof}
See Appendix~\ref{proof:exploration_Gaussian}.
\end{proof} 
\noindent{Similarly to Lemma~\ref{theorem:mean:convergence:exp}, we show that due to the implicit exploration of the TT-SPRT sampling rule for Gaussian bandits, we have convergence in the empirical mean values to the ground truth.}
\begin{lemma}[Convergence in Mean -- Gaussian]
\label{theorem:mean:convergence:gaussian}
Under the sampling rule in~\eqref{eq:sampling rule} for the Gaussian setting, there exists a stochastic time $M_\epsilon^\mu$, $\E_{\bmu}[M_\epsilon^\mu]<+\infty$, such that for all $n>M_\epsilon^\mu$, we have:
    \begin{align}
        |\mu_{n,i}-\mu_i|\;<\;\epsilon\quad{\rm almost\;surely}\ .
    \end{align}
\end{lemma}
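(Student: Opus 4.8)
The plan is to combine the implicit sufficient-exploration guarantee already established in Lemma~\ref{theorem:explore_gaussian} with a time-uniform Gaussian concentration bound for the per-arm empirical means. The argument parallels the exponential-family case (Lemma~\ref{theorem:mean:convergence:exp}), except that the role of the explicit exploration phase is now played by the implicit exploration of the Gaussian sampling rule~\eqref{eq:sampling rule}. In broad strokes, Lemma~\ref{theorem:explore_gaussian} guarantees that for $n>N_1$ every arm has been pulled at least $\sqrt{n/K}$ times, so $T_{n,i}\to\infty$; once the number of pulls of each arm is large, Gaussian concentration forces $\mu_{n,i}$ into an $\epsilon$-band around $\mu_i$. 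The task is to convert these two ingredients into a single stochastic time with finite expectation.

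First I would fix an arm $i$ and, via a standard tape argument, realize the rewards of arm $i$ as a pre-drawn i.i.d.\ sequence $Y_{i,1},Y_{i,2},\dots\sim\mathcal{N}(\mu_i,\sigma^2)$, so that the running mean $\hat\mu_{i,m}\triangleq\frac{1}{m}\sum_{k=1}^m Y_{i,k}$ satisfies $\mu_{n,i}=\hat\mu_{i,T_{n,i}}$ and its concentration is a statement about the fixed tape, decoupled from the random index $T_{n,i}$. Define the per-arm concentration time
\begin{align}
  T_i^{\sf conc}\triangleq \inf\big\{m_0\in\N : |\hat\mu_{i,m}-\mu_i|\leq \epsilon \;\;\forall m\geq m_0\big\}\ .
\end{align}
A union bound over $m\geq m_0$ together with the Gaussian tail bound $\P_{\bmu}(|\hat\mu_{i,m}-\mu_i|>\epsilon)\leq 2\exp(-m\epsilon^2/2\sigma^2)$ yields
\begin{align}
  \P_{\bmu}\big(T_i^{\sf conc}>m_0\big)\leq \sum_{m\geq m_0}2\exp\Big(-\frac{m\epsilon^2}{2\sigma^2}\Big)\leq C_1\exp(-C_2 m_0)\ ,
\end{align}
for constants $C_1,C_2>0$ depending only on $\epsilon,\sigma$. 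This geometric tail shows $T_i^{\sf conc}<\infty$ almost surely and, more importantly, that $T_i^{\sf conc}$ has finite moments of all orders. Setting $T^{\sf conc}\triangleq\max_{i\in[K]}T_i^{\sf conc}$ and applying a union bound over arms preserves the geometric tail, so $\E_{\bmu}[(T^{\sf conc})^2]<+\infty$.

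Next I would glue the two pieces together. By definition of $T^{\sf conc}$, whenever $T_{n,i}\geq T^{\sf conc}$ for every $i$ we have $|\mu_{n,i}-\mu_i|=|\hat\mu_{i,T_{n,i}}-\mu_i|\leq\epsilon$. By Lemma~\ref{theorem:explore_gaussian}, for $n>N_1$ we have $T_{n,i}\geq\sqrt{n/K}$, and $\sqrt{n/K}\geq T^{\sf conc}$ as soon as $n\geq K(T^{\sf conc})^2$. Hence defining
\begin{align}
  M_\epsilon^\mu\triangleq \max\big\{N_1,\ \lceil K\,(T^{\sf conc})^2\rceil\big\}
\end{align}
ensures that $|\mu_{n,i}-\mu_i|<\epsilon$ for all $i\in[K]$ and all $n>M_\epsilon^\mu$, which is the claimed statement. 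Finiteness of the expectation follows from $\E_{\bmu}[M_\epsilon^\mu]\leq \E_{\bmu}[N_1]+K\,\E_{\bmu}[(T^{\sf conc})^2]+1$, both nontrivial terms being finite by Lemma~\ref{theorem:explore_gaussian} and the moment bound above.

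The main obstacle is the adaptivity of the sampling rule: because $T_{n,i}$ is itself random and correlated with the observed rewards, one cannot simply invoke a fixed-$m$ concentration bound at $m=T_{n,i}$. This is precisely why the proof passes to the time-uniform concentration time $T_i^{\sf conc}$ (a union bound over all $m\geq m_0$), which certifies concentration of $\hat\mu_{i,m}$ simultaneously for every large sample size and is therefore robust to whatever value the adaptive schedule assigns to $T_{n,i}$. The remaining care is purely bookkeeping—ensuring the geometric tail survives the two union bounds (over $m$ and over arms) so that the resulting $M_\epsilon^\mu$ inherits a finite second moment, which is what the translation $\sqrt{n/K}\geq T^{\sf conc}$ consumes.
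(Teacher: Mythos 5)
Your proof is correct, and it reaches the same endpoint as the paper by a slightly different route. Both arguments have the identical skeleton: take the implicit-exploration time $N_1$ from Lemma~\ref{theorem:explore_gaussian} so that $T_{n,i}\geq\sqrt{n/K}$, pair it with a \emph{time-uniform} concentration statement for the per-arm running means (precisely so that the adaptivity of $T_{n,i}$ is harmless), and set $M_\epsilon^\mu$ to the maximum of the two resulting stochastic times. Where you differ is in the concentration ingredient. The paper reuses Lemma~\ref{lemma:estimator} (the envelope $|\mu_{n,i}-\mu_i|\leq \sigma W_1\sqrt{\log(\e+T_{n,i})/(1+T_{n,i})}$ with $\E_{\bmu}[\e^{sW_1}]<+\infty$), substitutes $T_{n,i}\geq\sqrt{n/K}$ into the envelope, and solves for the deterministic-in-$W_1$ time $L_\epsilon^\mu$ at which the envelope drops below $\epsilon$; finiteness of $\E_{\bmu}[M_\epsilon^\mu]$ then rides on the moments of $W_1$. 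You instead build the time-uniform bound from scratch: a reward-tape construction plus a union bound over sample counts gives a per-arm last-exit time $T_i^{\sf conc}$ with geometric tails, and you translate $T_{n,i}\geq\sqrt{n/K}\geq T^{\sf conc}$ into $n\geq K(T^{\sf conc})^2$, so the finite-expectation claim consumes the (finite) second moment of $T^{\sf conc}$. Your version is more self-contained and is essentially the Gaussian analogue of the paper's own proof for the exponential family (Chernoff tail plus summation over sample counts), whereas the paper's version is shorter given that the $W_1$ machinery is already imported for Lemma~\ref{theorem:explore_gaussian}. The only cosmetic slack is that you conclude $|\mu_{n,i}-\mu_i|\leq\epsilon$ rather than the strict inequality in the statement; running your argument at level $\epsilon/2$ fixes this, and the paper's own write-up has the same looseness.
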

\begin{proof}
See Appendix~\ref{proof:mean:convergence:exp}.
\end{proof}

\noindent {We show that the TT-SPRT algorithm in the Gaussian setting is $\beta$-optimal by providing an upper bound on the average sample complexity of the TT-SPRT algorithm in the Gaussian setting. }
\begin{theorem}[Sample Complexity - Upper bound]
\label{theorem:SC_Gaussian}
The TT-SPRT algorithm is $\delta-$PAC and satisfies
\begin{align}
    \frac{T_{n,a^\star}}{n}\xrightarrow{n\rightarrow\infty}\beta\ .
\end{align}
Furthermore, for the Gaussian model, we have
\begin{align}
\label{eq:SC_G}\lim\limits_{\delta\rightarrow 0}\;\frac{\E_{\bmu}[\tau]}{\log(1/\delta)} \leq \frac{1}{\Gamma_{\bmu}(\beta)}\ .
\end{align}
\end{theorem}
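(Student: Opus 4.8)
The plan is to mirror the argument behind Theorem~\ref{theorem:SC_exp}, replacing each ingredient established for the general exponential family by its Gaussian counterpart. The $\delta$-PAC claim is immediate: by the $\delta$-PAC guarantee for the Gaussian setting (the threshold choice in~\eqref{eq:stop_G}), any arm selection rule coupled with the stopping rule~\eqref{eq:stop} is $\delta$-PAC, so in particular TT-SPRT with the sampling rule~\eqref{eq:sampling rule} is. For the convergence $T_{n,a^\star}/n\to\beta$, I would first invoke Lemma~\ref{theorem:mean:convergence:gaussian} to guarantee that the empirical means eventually order the arms correctly, so that $a_n^1=a^\star$ for all large $n$, almost surely. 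On this event the top arm is sampled exactly when $D_n=1$, and since $D_n\sim{\sf Bern}(\beta)$ independently, a strong-law argument gives $T_{n,a^\star}/n\to\beta$ almost surely.

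The core of the proof is the sample-complexity bound. The key intermediate object is a Gaussian analogue of Lemma~\ref{theorem:proportions}: a stochastic time $M_\epsilon^\omega$ with $\E_{\bmu}[M_\epsilon^\omega]<+\infty$ after which $|T_{n,i}/n-\omega^\star_i(\beta)|\le\epsilon$ for every arm. I would establish this exactly as in the exponential case, the difference being that the implicit exploration supplied by Lemma~\ref{theorem:explore_gaussian} (which guarantees $T_{n,i}\ge\sqrt{n/K}$ for all large $n$) now plays the role that the explicit exploration phase played under~\eqref{eq:sampling rule_B}. Combining this with the convergence in mean from Lemma~\ref{theorem:mean:convergence:gaussian} yields, for all $n>\max\{M_\epsilon^\mu,M_\epsilon^\omega\}$, a deterministic lower bound on the decisive statistic: since $a_n^1=a^\star$, by continuity of the Gaussian GLLR~\eqref{eq:gaussian_LLR} in the empirical means and in the sampling proportions, the statistic evaluated at the top arm and its challenger satisfies $\Lambda_n(a_n^1,a_n^2)\ge n\,(\Gamma_{\bmu}(\beta)-\xi(\epsilon))$, where $\xi(\epsilon)\to 0$ as $\epsilon\to 0$.

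With this linear-in-$n$ growth of the stopping statistic in hand, I would bound the stopping time $\tau$ in~\eqref{eq:stop}. Since the Gaussian threshold~\eqref{eq:stop_G} has the form $c_{n,\delta}=2g\big(\tfrac12\log\tfrac{K-1}{\delta}\big)+O(\log\log n)$ with $g(x)=x+\log x$, it grows linearly in $\log(1/\delta)$ but only poly-logarithmically in $n$. Hence the event $n\,(\Gamma_{\bmu}(\beta)-\xi(\epsilon))>c_{n,\delta}$ must occur once $n$ exceeds a threshold of order $\log(1/\delta)/(\Gamma_{\bmu}(\beta)-\xi(\epsilon))$, giving $\tau\le\max\{M_\epsilon^\mu,M_\epsilon^\omega\}+\log(1/\delta)/(\Gamma_{\bmu}(\beta)-\xi(\epsilon))+o(\log(1/\delta))$ almost surely. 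Taking expectations and using the finiteness of $\E_{\bmu}[M_\epsilon^\mu]$ and $\E_{\bmu}[M_\epsilon^\omega]$, dividing by $\log(1/\delta)$, and letting first $\delta\to 0$ and then $\epsilon\to 0$ yields~\eqref{eq:SC_G}. Together with the lower bound in Theorem~\ref{lemma:SC_LB}, this establishes $\beta$-optimality.

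I expect the main obstacle to be the Gaussian analogue of Lemma~\ref{theorem:proportions}, namely showing that the sampling proportions converge to $\bomega^\star(\beta)$ under the rule~\eqref{eq:sampling rule} that lacks an explicit exploration phase. The delicate point, as in the exponential analysis, is arguing that the challenger $a_n^2$ is eventually drawn from the correct under-sampled set, so that the GLLR-minimizing selection does not get trapped allocating excessive effort to a single competitor; this requires a uniform, rather than merely pointwise, control of the Gaussian GLLR~\eqref{eq:gaussian_LLR} near the optimal allocation, and is precisely where the implicit-exploration guarantee of Lemma~\ref{theorem:explore_gaussian} is essential.
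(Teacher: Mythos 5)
Your proposal is correct and follows essentially the same route as the paper: the $\delta$-PAC claim from the Gaussian threshold theorem, convergence in mean and in sampling proportions to define a stochastic time with finite expectation after which $a_n^1=a^\star$ and $\Lambda_n(a_n^1,a_n^2)\geq n(\Gamma_{\bmu}(\beta)-\xi(\epsilon))$, and then inversion against the threshold $c_{n,\delta}$, which grows as $\log(1/\delta)+O(\log\log n)$. The only cosmetic difference is that the paper delegates the final threshold-inversion step to~\cite[Lemma 1]{pmlr-v108-shang20a} instead of redoing the computation of Lemma~\ref{Lemma:SC_T}, and it is terser than you are about re-establishing the convergence-to-proportions result under the exploration-free Gaussian rule~(\ref{eq:sampling rule}) --- precisely the point you correctly single out as the main obstacle, to be handled via the implicit exploration of Lemma~\ref{theorem:explore_gaussian}.
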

\begin{proof}
See Appendix~\ref{proof:SC_exp}.
\end{proof}

\subsection{Challenger Identification in Top-Two Thompson Sampling} \label{sec:TTTS}
The TTTS algorithm~\cite{russo2016}, is a Bayesian algorithm in which the reward mean values are assumed to have the prior distribution $\mcN(0,\kappa^2)$. Based on this prior, at each time $n$ and based on $\mcX_n$, the learner computes a posterior distribution $\Pi_n\in\R^K\to\R$. Specifically, for the average reward realization $\bar\bmu$ and reward realization $\mcX_n = \bx_n$:
\begin{align}\label{eq:Pi}
    \Pi_n(\bar\bmu\;|\; \bx_n) \triangleq \displaystyle\prod\limits_{i=1}^K\frac{1}{\sqrt{2\pi\eta_{n,i}^2}}\exp\left\{-\frac{(\bar\mu_i - \tilde\mu_{n,i})^2}{2\eta_{n,i}^2}\right\}\ ,
\end{align}
where we have defined
\begin{align}\label{eq:TTTS_mean}
    \tilde\mu_{n,i} & \triangleq \frac{1}{T_{n,i}+\sigma^2/\kappa^2}\sum\limits_{x\in\mcX^i_n}x\ , \qquad \mbox{and} \qquad 
    \eta_{n,i}^2  \triangleq \frac{\sigma^2}{T_{n,i}+\sigma^2/\kappa^2}\ .
\end{align}
While the TTTS is devised for the setting with a Gaussian prior distribution for the rewards, the sample complexity analysis for the algorithm holds  
for the asymptotic regime of $\kappa \rightarrow +\infty$. This assumption renders the prior distributions uninformative, and the setting becomes equivalent to that of the non-Bayesian counterpart, i.e., when the means corresponding to each arm is unknown, and we have no prior distribution over the arm means. Thus, the posterior mean corresponding to each arm $i\in[K]$ defined in~(\ref{eq:TTTS_mean}) reduces to that of the sample mean, i.e., $\tilde\mu_{n,i}=\mu_{n,i}$, and the settings for both TTTS as well as TT-SPRT become equivalent. As a result,  $\Pi_n$ denotes the product of the $K$ Gaussian posteriors, $\mcN(\mu_{n,i}, \sigma^2_{n,i})$ for all $i\in[K]$, where we have defined $\sigma_{n,i}^2\triangleq \sigma^2/T_{n,i}$. Let us denote the expectation operator with respect to $\Pi_n$ by $\E_n$.

The arm selection strategy of the TTTS algorithm works as follows. At each time $n$, a random $K$-dimensional sample $\btheta^n\triangleq (\theta^n_1,\cdots \theta^n_K)$ is drawn from the posterior distribution $\Pi_n$. The coordinate with the largest value is defined as the index of the top arm, denoted by $b^1_n\triangleq\argmax_{i\in[K]} \theta^n_i$. In order to find a challenger (the closest competitor to $b^1_n$), the algorithm continues sampling the posterior $\Pi_n$ until a realization from $\Pi_n$ is encountered such that the index of its largest coordinate is distinct from $b^1_n$. This is considered the challenger arm and its index is denoted by $b^2_n$. Encountering a challenger arm requires generating enough samples from $\Pi_n$. As $n$ increases and the posterior distribution $\Pi_n$ points to more confidence about the best arm, the number of samples required to encounter a challenger increases. We denote a sample $s$ generated from $\Pi_n$ by $\btheta^n_s\triangleq (\theta_{s,1}^n,\cdots,\theta_{s,K}^n)$. By design, clearly, $b^2_n\triangleq\argmax_{i\in[K]}\theta^n_{s,i}$, and $b^2_n\neq b^1_n$. Once $b^1_n$ and $b^2_n$ are identified, the TTTS selects one of them based on a Bernoulli random variable parameterized by $\beta\in(0,1)$.  As mentioned earlier, as $n$ increases and $\Pi_n$ converges, the number of samples required for encountering a challenger also increases, and this imposes a computational challenge, especially for large $n$. In the next theorem, we show that the number of samples required for encountering a challenger scales at least exponentially in $\sqrt{n}$. For this purpose, we define
\begin{align}
    T_{{\sf TTTS}}^n\triangleq\inf\{s\in\N \;:\; \exists i\in[K]\; , \; \theta_{s,i}^n>\theta_{s,b^1_n}^n\}\ ,
\end{align}
as the number of posterior samples required for finding a challenger at time $n$.
\begin{theorem}[Challenger's Sample Complexity]\label{theorem:TTTS_samples}
In the TTTS algorithm~\cite{russo2016}, there exists a random variable $N_0$ such that for any $n>N_0$, the average number of posterior samples required in order to find a challenger is lower-bounded as
\begin{align}\label{eq:post_LB}
&\E_n[T_{{\sf TTTS}}^n]\geq\min\limits_{i\in[K]\setminus\{a^\star\}}\;2\exp\left(\sqrt{\frac{n}{K}}\; C_{i,{\sf L}}\right)\ .
\end{align}
Furthermore, for all $n>\max\{N_0,32\sigma^2/9\Delta_{\min}^2\}$, we have
\begin{align}\label{eq:post_UB}
    \E_n[T^n_{\sf TTTS}]\leq&\;\;\max\limits_{i\in[K]\setminus\{a^\star\}} \sqrt{2\pi \e}\exp\left( n\;C_{i,{\sf U}}\right)\ ,
\end{align}
where we have defined
\begin{align*}
    C_{i,{\sf L}} \triangleq  \frac{(\Delta_i - \Delta_{\min}/2)^2}{4\sigma^2}\;\;  \mbox{and} \;\; C_{i,{\sf U}} \triangleq \frac{(\Delta_i+\Delta_{\min}/2)^2}{2\sigma^2}\ .
\end{align*}
\end{theorem}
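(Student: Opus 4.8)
The plan is to recognize the challenger search as a sequence of i.i.d.\ Bernoulli trials and reduce the entire statement to a two-sided estimate of a single collision probability. Conditioned on the top arm $b_n^1$ (fixed by the first posterior draw), every subsequent draw $\btheta^n_s\sim\Pi_n$ is independent, and a challenger is found precisely when its largest coordinate differs from $b_n^1$. Hence $T_{\sf TTTS}^n$ is geometric with success probability $q_n\triangleq\P_n\{\argmax_{j}\theta_j\neq b_n^1\}$, so that $\E_n[T_{\sf TTTS}^n]=1/q_n$, and the theorem is equivalent to sandwiching $q_n$. The first concrete step is to express $q_n$ through pairwise comparisons: because the posterior coordinates are independent with $\theta_j\sim\mcN(\mu_{n,j},\sigma^2/T_{n,j})$, the difference $\theta_i-\theta_{b_n^1}$ is $\mcN(-m_i,v_i^2)$ with $m_i\triangleq\mu_{n,b_n^1}-\mu_{n,i}$ and $v_i^2\triangleq\sigma^2(T_{n,i}^{-1}+T_{n,b_n^1}^{-1})$, whence $\P_n\{\theta_i>\theta_{b_n^1}\}=Q(m_i/v_i)$ for the Gaussian tail $Q(x)\triangleq\P\{Z>x\}$, $Z\sim\mcN(0,1)$.

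Next I would fix the random time $N_0$ so that, for all $n>N_0$, three good events hold: sufficient exploration $T_{n,j}\geq\sqrt{n/K}$ for every $j$, mean convergence $|\mu_{n,j}-\mu_j|\leq\Delta_{\min}/4$ for every $j$, and consequently $b_n^1=a^\star$. These are the TTTS counterparts of Lemmas~\ref{theorem:explore_gaussian} and~\ref{theorem:mean:convergence:gaussian}. The concentration event immediately sandwiches the empirical gap, $\Delta_i-\Delta_{\min}/2\leq m_i\leq\Delta_i+\Delta_{\min}/2$, which is exactly what generates the $(\Delta_i\mp\Delta_{\min}/2)$ inside $C_{i,\sf L}$ and $C_{i,\sf U}$; combining this with the exploration bounds on the $T_{n,j}$'s is what turns the Gaussian exponents into the advertised rates.

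For the lower bound on $\E_n[T_{\sf TTTS}^n]$ I would upper bound $q_n$. A union bound gives $q_n\leq\sum_{i\neq a^\star}Q(m_i/v_i)$, and the Chernoff estimate $Q(x)\leq\tfrac12 e^{-x^2/2}$, together with $m_i\geq\Delta_i-\Delta_{\min}/2$ and $v_i^2\leq 2\sigma^2\sqrt{K/n}$ (from $T_{n,j}\geq\sqrt{n/K}$), forces $m_i^2/(2v_i^2)\geq\sqrt{n/K}\,C_{i,\sf L}$, so that $q_n\leq\tfrac12\sum_{i\neq a^\star}\exp(-\sqrt{n/K}\,C_{i,\sf L})$. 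Inverting, and retaining the dominant (smallest-exponent) term, produces $\E_n[T_{\sf TTTS}^n]\geq\min_{i}2\exp(\sqrt{n/K}\,C_{i,\sf L})$, where the factor $2$ is the reciprocal of the Chernoff constant $\tfrac12$ and the $\min_i$ selects the dominant exponent. For the upper bound I would instead keep a single term, $q_n\geq Q(m_i/v_i)$, and apply a Gaussian tail lower bound with $m_i\leq\Delta_i+\Delta_{\min}/2$ and the crude $v_i^2\geq 2\sigma^2/n$ (from $T_{n,j}\leq n$), which yields $m_i^2/(2v_i^2)\leq nC_{i,\sf U}/2$; the slack between this exponent and the advertised $nC_{i,\sf U}$ absorbs the polynomial prefactor of the tail bound, and the extra hypothesis $n>32\sigma^2/(9\Delta_{\min}^2)$ is precisely what keeps $m_i/v_i$ in the regime where that prefactor collapses into the clean constant $\sqrt{2\pi\e}$.

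The main obstacle I anticipate is twofold. On the technical side, extracting the exact constants $2$ and $\sqrt{2\pi\e}$ rather than loose $\mathrm{poly}(n)$ prefactors is delicate: the Gaussian tail lower bound carries an $O(1/x)$ prefactor with $x=m_i/v_i\to\infty$, so the constant emerges only after optimizing $\sqrt{t}\,e^{-t/2}$ and invoking the threshold on $n$, while the $(K-1)$ factor from the union bound must be argued away through dominance of the minimal exponent (with additional care when several arms share that exponent). More fundamentally, the entire reduction rests on the convergence guarantees defining $N_0$ holding under the \emph{TTTS} posterior-sampling rule; since the excerpt establishes sufficient exploration and mean convergence only for the TT-SPRT rule, the substantive gap is to re-establish these properties for TTTS's own dynamics before $N_0$ can legitimately be invoked.
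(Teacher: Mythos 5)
Your proposal follows essentially the same route as the paper: reduce the challenger search to a geometric random variable over i.i.d.\ posterior draws, sandwich the pairwise collision probability $\P(\theta_i>\theta_{a^\star})$ with Gaussian tail bounds, and convert $T_{n,i}\geq\sqrt{n/K}$ (for the lower bound), $T_{n,i}\leq n$ (for the upper bound), and $|\mu_{n,i}-\mu_i|\leq\Delta_{\min}/4$ into the exponents $C_{i,{\sf L}}$ and $C_{i,{\sf U}}$. The one gap you flag---that sufficient exploration, mean convergence, and $b_n^1=a^\star$ must hold under the TTTS dynamics rather than TT-SPRT's---is closed in the paper simply by citing the corresponding TTTS results of \cite{pmlr-v108-shang20a} (their Lemma 5, Lemma 12, and Section C.2), so your sketch matches the paper's proof in substance.
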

\begin{proof}
See Appendix~\ref{appendix:D}. 
\end{proof}

We observe that the lower bound increases exponentially in $\sqrt{n}$, and thus, diverges for large values of $n$, i.e., when the confidence required on the decision quality is large. Furthermore, note that for a sufficiently small $\delta$, the TTTS stopping time always exceeds $N_0$. Specifically, we show in Appendix~\ref{appendix:D} that there exists $\delta(N_0)>0$ such that for any $\delta\in(0,\delta(N_0)$, the TTTS algorithm almost surely stops after $N_0$ time instants. This implies that for a large enough decision confidence, the TTTS algorithm requires a large number of posterior samples to identify a challenger, which has an exponential growth of the order of at least $\sqrt{n}$.

\section{Numerical Experiments}

In this section, we conduct numerical experiments to compare the performance of TT-SPRT against state-of-the-art algorithms. {Specifically, we provide experiments for empirically depicting the computational difficulty of the TTTS algorithm in identifying a challenger arm. Furthermore, we compare the performance of TT-SPRT against existing BAI strategies under three different bandit models, namely, Gaussian bandits, Bernoulli bandits, and exponential bandits. {While $\beta$ is a tunable parameter of TT-SPRT, and its optimal choice depend on instance-specific parameters, a choice of $\beta=0.5$ exhibits good empirical performance over several bandit instances. Specifically, the average sample complexity with $\beta=0.5$ is at most twice the asymptotically optimal sample complexity, shown in~\cite[Lemma 3]{russo2016}. Unless otherwise stated, we have used $\beta=0.5$ for simulations involving the top-two sampling strategies.}}

\begin{figure*}[t]
	\centering
	\begin{minipage}{0.49\textwidth}
		\centering
		\includegraphics[width=0.99\textwidth]{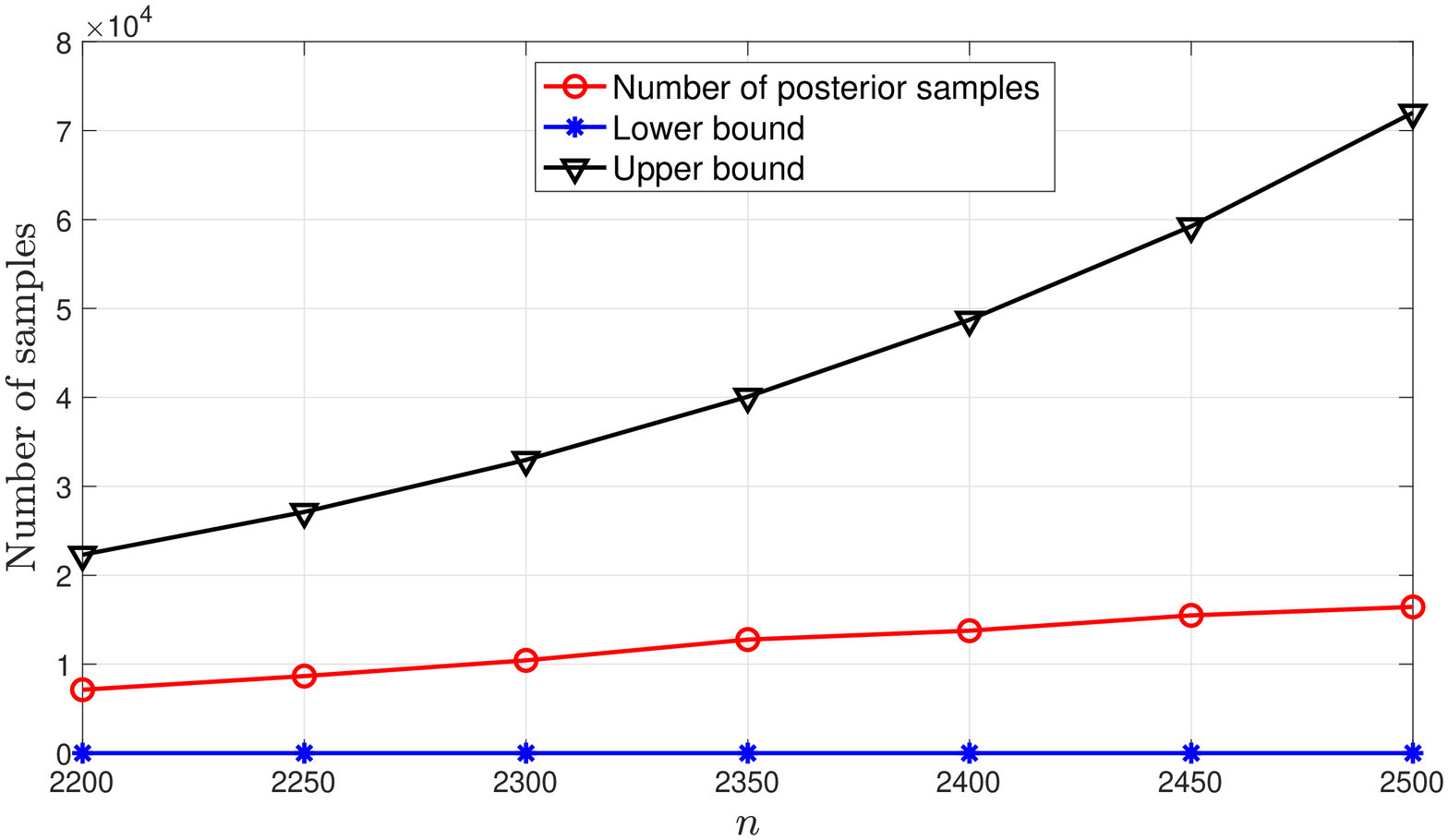} 
		\caption{Average number of posterior samples versus $n$.}
		\label{fig:1}
	\end{minipage}\hfill
	\begin{minipage}{0.49\textwidth}
		\centering
		\includegraphics[width=0.99\textwidth]{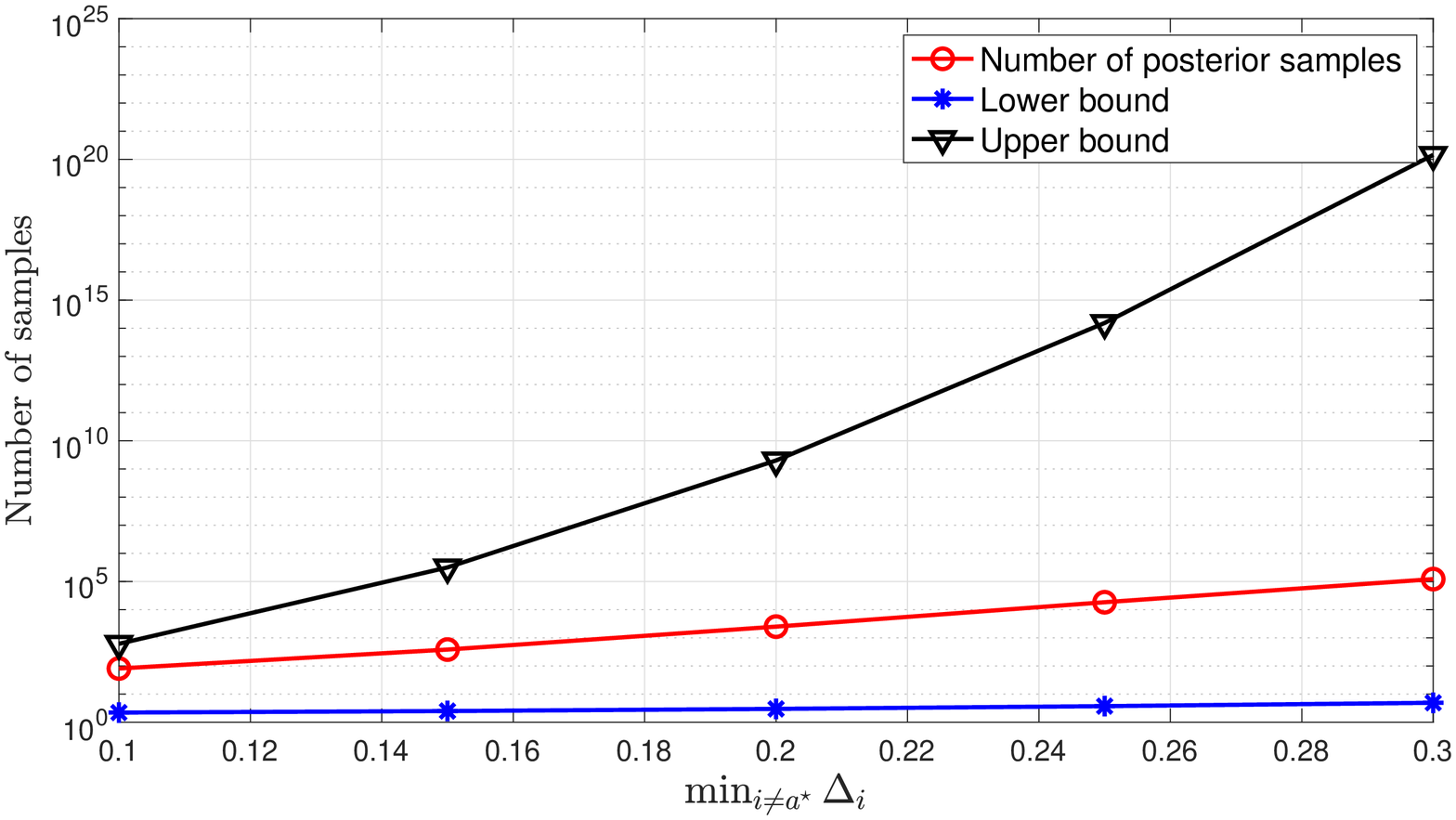} 
		\caption{Average number of posterior samples versus gaps.}
		\label{fig:4}
	\end{minipage}
\end{figure*}

\begin{figure*}[t]
	\centering
	\begin{minipage}{0.49\textwidth}
		\centering
		\includegraphics[width=0.99\textwidth, height = 0.53\textwidth]{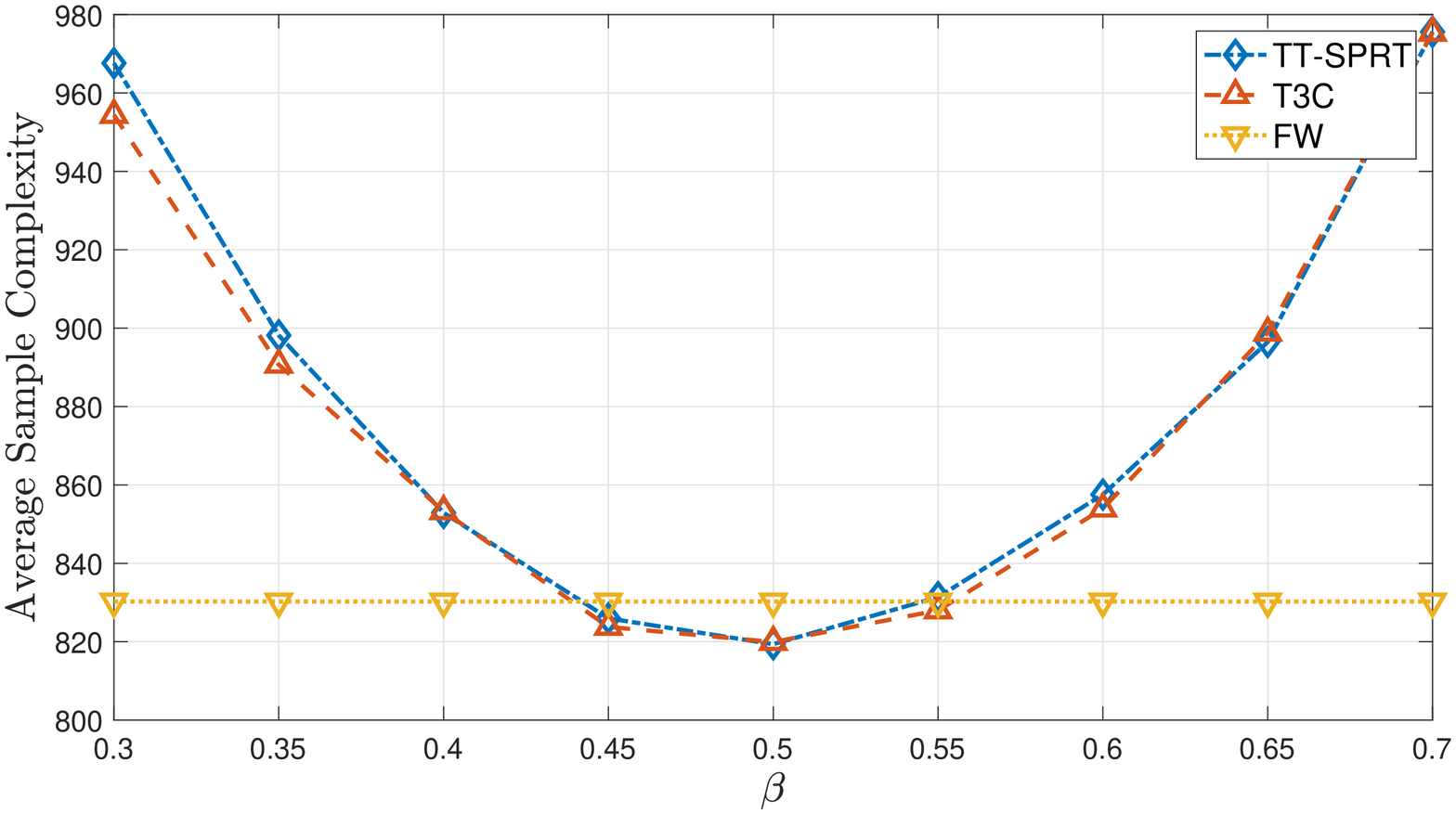} 
		\caption{Sensitivity to $\beta$ (Gaussian): $\bmu = [5, 4.5, 1, 1, 1]$, $\delta = 10^{-15}$}
		\label{fig:g1_1}
	\end{minipage}\hfill
	\begin{minipage}{0.49\textwidth}
		\centering
		\includegraphics[width=0.99\textwidth]{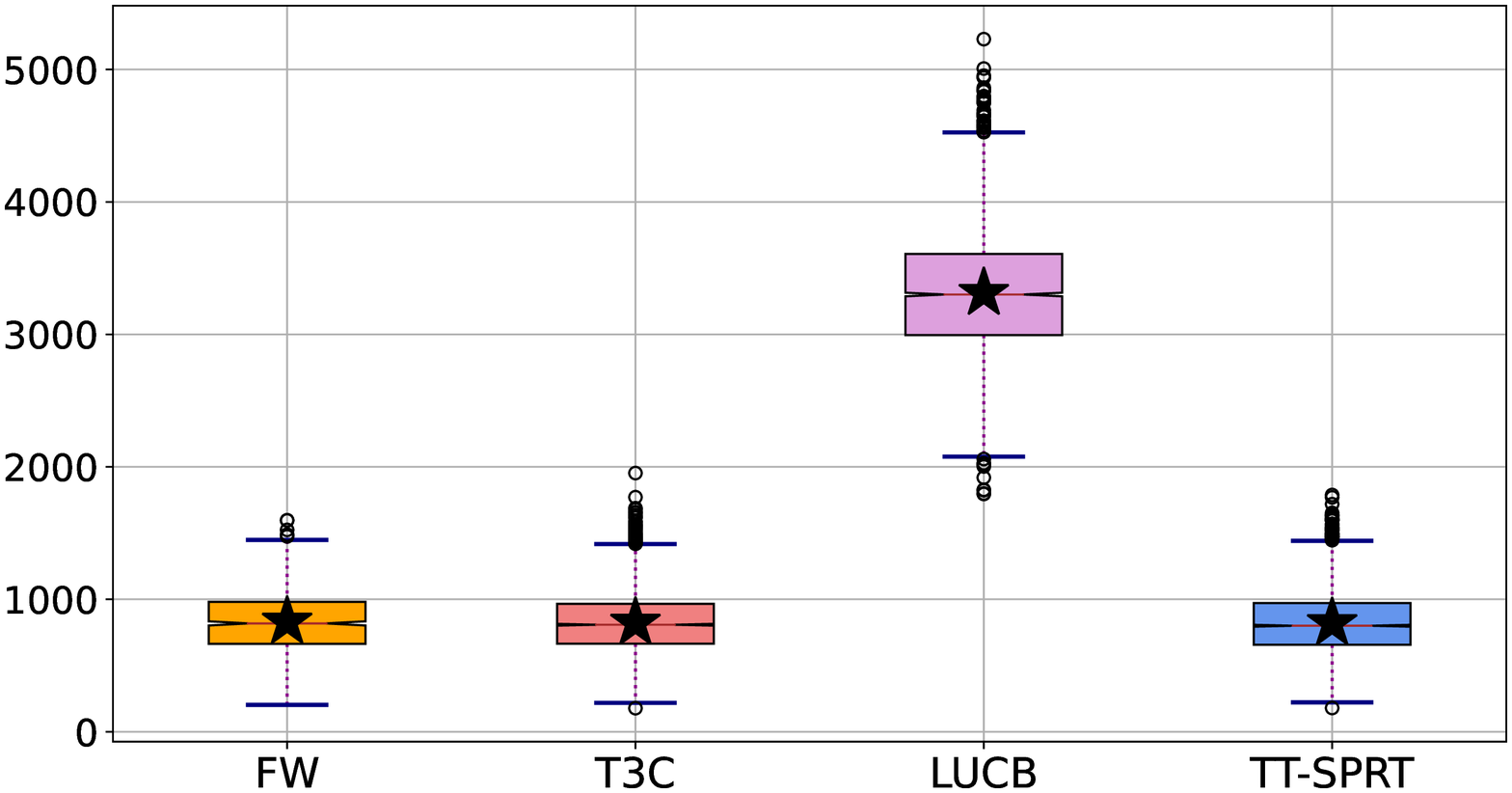} 
		\caption{Sample complexity (Gaussian): $\bmu = [5, 4.5, 1, 1, 1]$, $\delta = 10^{-15}$}
		\label{fig:g1}
	\end{minipage}
\end{figure*}

\subsection{Computational Difficulty of TTTS}

First, to show the computational difficulty of obtaining a challenger in TTTS, in Figure~\ref{fig:1} we evaluate the average number of posterior samples required by TTTS to identify a challenger. As shown analytically in Theorem~\ref{theorem:TTTS_samples}, the expected number of samples required for encountering a challenger scales at least exponentially in $\sqrt{n}$. It is observed from Figure~\ref{fig:1} that as $n$ increases, such average number of posterior samples increases drastically. We also plot the lower bound on the average number of posterior samples obtained in Theorem~\ref{theorem:TTTS_samples}, which matches the scaling behavior of the actual number of samples. Furthermore, Figure~\ref{fig:4} illustrates the scaling behavior of the number of posterior samples against various levels of the minimum gap compared to the best arm, i.e., $\min_{i\neq a^\star}\Delta_i$. For this experiment, we have set $n=500$. It can be readily verified from Theorem~\ref{theorem:TTTS_samples} that as the minimum gap between the means of the best arm and any other arm increases, it becomes more challenging to obtain a challenger from the posterior distribution after it has converged sufficiently. This can be observed in Figure~\ref{fig:4}. Specifically, the number of posterior samples increases at least exponentially at a rate of $\min_{i\neq a^\star}\Delta_i^2$, and it can become extremely large when the gap is large.

\begin{figure*}[t]
	\centering
	\begin{minipage}{0.49\textwidth}
		\centering
		\includegraphics[width=0.99\textwidth, height = 0.53\textwidth]{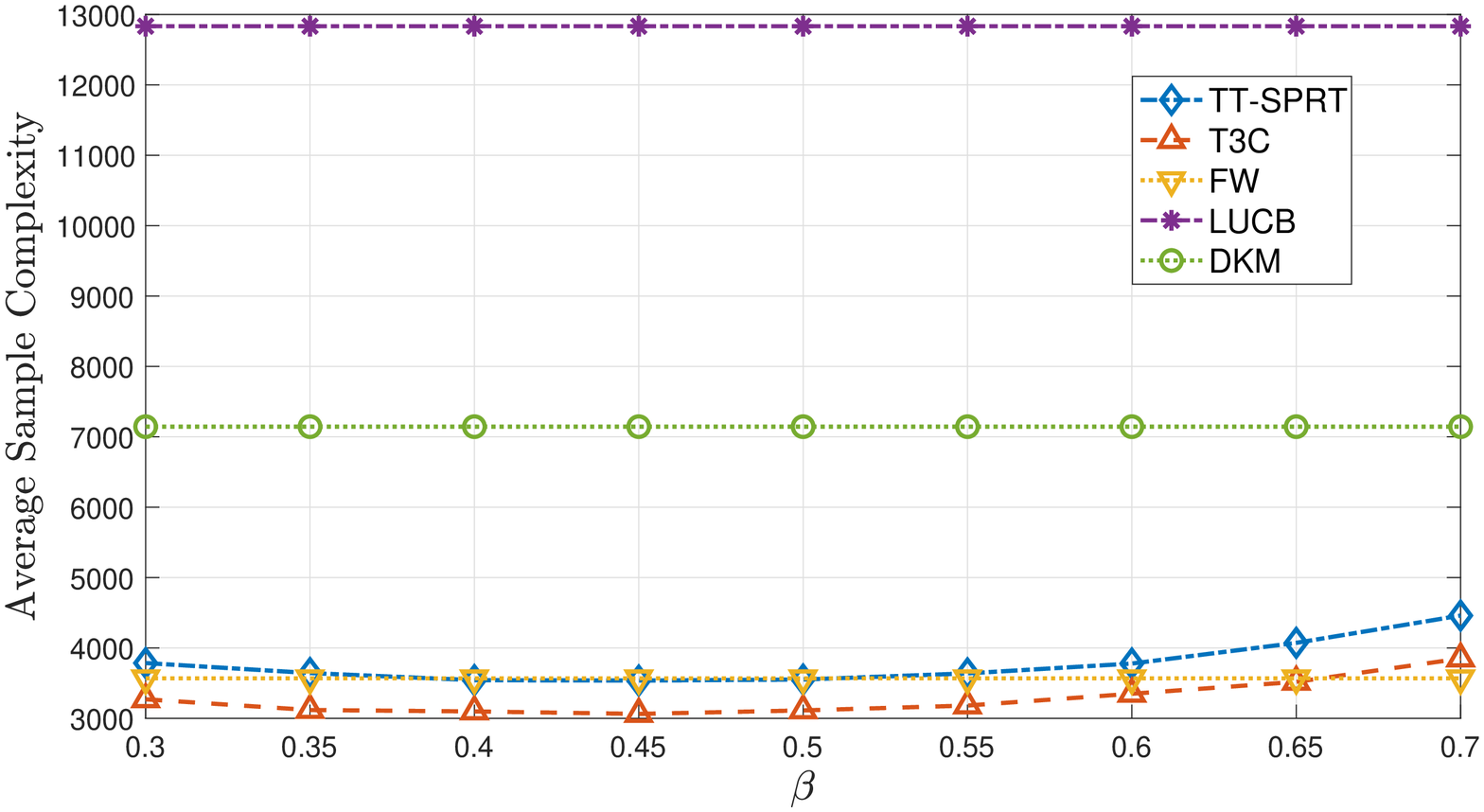} 
		\caption{Sensitivity to $\beta$ (Gaussian): $\bmu = [1, 0.85, 0.8, 0.7]$, $\delta = 0.1$}
		\label{fig:g2_1}
	\end{minipage}\hfill
	\begin{minipage}{0.49\textwidth}
		\centering
		\includegraphics[width=0.99\textwidth]{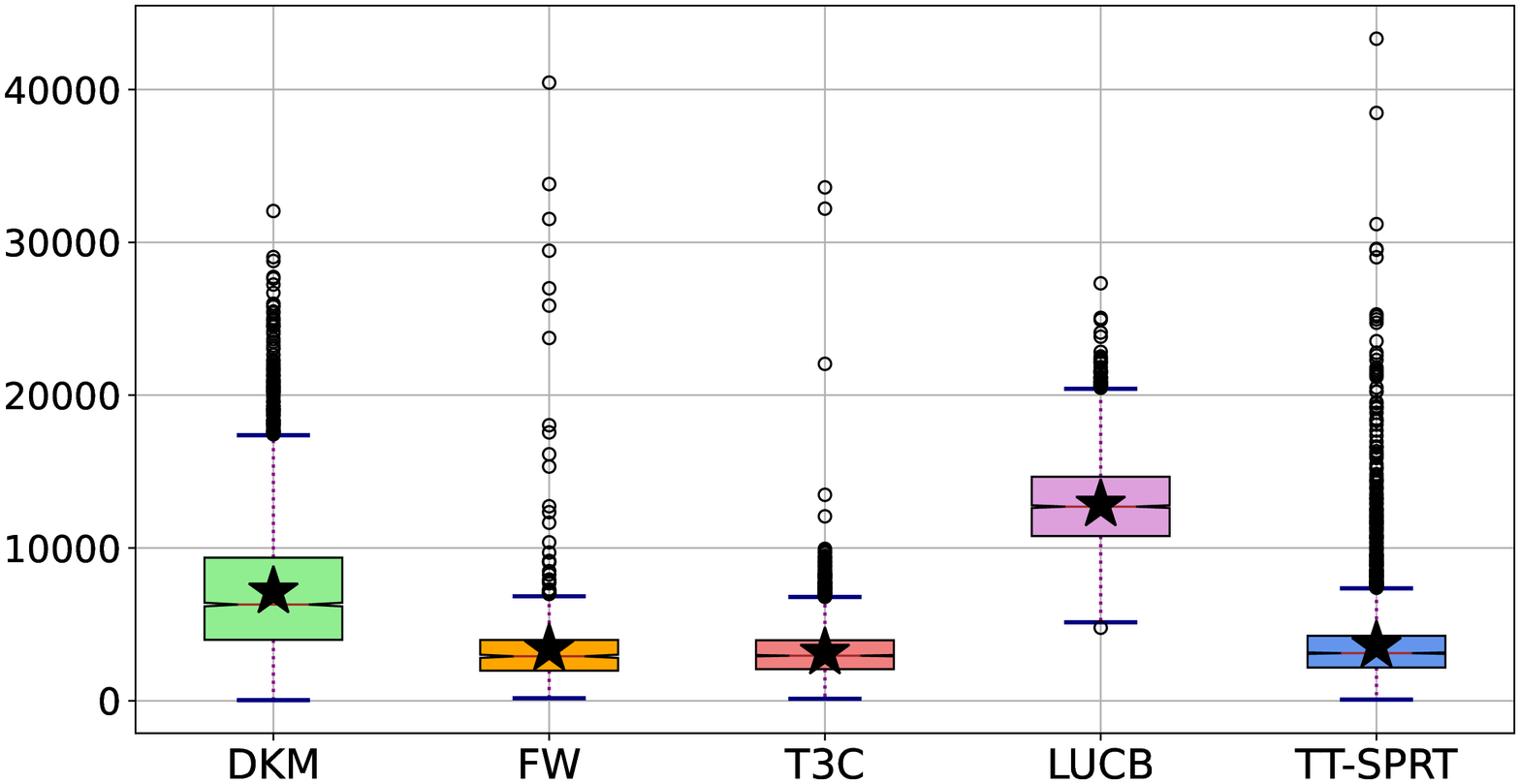} 
		\caption{Sample complexity (Gaussian): $\bmu = [1, 0.85, 0.8, 0.7]$, $\delta = 0.1$}
		\label{fig:g2}
	\end{minipage}
\end{figure*}

\begin{figure*}[t]
	\centering
	\begin{minipage}{0.49\textwidth}
		\centering
		\includegraphics[width=0.99\textwidth, height = 0.53\textwidth]{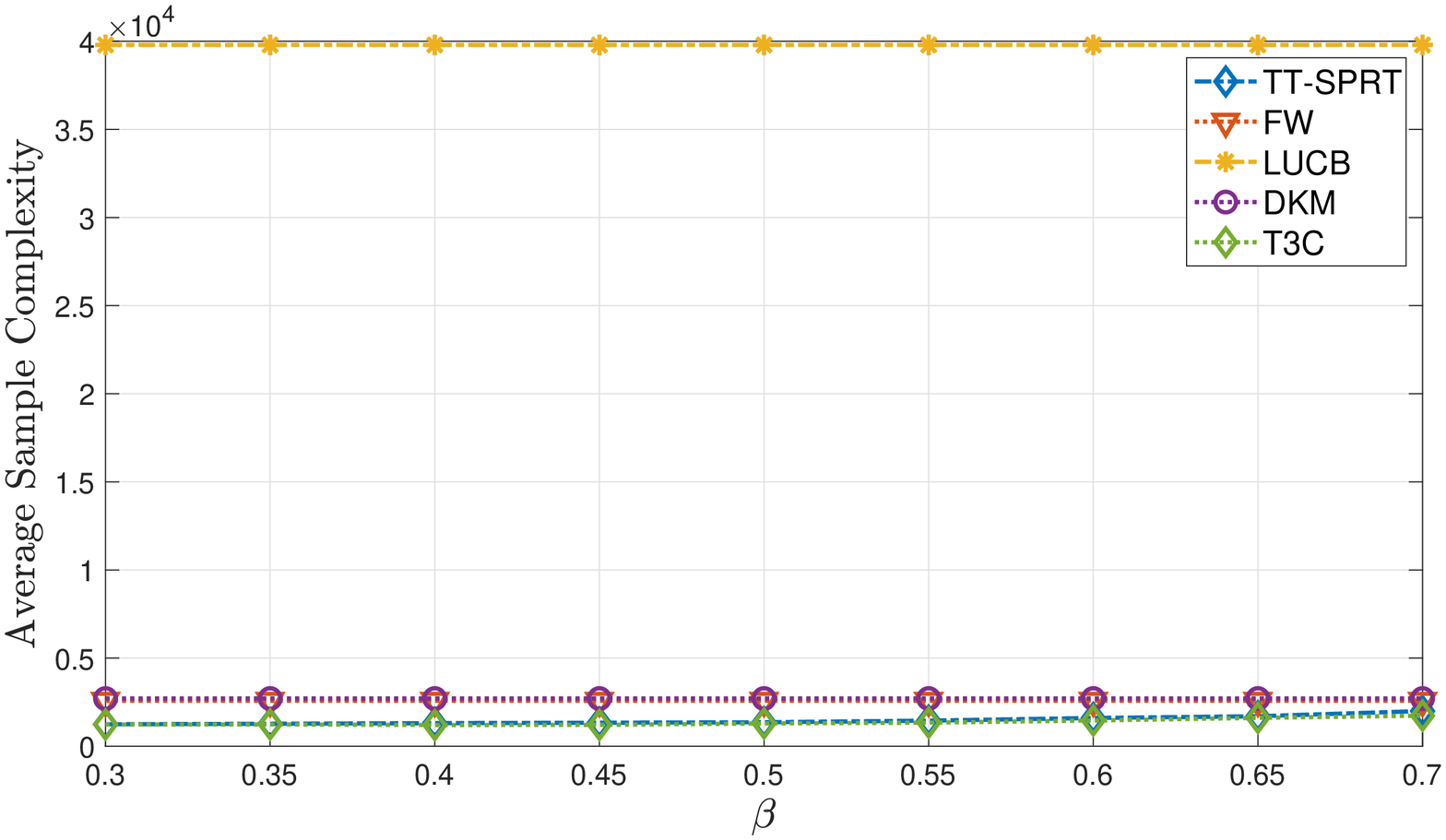} 
		\caption{Sensitivity to $\beta$: (Bernoulli)}
		\label{fig:bernoulli_1}
	\end{minipage}\hfill
	\begin{minipage}{0.49\textwidth}
		\centering
		\includegraphics[width=0.99\textwidth]{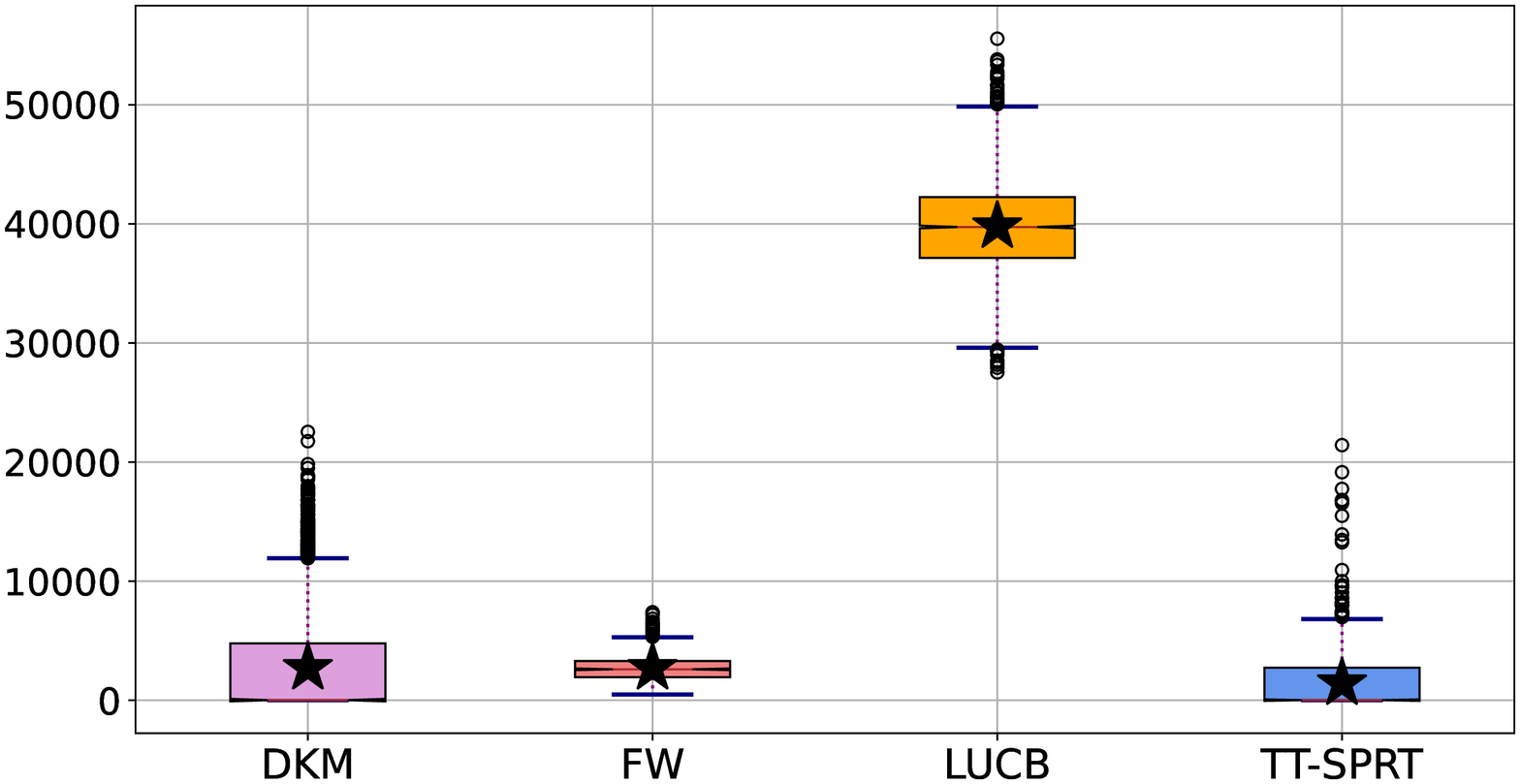} 
		\caption{Sample complexity (Bernoulli)}
		\label{fig:bernoulli}
	\end{minipage}
\end{figure*}

\subsection{Gaussian Bandits}
{Next, we compare the empirical performance of the TT-SPRT algorithm to existing strategies for BAI in the Gaussian bandit setting. Specifically, we compare against four existing BAI algorithms, which are DKM~\cite{degenne2019}, LUCB~\cite{Kalyanakrishnan2012}, T3C~\cite{pmlr-v108-shang20a}, and FW~\cite{FW}. Note that we have not compared our algorithm with TTTS, since TTTS requires an enormous computation time to identify a challenger in our experimental setting. T3C is a computationally efficient alternative to the TTTS algorithm, which has been proposed in~\cite{pmlr-v108-shang20a}. Specifically, T3C is based on posterior sampling for identifying the top arm and replaces the resampling procedure for identifying a challenger in TTTS by defining the challenger based on the minimum transportation cost compared to the best arm. Despite its computational efficiency in selecting a challenger, T3C computes a posterior distribution at each time for identifying the top arm, which may involve Monte Carlo integration without conjugate priors~\cite{russo2016}. The DKM algorithm is based on a gamification principle for algorithm design, which involves a $\bw$ player who plays a sampling distribution and a $\blambda$ player who plays an alternate bandit instance. We have implemented a best-response (zero-regret) $\blambda$ player with the Adahedge $\bw$ player, as described in~\cite{degenne2019}. The LUCB algorithm is based on identifying two arms, a top arm and a challenger, based on the current confidence intervals of the mean estimates. At each iteration, the LUCB algorithm samples both arms and updates their corresponding mean estimates. Finally, the FW algorithm is based on a single iteration of a Frank-Wolfe update step to compute a sampling proportion in each round. }

{We have considered two Gaussian bandit instances, one with $\Delta_{\min}=0$, and the other with $\Delta_{\min}>0$. These instances are characterized by the mean values $\bmu_1 = [5, 4.5, 1,1,1]$, and $\bmu_2 = [1, 0.85, 0.8, 0.7]$. For both instances, we set $\sigma=1$, and $\bmu_2$ is a bandit instance from the experiments in~\cite{degenne2019}. The experiment with $\bmu_1$ is plotted in Figure~\ref{fig:g1_1} and Figure~\ref{fig:g1}, which confirms the superior performance of TT-SPRT over the FW sampling strategy for $0.45\leq\beta\leq 0.55$. Note that we have not plotted DKM and LUCB for this experiment due to their large empirical sample complexities, $3317.8$ and $10,335$ for LUCB and DKM, respectively. The plot corresponding to bandit instance $\bmu_2$ can be found in Figure~\ref{fig:g2_1} and Figure~\ref{fig:g2}. In this experiment, we observe that the empirical sample complexity of the T3C and FW algorithms is slightly better than TT-SPRT. However, this comes at an increased computational cost for the FW sampling strategy for solving a linear program in each iteration. For our implementations in MATLAB, we have used the simplex method for this purpose, which has a worst-case complexity of the order of $O(2^K)$~\cite{deza2008good}. Furthermore, the T3C algorithm requires computing a posterior distribution in each iteration, which may involve Monte Carlo integration if a conjugate prior does not exist~\cite{russo2016}. {Note that the TT-SPRT algorithm uses explicit exploration for the instance $\bmu_1$, since it has $\Delta_{\min}=0$. The effect of explicit exploration (as present in TT-SPRT) versus implicit exploration (as in algorithms such as T3C) is observed to vary, depending on the bandit instance. For example, in $\bmu_1$, T3C is observed to have a comparable performance as TT-SPRT. On the other hand, in $\bmu_2$, we observe that T3C performs slightly better than TT-SPRT. Overall, the necessity of explicit exploration depends on the algorithm design and the bandit instance, and whether or not it improves (or hurts) the sample complexity seems to be unclear.}}

\subsection{Bernoulli Bandits}
{Next, we compare the sample complexity of TT-SPRT in the Bernoulli bandit setting. For comparison, we use a state-of-the-art FW sampling strategy~\cite{FW}, along with existing approaches such as DKM~\cite{degenne2019} and LUCB~\cite{Kalyanakrishnan2012}. For this experiment, we have used the Bernoulli bandit instance from~\cite{degenne2019,pmlr-v49-garivier16a}, with mean values $\bmu = [0.3,0.21,0.2, 0.19, 0.18]$. We have set $\delta=0.1$, and the performance comparison has been plotted in Figure~\ref{fig:bernoulli_1} and Figure~\ref{fig:bernoulli}. Note that there is a difference between the simulations with DKM in~\cite{degenne2019}, even though we use the same bandit instance and value of $\delta$. This is because~\cite{degenne2019} uses a ``stylized threshold'' of $c_{n,\delta} = \log((1+\log n)/\delta)$, which is disallowed by theory. In contrast, we use the theoretically grounded stopping thresholds~\cite{Kaufmann_JMLR}. Figure~\ref{fig:bernoulli} clearly shows that TT-SPRT outperforms the state-of-the-art algorithms. Furthermore, we observe a significantly worse performance of the LUCB algorithm, which is due to the loose confidence bound on the mean estimates prescribed in~\cite{Kalyanakrishnan2012}.}

\begin{figure*}[t]
	\centering
	\begin{minipage}{0.49\textwidth}
		\centering
		\includegraphics[width=0.99\textwidth, height = 0.53\textwidth]{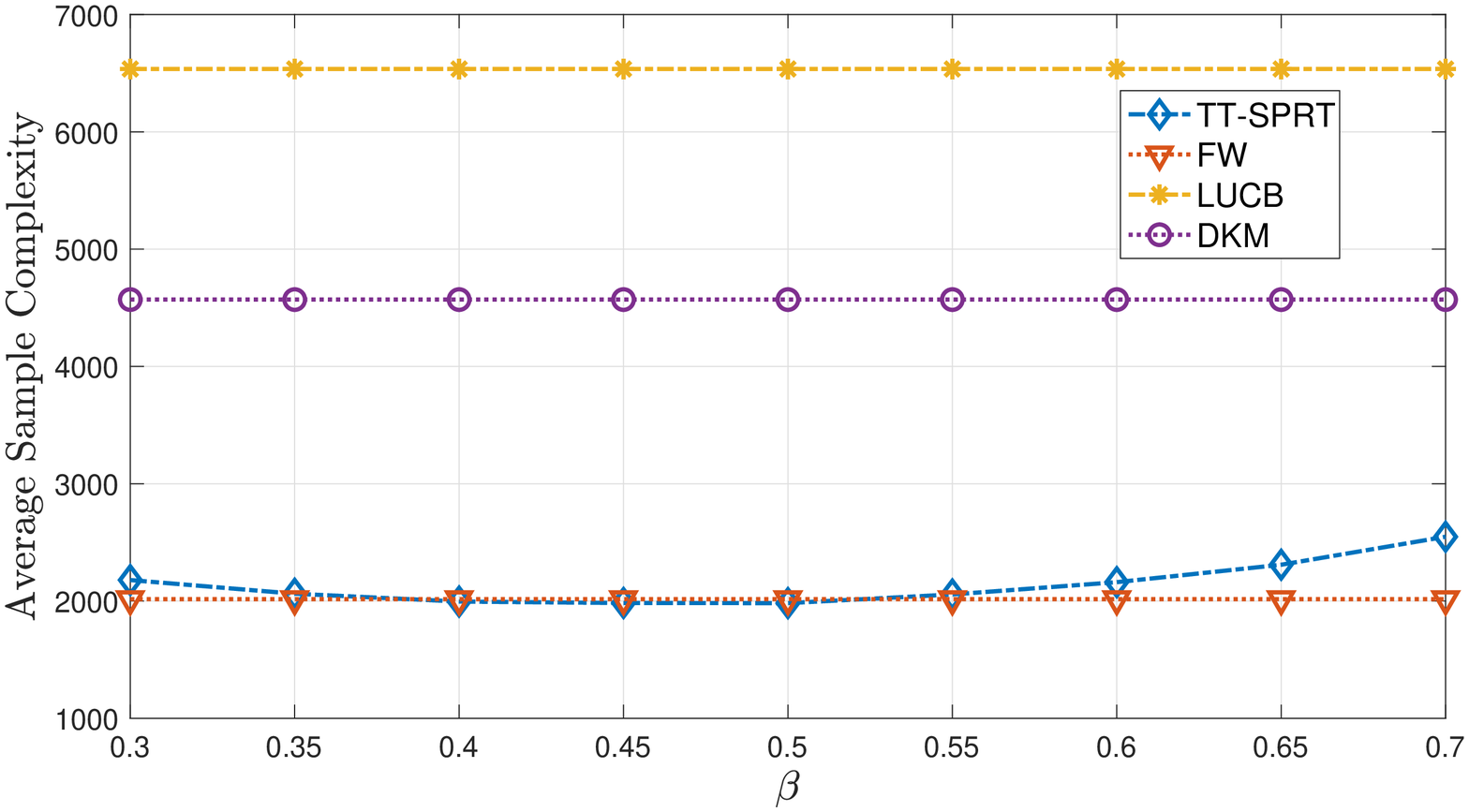} 
		\caption{Sensitivity to $\beta$: (Exponential)}
		\label{fig:exp_1}
	\end{minipage}\hfill
	\begin{minipage}{0.49\textwidth}
		\centering
		\includegraphics[width=0.99\textwidth]{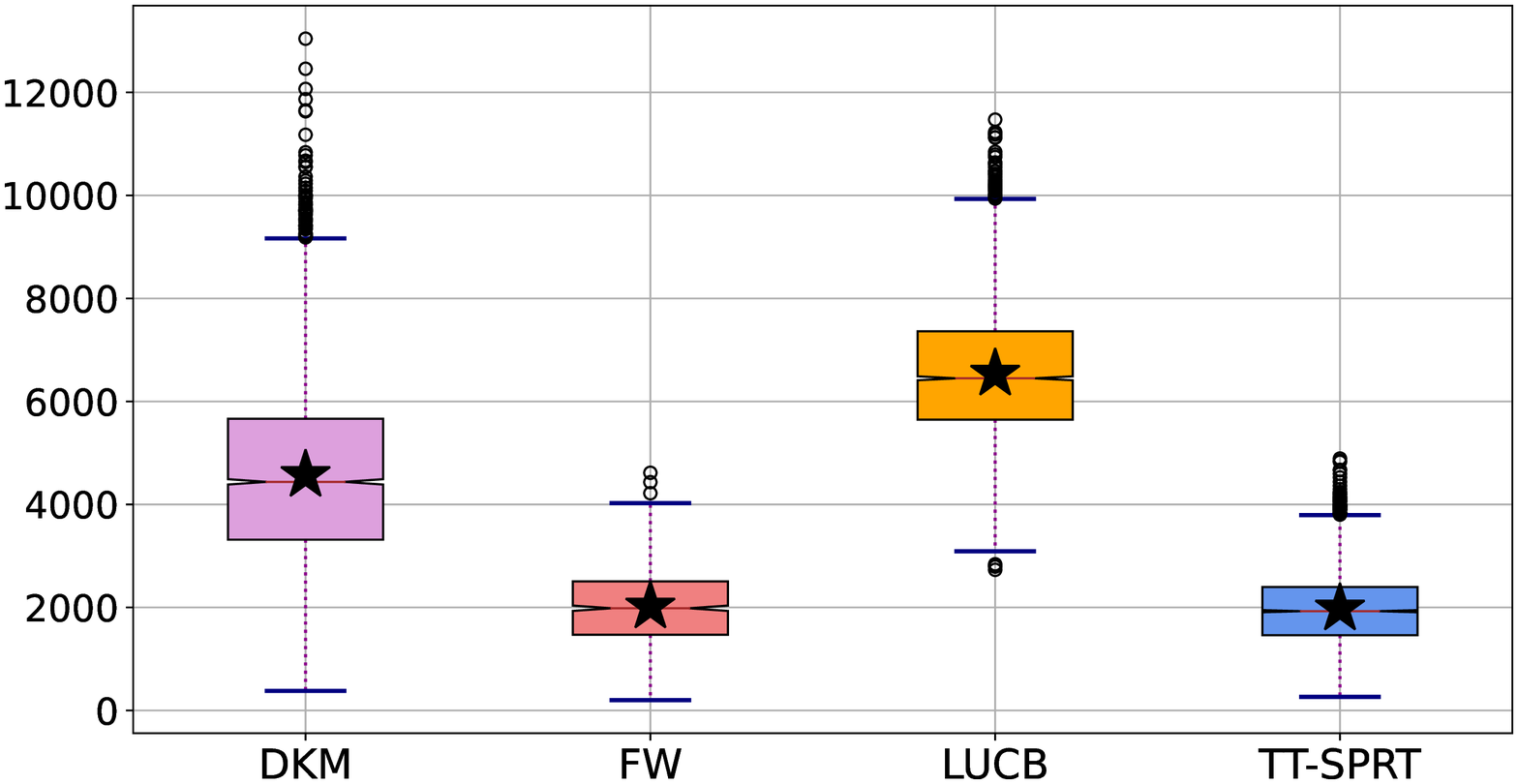} 
		\caption{Sample complexity (Exponential)}
		\label{fig:exp}
	\end{minipage}
\end{figure*}

\subsection{Exponential Bandits}

{Finally, since TT-SPRT generalizes to any member of the single parameter exponential family, we provide an example with an exponential bandit instance with $\bmu = [0.9, 0.7, 0.5, 0.3, 0.1]$. We have set $\delta=0.1$, and the performance of TT-SPRT is compared against that of LUCB~\cite{Kalyanakrishnan2012}, DKM~\cite{degenne2019} and FW~\cite{FW} in Figure~\ref{fig:exp_1} and Figure~\ref{fig:exp}. We can observe that TT-SPRT outperforms DKM and LUCB by a significant margin. Furthermore, its performance is comparable to the FW sampling strategy for a wide range of $\beta$, including the choice of $\beta=0.5$, which has been used to obtain the box plot. Specifically, at $\beta=0.5$, TT-SPRT has an empirical sample complexity of $1982.5$, while that of the FW sampling strategy is $2016.5$. Thus, TT-SPRT has a low empirical sample complexity in the example of exponential bandits, despite having a significantly lower computational cost compared to the state-of-the-art FW sampling rule. Furthermore, in order to highlight the computational burden incurred by the FW sampling rule, we tabulate the computation times required by TT-SPRT and other existing BAI strategies in Table~\ref{tab:1}. These simulations are performed in MATLAB R2022a, on an Apple M1 pro processor equipped with $16$ gigabytes of RAM. Clearly, TT-SPRT outperforms the FW sampling rule in terms of computational time by a wide margin. Note that we have not compared the computation times for TaS, since it is significantly worse (as much as $10$ times that of the FW algorithm~\cite[Appendix G.1]{TTUCB}).}

\begin{table}[h]
	\centering
	\resizebox{0.48\textwidth}{!}{%
		\begin{tabular}{|c|c|c|c|c|c|}
			\hline
			Algorithm & FW     & TT-SPRT  & DKM      & LUCB  &T3C     \\ \hline
			Time      & $8.7984$ & $0.0235$ & $0.0388$ & $0.0218$ & $0.412$ \\ \hline
	\end{tabular}}
	\caption{Average computation time (in seconds)}
	\label{tab:1}
\end{table}

\section{Conclusions}
In this paper, we have investigated the problem of best arm identification (BAI) in stochastic multi-armed bandits (MABs). Arm selection and terminal decision rules are characterized based on generalized likelihood ratio tests with similarities to the conventional sequential probability ratio tests. The decisions rules (dynamic arm selection and stopping time) have two main properties: (1) they achieve optimality in the probably approximately correct learning framework, and, (2) they asymptotically achieve the optimal sample complexity. We have analytically characterized the optimality properties, and comparisons with the state-of-the-art are shown both analytically and numerically. {We have also analyzed the computational challenges in an existing top-two sampling strategy for BAI.}


\appendix

\section{Relevant Lemmas}\label{appendix}

\begin{lemma}[Lemma 5,~\cite{TTEI}]\label{lemma:estimator}
In the Gaussian bandit setting, there exists a random variable $W_1$ such that for all $i\in[K]$ and for all $n\in\N$, we almost surely have
\begin{align}
\label{eq:estimator_con}
    |\mu_{n,i}-\mu_i|\leq \sigma W_1\sqrt{\frac{\log(\e + T_{n,i})}{1+T_{n,i}}}\ .
\end{align}
and $\E_{\bmu}[\e^{sW_1}]<+\infty$ for all $s\in\R^+$.
\end{lemma}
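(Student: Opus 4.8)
The plan is to exhibit an explicit random variable $W_1$ as a supremum of normalized empirical deviations, to verify that the claimed inequality then holds by construction, and finally to control the tail of $W_1$ so as to establish finiteness of all its exponential moments. The key starting observation is that, even though the sampling rule is adaptive and $T_{n,i}$ is random, the rewards collected from any \emph{fixed} arm $i$ form an i.i.d.\ stream $Y_{i,1},Y_{i,2},\dots\sim\mcN(\mu_i,\sigma^2)$ that does not depend on the selection strategy. Hence, whenever arm $i$ has been pulled $m\geq 1$ times, the empirical mean $\mu_{n,i}$ coincides with the sample average $\frac{1}{m}\sum_{k=1}^m Y_{i,k}$ of the first $m$ elements of this stream. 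This decouples the estimation error from the adaptivity of $T_{n,i}$, at the price of requiring control that is \emph{uniform} over the sample size $m$.

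Accordingly, I would define
\begin{align}
    W_1 \triangleq \max_{i\in[K]}\frac{|\mu_i|}{\sigma}\;\vee\;\sup_{i\in[K]}\;\sup_{m\geq 1}\; \frac{\big\lvert \frac{1}{m}\sum_{k=1}^m Y_{i,k} - \mu_i\big\rvert}{\sigma\sqrt{\log(\e+m)/(1+m)}}\ ,
\end{align}
where the leading deterministic term is included only to absorb the trivial case $T_{n,i}=0$ (where $\mu_{n,i}$ takes its initialized value and $\log(\e+0)/(1+0)=1$). With this definition, the bound~\eqref{eq:estimator_con} holds almost surely and simultaneously for every $i\in[K]$ and every $n\in\N$, directly from the definition of the supremum evaluated at $m=T_{n,i}$. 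It then remains only to show that $W_1$ possesses a sufficiently light tail.

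For the tail, I would fix $x>0$ and apply a union bound over $i\in[K]$ and $m\geq 1$. Since $\frac{1}{m}\sum_{k=1}^m(Y_{i,k}-\mu_i)\sim\mcN(0,\sigma^2/m)$, standardizing and invoking $\P\{|\mcN(0,1)|>t\}\leq 2\e^{-t^2/2}$ together with $\frac{m}{1+m}\geq\frac12$ for $m\geq 1$ gives
\begin{align}
    \P_{\bmu}\{W_1>x\}\leq 2K\sum_{m\geq 1}\exp\Big(-\tfrac{x^2}{2}\cdot\tfrac{m}{1+m}\log(\e+m)\Big)\leq 2K\sum_{m\geq 1}(\e+m)^{-x^2/4}\ ,
\end{align}
which is finite once $x>2$ and decays like $\e^{-cx^2}$ as $x\to\infty$ (bounding the series by $\int_1^\infty(\e+t)^{-x^2/4}\diff t$). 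A Gaussian-type tail of this order immediately yields $\E_{\bmu}[\e^{sW_1}]=1+\int_0^\infty s\,\e^{sx}\,\P_{\bmu}\{W_1>x\}\diff x<+\infty$ for every $s\in\R^+$, since $\e^{sx}\e^{-cx^2}$ is integrable; the deterministic term in the definition of $W_1$ merely contributes a bounded multiplicative factor.

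The crux of the argument — and the reason the particular normalization $\sqrt{\log(\e+m)/(1+m)}$ appears — is the need for a bound that is uniform over the random, adaptively chosen number of pulls $T_{n,i}$, so that it can be applied \emph{after} the sample size has been revealed. The logarithmic inflation $\log(\e+m)$ is exactly what renders the per-$m$ Gaussian tail summable in the union bound: with the naive normalization $\sqrt{1/(1+m)}$ the per-term probability would be $\Theta(\e^{-x^2/2})$ independent of $m$, and summing over $m$ would diverge. Thus the main obstacle is the maximal/anytime nature of the estimate rather than any individual deviation bound, and it is resolved precisely by this logarithmic normalization, which avoids any peeling construction.
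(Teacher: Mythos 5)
Your proof is correct. Note that the paper does not prove this lemma at all; it imports it verbatim as Lemma 5 of the cited reference [TTEI], and your argument is essentially the standard one given there: the reward-stack representation to decouple the estimation error from the adaptivity of $T_{n,i}$, the anytime normalization $\sqrt{\log(\e+m)/(1+m)}$ whose logarithmic inflation makes the union bound over sample sizes summable, and the resulting Gaussian tail for $W_1$, which yields all exponential moments. The only minor points worth tightening in a written version are that the displayed tail bound applies for $x$ exceeding the deterministic constant $\max_i|\mu_i|/\sigma$ (for smaller $x$ one uses the trivial bound $1$, which is harmless for the moment computation), and that the identity $\{\sup_m A_m>x\}=\bigcup_m\{A_m>x\}$ is what licenses the countable union bound; neither affects correctness.
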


\begin{lemma}[Cram\'er-Chernoff inequality for the exponential family]
\label{lemma:Chernoff}

Consider the sequence of i.i.d. random variables $\{X_i:i\in[n]\}$ distributed according to $\P\in\mathscr{P}_b$ with mean $\mu$. Then, for any $\zeta\in\R_+$ and $\bar{X}_n\triangleq \frac{1}{n}\sum_{i=1}^n X_i$ we have
\begin{align}
    \P\left(\bar{X}_n - \mu \geq \zeta\right)\;&\leq\;\exp\left(-n{d_{\sf KL}} (\mu+\zeta\|\mu)\right)\ ,\\
    \label{eq:chernoff_exp}{\rm and}\quad \P\left(\bar{X}_n - \mu \leq -\zeta\right)\;&\leq\;\exp\left(-n{d_{\sf KL}} (\mu-\zeta\|\mu)\right)\ .
\end{align}
\end{lemma}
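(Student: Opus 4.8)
The plan is to establish both tail bounds by the classical Chernoff (exponential Markov) argument, exploiting the fact that the moment generating function of an exponential-family variate has a particularly clean form in terms of the log-partition function $b$. I will carry out the upper tail in detail; the lower tail follows by a mirror-image computation. First I would record the cumulant generating function of a single draw $X\sim\P=\nu_\theta$, where $\theta\triangleq\dot{b}^{-1}(\mu)$ is the natural parameter associated with the mean $\mu$. Using the density $\diff\nu_\theta/\diff\nu=\exp(\theta x - b(\theta))$ together with the normalization identity $\int\exp(\theta' x)\diff\nu(x)=\exp(b(\theta'))$, valid for every admissible $\theta'$, a direct substitution gives
\begin{align}
\E_\theta\!\left[\e^{\lambda X}\right]=\e^{-b(\theta)}\int \e^{(\theta+\lambda)x}\diff\nu(x)=\e^{\,b(\theta+\lambda)-b(\theta)}\ ,
\end{align}
so that the cumulant generating function is $\psi(\lambda)\triangleq\log\E_\theta[\e^{\lambda X}]=b(\theta+\lambda)-b(\theta)$, finite whenever $\theta+\lambda\in\Theta$.

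Next I would apply the exponential Markov inequality to the i.i.d.\ sum. For any $\lambda\geq 0$,
\begin{align}
\P\left(\bar{X}_n-\mu\geq\zeta\right)=\P\left(\textstyle\sum_{i=1}^n X_i\geq n(\mu+\zeta)\right)\leq \e^{-\lambda n(\mu+\zeta)}\,\E_\theta\!\left[\e^{\lambda X}\right]^n=\exp\!\big(-n[\lambda(\mu+\zeta)-\psi(\lambda)]\big)\ .
\end{align}
Optimizing the exponent yields $\P(\bar{X}_n-\mu\geq\zeta)\leq\exp(-n\,\psi^\star(\mu+\zeta))$ with the rate function $\psi^\star(a)\triangleq\sup_{\lambda\geq 0}[\lambda a-\psi(\lambda)]$. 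Setting the derivative to zero gives the maximizer $\lambda^\star=\dot{b}^{-1}(\mu+\zeta)-\theta$, which is nonnegative precisely because $\zeta\geq 0$ and $\dot{b}$ is increasing (as $b$ is convex); hence the one-sided constraint $\lambda\geq 0$ is inactive and the supremum is attained at an interior point.

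The crux of the argument, and the only place requiring genuine bookkeeping, is to verify that this Legendre transform coincides with the KL divergence. Substituting $\lambda^\star$ and writing $\theta_a\triangleq\dot{b}^{-1}(a)$ with $a=\mu+\zeta$, the exponent collapses to
\begin{align}
\psi^\star(a)=(\theta_a-\theta)\,a-b(\theta_a)+b(\theta)\ ,
\end{align}
which is exactly the closed form of $d_{\sf KL}(a\|\mu)$ obtained from the identity in~\eqref{eq:KL} upon setting $\mu_i=a$ and $\mu_j=\mu$. This establishes the first inequality. For the lower tail I would repeat the Chernoff step over $\lambda\leq 0$, bounding $\P(\sum_i X_i\leq n(\mu-\zeta))$; the maximizer is now $\lambda^\star=\dot{b}^{-1}(\mu-\zeta)-\theta\leq 0$, so the sign constraint is again inactive, and the identical Legendre-transform computation identifies the exponent with $d_{\sf KL}(\mu-\zeta\|\mu)$.

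I do not anticipate any substantive obstacle: the whole argument is the textbook exponential-family Chernoff bound. The only subtlety worth guarding against is confirming that each \emph{one-sided} supremum agrees with its unconstrained counterpart, i.e.\ that the optimizing $\lambda^\star$ carries the correct sign; this follows cleanly from the monotonicity of $\dot{b}$, and is what lets the two tails reduce to a single Legendre-transform identity matched against~\eqref{eq:KL}.
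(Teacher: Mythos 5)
Your proposal is correct and follows essentially the same route as the paper's proof: compute the moment generating function via the log-partition function $b$, apply the exponential Markov inequality to the i.i.d.\ sum, optimize over $\lambda$ at $\lambda^\star=\dot b^{-1}(\mu+\zeta)-\dot b^{-1}(\mu)$, and identify the resulting Legendre transform with $d_{\sf KL}(\mu+\zeta\|\mu)$ via~\eqref{eq:KL}. Your explicit check that the sign constraint on $\lambda$ is inactive is a small added care that the paper leaves implicit, but it does not change the argument.
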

\begin{proof}
Let us define $S_n\triangleq \sum_{i=1}^n X_i$ and denote the moment generating function associated with $\P$ by $\phi(\lambda)\triangleq \E_{\P}[\e^{\lambda X}]$, which for the single parameter exponential $\mathscr{P}_b$ is given by
\begin{align}
    \phi(\lambda) = \exp\left (b\left ( \dot b^{-1}(\mu) + \lambda\right ) - b\left ( \dot b^{-1}(\mu)\right )\right )\ .
\end{align}
Hence,
\begin{align}
    \P\left( \bar{X}_n - \mu \geq \zeta\right) & = \P\left ( \e^{\lambda S_n}\geq \e^{n\lambda(\mu+\zeta)}\right)\\
    \label{eq:chernoff1}
    &\leq \frac{\E_{\bmu}[\e^{\lambda S_n}]}{\e^{n(\lambda + \zeta)}}\\
    \label{eq:chernoff2}
    & = \displaystyle\prod\limits_{i=1}^n\; \left ( \frac{\E_{\bmu}[\e^{\lambda X_i}]}{\e^{\lambda(\mu+\zeta)}}\right )\\
    & = \left (\frac{\phi(\lambda)}{\e^{\lambda(\mu+\zeta)}}\right )^n \ ,
    \label{eq:chernoff3}
\end{align}
where~(\ref{eq:chernoff1}) is holds due to Markov's inequality and~(\ref{eq:chernoff2}) holds due to independence. Since~(\ref{eq:chernoff3}) is valid for any $\lambda\in\R_+$, we have
\begin{align}
    \P\left( \bar{X}_n - \mu \geq \zeta\right)\;\leq\;\left (\inf\limits_{\lambda\in\R_+}\; \frac{\phi(\lambda)}{\e^{\lambda(\mu+\zeta)}}\right )^n\ .
    \label{eq:chernoff4}
\end{align}
It can be readily verified that the infimum of the right-hand-side in~(\ref{eq:chernoff4}) is obtained at
\begin{align}
   \arginf \limits_{\lambda\in\R_+}\; \frac{\phi(\lambda)}{\e^{\lambda(\mu+\zeta)}}= \dot b^{-1}(\mu + \zeta) - \dot b^{-1}(\mu)\ .
    \label{eq:chernoff5}
\end{align}
Using~(\ref{eq:chernoff5}), we can rewrite~(\ref{eq:chernoff4}) as
\begin{align}
    \P\left( \bar{X}_n - \mu \geq \zeta\right)\;&\leq\; \exp\left ( -n\left[ b(\dot b^{-1}(\mu)) - b(\dot b^{-1}(\mu+\zeta)) - (\mu+\zeta)\left ( \dot b^{-1}(\mu) - \dot b^{-1}(\mu + \zeta)\right )\right]\right )\\
    \label{eq:exponential_D_KL}& = \exp\left ( -n{\sf D_{KL}(\mu+\zeta\|\mu)}\right )\ .
\end{align}
where \eqref{eq:exponential_D_KL} holds due to~(\ref{eq:KL}). The proof of \eqref{eq:chernoff_exp} follows a similar line of arguments. 
\end{proof}

\begin{lemma}\label{lemma:KL_conv}
For any pair of distributions $\P,\P^\prime\in\mathscr{P}_b$ with mean values $\mu,\mu^\prime$, for any $\epsilon\in\R_+$,
\begin{align}
    &{d_{\sf KL}} (\mu+\epsilon\|\mu^\prime) - {d_{\sf KL}} (\mu\|\mu^\prime) = O(\epsilon)\ ,\\
    \text{and}\quad &{d_{\sf KL}} (\mu-\epsilon\|\mu^\prime) - {d_{\sf KL}} (\mu\|\mu^\prime) = O(\epsilon)\ .
    \label{eq:KL_conv_negative}
\end{align}
\end{lemma}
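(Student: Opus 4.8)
The plan is to view $d_{\sf KL}(\cdot\,\|\mu')$ as a one-dimensional function of its first argument and show that its derivative is uniformly bounded on the (compact) mean space, so that a first-order mean-value expansion immediately yields the $O(\epsilon)$ bound. First I would invoke the closed form in~\eqref{eq:KL}, abbreviating $\theta\triangleq\dot b^{-1}(\mu)$ and $\theta'\triangleq\dot b^{-1}(\mu')$, so that
\begin{align}
    d_{\sf KL}(\mu\|\mu')=b(\theta')-b(\theta)-\mu\,(\theta'-\theta)\ .
\end{align}
Since $b$ is convex and twice-differentiable with $\ddot b>0$, the map $\dot b$ is a strictly increasing $C^1$ bijection onto the mean space and $\dot b^{-1}$ is $C^1$ there; hence $\mu\mapsto d_{\sf KL}(\mu\|\mu')$ is differentiable.

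Next I would differentiate in $\mu$. Using $\dot b(\theta)=\mu$ together with $\diff\theta/\diff\mu=1/\ddot b(\theta)$, the terms carrying the factor $\diff\theta/\diff\mu$ cancel, leaving the clean expression
\begin{align}
    \frac{\partial}{\partial\mu}\,d_{\sf KL}(\mu\|\mu')=\dot b^{-1}(\mu)-\dot b^{-1}(\mu')\ .
\end{align}
By the mean-value theorem there is a point $\xi$ between $\mu$ and $\mu+\epsilon$ with
\begin{align}
    d_{\sf KL}(\mu+\epsilon\|\mu')-d_{\sf KL}(\mu\|\mu')=\epsilon\,\bigl(\dot b^{-1}(\xi)-\dot b^{-1}(\mu')\bigr)\ ,
\end{align}
and the identical argument with $\xi$ between $\mu-\epsilon$ and $\mu$ handles~\eqref{eq:KL_conv_negative}.

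Finally, I would bound the bracketed factor. Because $\Theta$ is compact and $\dot b$ is continuous, the mean space $\dot b(\Theta)$ is compact, and the continuous function $\dot b^{-1}$ is bounded on it; thus $|\dot b^{-1}(\xi)-\dot b^{-1}(\mu')|\le L$ for a constant $L$ depending only on $b$ and $\Theta$, valid whenever $\epsilon$ is small enough that $\mu\pm\epsilon$ still corresponds to a parameter in $\Theta$. This yields $|d_{\sf KL}(\mu\pm\epsilon\|\mu')-d_{\sf KL}(\mu\|\mu')|\le L\,\epsilon$, i.e.\ the claimed $O(\epsilon)$ rate. The step I would be most careful about is precisely this boundedness: it relies on the strict convexity of $b$ (so that $\dot b^{-1}$ stays continuous, rather than blowing up, on the compact mean space) and on restricting to small $\epsilon$ so that the perturbed mean remains a valid parameter; both are exactly the conditions the compactness of $\Theta$ in the model definition provides.
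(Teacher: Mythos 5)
Your proof is correct, and it takes a genuinely different (and arguably cleaner) route than the paper's. The paper does not differentiate: it takes the difference of the two closed forms from~\eqref{eq:KL} directly and bounds the resulting terms one by one, using a Lipschitz constant $\mcL_1$ for $b$ on the compact $\Theta$ together with a ``reverse Lipschitz'' constant $\mcL_2 \triangleq (\min_{\theta\in\Theta}\Ddot{b}(\theta))^{-1}$ for $\dot b^{-1}$ (obtained from the mean value theorem applied to $\dot b$ and positivity of the variance $\Ddot{b}$). This yields a bound of the form $(\mcL_1+\mu)\mcL_2\,\epsilon + O(\epsilon)$. You instead exploit the exact cancellation $\frac{\partial}{\partial\mu}d_{\sf KL}(\mu\|\mu')=\dot b^{-1}(\mu)-\dot b^{-1}(\mu')$ and apply the mean value theorem to the map $\mu\mapsto d_{\sf KL}(\mu\|\mu')$ itself; the resulting constant is just the diameter of $\Theta$, which is sharper and requires only boundedness (not Lipschitz continuity) of $\dot b^{-1}$ on the mean space. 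Both arguments need $\Ddot{b}>0$ on the compact $\Theta$ (the paper to define $\mcL_2$, you to guarantee that $\dot b^{-1}$ is $C^1$ so the derivative computation is legitimate), and both implicitly require $\mu\pm\epsilon$ to remain in the mean space $\dot b(\Theta)$ --- a caveat you flag explicitly and the paper leaves tacit, even though the lemma is stated for ``any $\epsilon\in\R_+$''; in the places the lemma is invoked, $\epsilon$ is a vanishing quantity, so this is harmless in either version.
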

\begin{proof}
Note that $b : \Theta\mapsto \R$ is a convex, twice-differentiable function over a compact space $\Theta$, and hence it is Lipschitz continuous. Let us define a Lipschitz constant, based on which
\begin{align}
\label{eq:Lipschitz}
    |b(\theta) - b(\theta^\prime)| \leq \mcL_1|\theta - \theta^\prime|\ ,\quad\forall\theta,\theta^\prime\in\Theta\ .
\end{align}
Corresponding to any $\theta\in\Theta$, $\Ddot{b}(\theta)$ is the variance of the distribution $\P_{\theta}\in\mathscr{P}_b$, and hence, $\Ddot{b}(\theta)>0$.
Let us define
\begin{align}
    \mcL_2\triangleq \left (\min\limits_{\theta\in\Theta} \Ddot{b}(\theta)\right )^{-1}\ .
\end{align}
Using the mean value theorem, for any pair $\theta,\theta^\prime\in \Theta$, there exists a $\lambda\in\Theta$ for which we have:
\begin{align}
    \frac{\dot{b}(\theta) - \dot{b}(\theta^\prime)}{\theta - \theta^\prime} &= \Ddot{b}(\lambda) \geq \frac{1}{\mcL_2}\ .
    \label{eq:reverse_Lipschitz1}
\end{align}
Using~(\ref{eq:reverse_Lipschitz1}), for any pair $z,z^\prime\in\R$, we have
\begin{align}
    |\dot{b}^{-1}(z) - \dot{b}^{-1}(z^\prime)|\leq \mcL_2 |z-z^\prime|\ .
    \label{eq:reverse_Lipschitz2}
\end{align}
This indicates that for any $\epsilon\in\R_+$, we have
\begin{align}
    &{d_{\sf KL}} (\mu+\epsilon\|\mu^\prime) - {d_{\sf KL}} (\mu\|\mu^\prime)\nonumber\\
    &\quad \stackrel{(\ref{eq:KL})}{=} b(\dot{b}^{-1}(\mu)) - b(\dot{b}^{-1}(\mu+\epsilon)) + \mu \left[\dot{b}^{-1}(\mu+\epsilon) - \dot{b}^{-1}(\mu)\right] + O(\epsilon)\\
    \label{eq:KL_conv1}
    &\quad\stackrel{(\ref{eq:Lipschitz})}{\leq} \mcL_1\left\lvert\dot{b}^{-1}(\mu) - \dot{b}^{-1}(\mu+\epsilon)\right\rvert + \mu \left[\dot{b}^{-1}(\mu+\epsilon) - \dot{b}^{-1}(\mu)\right] + O(\epsilon)\\
    \label{eq:KL_conv2}
    &\quad \stackrel{(\ref{eq:reverse_Lipschitz2})}{\leq} \left(\mcL_1+\mu\right)\mcL_2\epsilon + O(\epsilon)\\
    &\quad = O(\epsilon)\ .
\end{align}
Proving (\ref{eq:KL_conv_negative}) follows similar steps.  

\end{proof}

\section{Proof of Lemma~\ref{theorem:explore_gaussian}}\label{proof:exploration_Gaussian}
For a given $L>0$ and for all $n\in\N$, define the sets 
\begin{align}
    &U_n(L)\triangleq \{i\in[K] \; : \; T_{n,i}\leq\sqrt{L} \}\ ,\\
    \text{and}\quad & V_n(L) \triangleq \{i\in[K] \; : \; T_{n,i}\leq L^{3/4} \}\ .
\end{align}
First, we establish the fact that for any $L>0$, if the set $U^L_n\neq\emptyset$, then at least one of the top two contenders $a_n^1$ or $a_n^2$ is contained in $V_n(L)$, i.e., either $a_n^1\in V_n(L)$, or $a_n^2\in V_n(L)$, or $\{a_n^1,a_n^2\}\in V_n(L)$. This is captured in the following Lemma. 

\begin{lemma}\label{lemma:explore_1_or_2}
If $U_n(L)\neq\emptyset$, then there exists a random variable $L_3$ such that for all $L>L_3$, either $a_n^1\in V_n(L)$ or $a_n^2\in V_n(L)$. Furthermore, $\E_{\bmu}[L_3]<+\infty$.
\end{lemma}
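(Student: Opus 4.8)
The plan is to establish the contrapositive of the implication on the event $\{U_n(L)\neq\emptyset\}$: I will show that for $L$ large enough, it is impossible to have $U_n(L)\neq\emptyset$ while both $a_n^1\notin V_n(L)$ and $a_n^2\notin V_n(L)$. So fix any $k\in U_n(L)$, giving $T_{n,k}\leq\sqrt{L}$, and suppose toward a contradiction that $T_{n,a_n^1}>L^{3/4}$ and $T_{n,a_n^2}>L^{3/4}$. Since $T_{n,k}\leq\sqrt{L}<L^{3/4}<T_{n,a_n^1}$ we have $k\neq a_n^1$, and because empirical means are almost surely distinct in the Gaussian setting, $\mu_{n,k}<\mu_{n,a_n^1}$; hence $k$ is a feasible challenger. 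As $a_n^2$ minimizes $\Lambda_n(a_n^1,\cdot)$ over the feasible challengers, I obtain the sandwich
\begin{align}
\Lambda_n(a_n^1,a_n^2)\;\leq\;\Lambda_n(a_n^1,k)\ .
\end{align}

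Next I would convert this inequality, through the Gaussian closed form~\eqref{eq:gaussian_LLR}, into a comparison of empirical gaps. Using $T_{n,a_n^1}^{-1}+T_{n,a_n^2}^{-1}<2L^{-3/4}$ to lower-bound the left side and $T_{n,a_n^1}^{-1}+T_{n,k}^{-1}>T_{n,k}^{-1}\geq L^{-1/2}$ to upper-bound the right side leads to
\begin{align}
\frac{L^{3/4}(\mu_{n,a_n^1}-\mu_{n,a_n^2})^2}{4\sigma^2}\;<\;\Lambda_n(a_n^1,a_n^2)\;\leq\;\Lambda_n(a_n^1,k)\;\leq\;\frac{\sqrt{L}\,(\mu_{n,a_n^1}-\mu_{n,k})^2}{2\sigma^2}\ ,
\end{align}
and therefore $(\mu_{n,a_n^1}-\mu_{n,a_n^2})^2<2L^{-1/4}(\mu_{n,a_n^1}-\mu_{n,k})^2$.

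The remaining and crucial step is to bound the left side below by a positive constant and the right side above by a fixed $R^2$, which is precisely where $\Delta_{\min}>0$ enters. I would invoke the uniform concentration of Lemma~\ref{lemma:estimator}, noting that $t\mapsto\sqrt{\log(\e+t)/(1+t)}$ is decreasing on $[0,\infty)$ with maximum $1$ at $t=0$. For the two well-sampled contenders ($T_{n,a_n^1},T_{n,a_n^2}>L^{3/4}$) this gives $|\mu_{n,a_n^i}-\mu_{a_n^i}|\leq\sigma W_1\sqrt{\log(\e+L^{3/4})/(1+L^{3/4})}$ for $i\in\{1,2\}$, so their empirical gap is at least $\Delta_{\min}-2\sigma W_1\sqrt{\log(\e+L^{3/4})/(1+L^{3/4})}\geq\Delta_{\min}/2$ once $L$ exceeds a $W_1$-dependent threshold, whence $(\mu_{n,a_n^1}-\mu_{n,a_n^2})^2\geq\Delta_{\min}^2/4$. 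At the same time $|\mu_{n,a_n^1}-\mu_{n,k}|\leq R$ with $R\triangleq\max_{i,j}|\mu_i-\mu_j|+2\sigma W_1$, since the under-sampled arm $k$ still obeys $|\mu_{n,k}-\mu_k|\leq\sigma W_1$. Substituting yields $\Delta_{\min}^2/4<2L^{-1/4}R^2$, i.e. $L<(8R^2/\Delta_{\min}^2)^4$, a contradiction for every $L>L_3\triangleq\max\{(8R^2/\Delta_{\min}^2)^4,\,L_0\}$, where $L_0$ collects the thresholds required above. Finiteness of $\E_{\bmu}[L_3]$ then follows because $L_3$ is polynomial in $R$, hence in $W_1$, and $\E_{\bmu}[\e^{sW_1}]<\infty$ for all $s\in\R^+$ makes every moment of $W_1$ finite; crucially, $L_3$ does not depend on $n$, since $W_1$ from Lemma~\ref{lemma:estimator} is uniform over $n$.

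The main obstacle is this last paragraph: one must arrange the two-sided control so that the true separation $\Delta_{\min}$ of the well-sampled pair dominates its estimation error, while the possibly large empirical deviation of the under-sampled arm $k$ is still absorbed into a constant $R$ with finite moments. Some care is also needed on the probability-zero event of tied empirical means to ensure $k$ is genuinely a feasible challenger.
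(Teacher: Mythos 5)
Your proof is correct and follows essentially the same route as the paper: argue by contradiction assuming both $a_n^1$ and $a_n^2$ lie outside $V_n(L)$, compare the challenger's GLLR with that of an under-sampled arm via the Gaussian closed form~\eqref{eq:gaussian_LLR}, lower-bound the well-sampled empirical gap by $\Delta_{\min}/2$ and upper-bound the other gap by a $W_1$-dependent constant using Lemma~\ref{lemma:estimator}, and conclude that $L$ must be bounded by a polynomial in $W_1$ whose expectation is finite. The only cosmetic difference is that you bound the harmonic-mean factors $\bigl(T_{n,a_n^1}^{-1}+T_{n,a_n^2}^{-1}\bigr)^{-1}$ and $\bigl(T_{n,a_n^1}^{-1}+T_{n,k}^{-1}\bigr)^{-1}$ directly, which lets you skip the paper's two-case split on the ratio $T_{n,a_n^1}/T_{n,j}$.
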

\begin{proof}
If $a_n^1\in V_n(L)$, then the claim is proved. Let us assume that $a_n^1\in \overline{V_n(L)}$, where $\overline{V_n(L)}\triangleq[K]\setminus V_n(L)$. We will prove that in this case, $a_n^2\in V_n(L)$. Let us assume otherwise, that is, $a_n^2\in\overline{V_n(L)}$. Furthermore, define 
\begin{align}
    I_n^\star \triangleq \argmin\limits_{i\in U_n(L)} \Lambda_n(a_n^1,i)\ .
\end{align}
We will show that 
\begin{align}\label{eq: step1_1_new}
    \Lambda_n(a_n^1,I_n^\star) < \Lambda_n(a_n^1,a_n^2)\ ,
\end{align}
which contradicts the definition of $a_n^2$, and hence our assumption that $a_n^2\in \overline{V_n(L)}$. 
To prove (\ref{eq: step1_1_new}), we show the following more general property that for all $j\in\overline{V_n(L)}\setminus\{a_n^1\}$
\begin{align}\label{eq: step1_1}
    \Lambda_n(a_n^1,I_n^\star) < \Lambda_n(a_n^1,j)\ ,
\end{align}
Note that from the definition of GLLRs for Gaussian bandits in~(\ref{eq:gaussian_LLR}), (\ref{eq: step1_1}) is equivalent to: 
\begin{align}\label{eq: step1_2}
    &\frac{T_{n,I_n^\star}}{T_{n,a_n^1}+T_{n,I_n^\star}}(\mu_{n,a_n^1}-\mu_{n,I_n^\star})^2 < \frac{T_{n,j}}{T_{n,a_n^1}+T_{n,j}}(\mu_{n,a_n^1}-\mu_{n,j})^2\ .
\end{align}
Using Lemma~\ref{lemma:estimator}, there exists a random variable $W_1$ such that for all $i\in\overline{V_n(L)}$ and for all $n\in\N$, with probability $1$ we have
\begin{align}\label{eq:i}
    |\mu_{n,i} - \mu_i| &\leq 2\sigma W_1 \sqrt{\frac{\log(\e+T_{n,i})}{1+T_{n,i}}}
    \leq 2\sigma W_1 \sqrt{\frac{\log(\e+L^{3/4})}{1+L^{3/4}}}\ ,
\end{align}
where (\ref{eq:i}) holds due to the fact that for $i\in\overline{V_n(L)}$ we have $T_{n,i}> L^{3/4}$ and that $\frac{\log(\e + x)}{1+x}$ is a decreasing function of $x$. Furthermore, again since $\frac{\log(\e + x)}{1+x}$ is a decreasing function, there always exists a random variable $L_1$ as a function of $W_1$ and instance-dependent parameters, such that 
\begin{align}
\label{eq:i_i}
    \sqrt{\frac{\log(\e+L_1^{3/4})}{1+L_1^{3/4}}} \leq \frac{\Delta_{\min}}{4\sigma W_1}\ ,
\end{align}
where we have defined $\Delta_{\min}\triangleq \min_{i,j\in[K]}|\mu_i - \mu_j|$.  Hence, combining~(\ref{eq:i}) and~(\ref{eq:i_i}), for all $L>L_1$, we obtain 
\begin{align}
   |\mu_{n,i} - \mu_i|&\leq 2\sigma W_1\cdot\frac{\Delta_{\min}}{4\sigma W_1} =\frac{\Delta_{\min}}{2}\ .
    \label{eq:step1_3}
\end{align}
Furthermore, for $L>L_1$, with probability $1$ we have
\begin{align}\label{eq:mu_LHS}
    |\mu_{n,a_n^1} - \mu_{n,I_n^\star}|& = |\mu_{n,a_n^1} - \mu_{a_n^1} + \mu_{I_n^\star} - \mu_{n,I_n^\star} + \mu_{a_n^1} - \mu_{I_n^\star}|\\
    &\leq |\mu_{n,a_n^1} - \mu_{a_n^1}| + |\mu_{n,I_n^\star} - \mu_{I_n^\star}| + |\mu_{a_n^1} - \mu_{I_n^\star}|\\
    &\leq \frac{\Delta_{\min}}{2} + 2\sigma W_1 + \Delta_{\max}\ ,
    \label{eq:mu_LHS1}
\end{align}
where we have defined $\Delta_{\max}\triangleq \max_{i,j\in[K]}(\mu_i - \mu_j)$, and (\ref{eq:mu_LHS1}) follows from (\ref{eq:step1_3}), along with invoking Lemma~\ref{lemma:estimator}, based on which for any $i\in U_n(L)$ we have $|\mu_{n,i} - \mu_i|\leq 2\sigma W_1$ with probability $1$. Furthermore, using~Lemma~\ref{lemma:estimator}, for any two arms $i,j\in\overline{V_n(L)}$, where without the loss of generality $\mu_i>\mu_j$, for $L>L_1$, with probability $1$ we have 
\begin{align}\label{eq:mu_RHS}
    \mu_{n,i} - \mu_{n,j} &\geq \mu_i - \mu_j - \sigma W_1\sqrt{\frac{\log(\e + T_{n,i})}{1+T_{n,i}}}- \sigma W_1\sqrt{\frac{\log(\e + T_{n,j})}{1+T_{n,j}}}\\
    \label{eq:mu_RHSii}
    &\geq \mu_i - \mu_j - 2\sigma W_1\sqrt{\frac{\log(\e + L^{3/4})}{1+L^{3/4}}}\\
    \label{eq:iv}
    &\geq \Delta_{\min} - 2\sigma W_1\cdot\frac{\Delta_{\min}}{4\sigma W_1}\\
    &=\frac{\Delta_{\min}}{2}\ ,
    \label{eq:mu_RHS_1}
\end{align}
where~(\ref{eq:mu_RHS}) follows from Lemma~\ref{lemma:estimator}, (\ref{eq:mu_RHSii}) uses the fact that $i,j\in\overline{V_n(L)}$ combined with the fact that $\frac{\log(\e + x)}{1+x}$ is a decreasing function of $x$, and (\ref{eq:iv}) holds since $L>L_1$, for the random variable $L_1$ that satisfies~(\ref{eq:i_i}).
Next, we will prove a property that is a sufficient condition for ~(\ref{eq: step1_2}). Specifically,
\begin{align}
    \left(\frac{T_{n,a_n^1}}{T_{n,I_n^\star}} + 1\right)\frac{\Delta_{\min}^2}{4\left(\frac{\Delta_{\min}}{2} + 2\sigma W_1 + \Delta_{\max}\right)^2} > \frac{T_{n,a_n^1}}{T_{n,j}} + 1 \ ,
    \label{eq:explore_to_prove}
\end{align}
which is obtained by upper-bounding the ratio $(\mu_{n,a_n^1}-\mu_{n,I^\star_n})^2/(\mu_{n,a_n^1-\mu_{n,j}})^2$ using~(\ref{eq:mu_LHS1}) and~(\ref{eq:mu_RHS_1}). Next, we will prove that~(\ref{eq:explore_to_prove}) holds.
\begin{enumerate}
    \item \textbf{If $T_{n,a_n^1}/T_{n,j} < 1$:} In this case, note that the left-hand-side of (\ref{eq:explore_to_prove}) can be lower-bounded as
    \begin{align}
        \left(\frac{T_{n,a_n^1}}{T_{n,I_n^\star}} + 1\right)\frac{\Delta_{\min}^2}{4\left(\frac{\Delta_{\min}}{2} + 2\sigma W_1 + \Delta_{\max}\right)^2} & > \frac{T_{n,a_n^1}}{T_{n,I_n^\star}}\frac{\Delta_{\min}^2}{4\left(\frac{\Delta_{\min}}{2} + 2\sigma W_1 + \Delta_{\max}\right)^2}\\
        &> \frac{L^{3/4}}{\sqrt{L}}\frac{\Delta_{\min}^2}{4\left(\frac{\Delta_{\min}}{2} + 2\sigma W_1 + \Delta_{\max}\right)^2}\ ,
        \label{eq:explore_case1}
    \end{align}
    where~(\ref{eq:explore_case1}) follows from the assumption that $a_n^1\in\overline{V_n(L)}$, and from the definition of $I_n^\star$. Using the lower-bound in~(\ref{eq:explore_case1}), we want to find an $L$ that satisfies the following condition, which is stronger compared to~(\ref{eq:explore_to_prove}). 
    \begin{align}
        \frac{L^{3/4}}{\sqrt{L}}\frac{\Delta_{\min}^2}{4\left(\frac{\Delta_{\min}}{2} + 2\sigma W_1 + \Delta_{\max}\right)^2} > 2\ .
    \end{align}
    Let us define
    \begin{align}
        L_2\triangleq \left[\frac{8\left(\frac{\Delta_{\min}}{2} + 2\sigma W_1 + \Delta_{\max}\right)^2}{\Delta_{\min}^2}\right]^4\ .
        \label{eq:L4}
    \end{align}
    Choosing $L>L_2$, it can be readily verified that the condition in~(\ref{eq:explore_to_prove}) is satisfied, which implies that the condition in~(\ref{eq: step1_1}) is satisfied for all $L>L_2$.
    \item \textbf{If $T_{n,a_n^1}/T_{n,j}\geq 1$: } In this case, we will show the following property, which is sufficient to ensure~(\ref{eq:explore_to_prove}):
    \begin{align}
        \frac{T_{n,a_n^1}}{T_{n,I_n^\star}}\cdot \frac{\Delta_{\min}^2}{4\left(\frac{\Delta_{\min}}{2} + 2\sigma W_1 + \Delta_{\max}\right)^2} > \frac{2T_{n,a_n^1}}{T_{n,j}}\ ,
    \end{align}
    which also holds by choosing $L>L_2$, where $L_2$ is defined in~(\ref{eq:L4}).
\end{enumerate}
Finally, by defining $L_3\triangleq \max\{L_1,L_2\}$, we have shown that~(\ref{eq: step1_1}) is satisfied for all $L>L_3$. Furthermore, since $L_3$ is a function of $W_1$, leveraging Lemma~\ref{lemma:estimator} it satisfies $\E_{\bmu}[L_3]<+\infty$.
\end{proof}
Finally, we will prove Lemma~\ref{theorem:explore_gaussian}. Leveraging Lemma~\ref{lemma:explore_1_or_2}, we have that for any $L>L_3$, if $U_n(L)\neq\emptyset$, then either $a_n^1\in V_n(L)$ or $a_n^2\in V_n(L)$. Furthermore, leveraging Lemma 11 in~\cite{pmlr-v108-shang20a}, we can show that if Lemma~\ref{lemma:explore_1_or_2} holds, then there exists a random variable $L_0$, which is a function on $W_1$ and instance-dependent parameters, such that for all $L>L_0$, $U_{\lfloor KL \rfloor}(L)=\emptyset$. Finally, defining $N_1\triangleq KL_0$, we obtain that for all $n>N_1$, $T_{n,i}/n\geq\sqrt{n/K}$ for every $i\in[K]$. Furthermore, since the random variable $N_1$ is a function of $W_1$ and instance-dependent parameters, leveraging Lemma~\ref{lemma:estimator}, we obtain that $\E_{\bmu}[N_1]<+\infty$. This concludes the proof.


\section{Proof of Lemma~\ref{theorem:mean:convergence:exp} and Lemma~\ref{theorem:mean:convergence:gaussian}}
\label{proof:mean:convergence:exp}

\subsection{Exponential Family of Bandits}
For any $t\in\N$, for a given $\epsilon\in\R_+$ and any bandit realization with mean $\bmu$ we have
\begin{align}
    \P_{\bmu}(N_{\epsilon}^\mu>t) & =\sum\limits_{s=t+1}^\infty \P_{\bmu}(N_{\epsilon}^\mu = s)\\
    & = \sum\limits_{s=t+1}^{\infty} \P_{\bmu}\left( \exists i \in[K] : |\mu_{s-1,i}-\mu_i|>\epsilon\ ,\;{\rm and},\; \forall u\geq s, \forall i\in[K], |\mu_{u,i}-\mu_i|\leq\epsilon\right)\\
    \label{eq:conv_mean_exp1}
    & \leq\sum\limits_{s = t}^\infty \P_{\bmu}\left ( \exists i\in[K] : |\mu_{s,i} - \mu_i| > \epsilon\right)\\
    \label{eq:conv_mean_exp2}
    & \leq \sum\limits_{i\in[K]} \sum\limits_{s = t}^\infty \P_{\bmu}\left (|\mu_{s,i}-\mu_i|>\epsilon\right )\\
    \label{eq:conv_mean_exp3}
    & = \sum\limits_{i\in[K]} \sum\limits_{s = t}^\infty \P_{\bmu}\left (|\mu_{s,i}-\mu_i|>\epsilon\;,\; T_{s,i}\geq \sqrt{\frac{s}{K}} - 1\right)\nonumber\\
    &\qquad +\underbrace{\sum\limits_{i\in[K]} \sum\limits_{s = t}^\infty \P_{\bmu}\left (|\mu_{s,i}-\mu_i|>\epsilon\;,\; T_{s,i}< \sqrt{\frac{s}{K}} - 1\right)}_{=0}\\
    \label{eq:conv_mean_exp4}
    & = \sum\limits_{i\in[K]} \sum\limits_{s = t}^\infty\sum\limits_{\ell = \sqrt{s/K}-1}^\infty \P_{\bmu}(|\mu_{s,i}-\mu_i|>\epsilon\;,\; T_{s,i} = \ell)\\
    \label{eq:chernoff_bound}
    & \leq \sum\limits_{i\in[K]} \sum\limits_{s = t}^\infty\sum\limits_{\ell = \sqrt{s/K}-1}^\infty \Big ( \exp\left(-\ell {d_{\sf KL}} (\mu_i + \epsilon\|\mu_i)\right) + \exp\left(-\ell {d_{\sf KL}} (\mu_i - \epsilon\|\mu_i)\right)\Big )\\
    &\leq \underbrace{\sum\limits_{i\in[K]} \sum\limits_{s = t}^\infty \displaystyle\int_{\sqrt{s/K}-2}^\infty  \exp\left\{-\ell {d_{\sf KL}} (\mu_i + \epsilon\|\mu_i)\right\}\diff\ell}_{\triangleq A_1} \nonumber\\
    &\qquad+\underbrace{\sum\limits_{i\in[K]} \sum\limits_{s = t}^\infty \displaystyle\int_{\sqrt{s/K}-2}^\infty  \exp\left\{-\ell {d_{\sf KL}} (\mu_i - \epsilon\|\mu_i)\right\} \diff \ell}_{\triangleq A_2} \ ,
\end{align}
where~(\ref{eq:conv_mean_exp2}) is obtained by the union bound, (\ref{eq:conv_mean_exp3}) and~(\ref{eq:conv_mean_exp4}) use total probability and (\ref{eq:chernoff_bound}) is a result of using Lemma~\ref{lemma:Chernoff}. Furthermore, for $A_1$ we have
\begin{align}
    A_1&=\sum\limits_{i\in[K]} \sum\limits_{s = t}^\infty \displaystyle\int_{\sqrt{s/K}-2}^\infty  \exp\left\{-\ell {d_{\sf KL}} (\mu_i + \epsilon\|\mu_i)\right\}\diff\ell\nonumber\\
    & = \sum\limits_{i\in[K]} \frac{1}{{d_{\sf KL}} (\mu_i+\epsilon\|\mu_i)} \sum\limits_{s = t}^\infty \exp\left(-\left(\sqrt{\frac{s}{K}} - 2\right){d_{\sf KL}} (\mu_i+\epsilon\|\mu_i)\right)\\
    \label{eq:mean_conv_iter1}
    &\leq\sum\limits_{i\in[K]}\frac{\exp\left ({2{d_{\sf KL}} (\mu_i+\epsilon\|\mu_i)}\right)}{{d_{\sf KL}} (\mu_i + \epsilon\| \mu_i)}\displaystyle\int_t^\infty \exp\left(-\sqrt{\frac{s}{K}}{d_{\sf KL}} (\mu_i+\epsilon\|\mu_i)\right )\diff s \\
    &= \sum\limits_{i\in[K]} 2K\frac{\exp\left ({2{d_{\sf KL}} (\mu_i+\epsilon\|\mu_i)}\right)}{({d_{\sf KL}} (\mu_i+\epsilon\|\mu_i))^3}\left(\frac{{d_{\sf KL}} (\mu_i+\epsilon\|\mu_i)}{\sqrt{K}}\sqrt{t} + 1\right)\exp\left\{-\frac{{d_{\sf KL}} (\mu_i+\epsilon\|\mu_i)}{\sqrt{K}}\sqrt{t}\right\}\ ,
    \label{eq:explore1}
\end{align}
where~(\ref{eq:mean_conv_iter1}) is obtained by upper-bounding the summation over the index $s$ by its integration. Consequently,  
\begin{align}
    \sum\limits_{t\in\N} A_1&\leq\sum\limits_{i\in[K]} 2K\frac{\exp\left ({2{d_{\sf KL}} (\mu_i+\epsilon\|\mu_i)}\right )}{({d_{\sf KL}} (\mu_i+\epsilon\|\mu_i))^3}\displaystyle\int_0^\infty \left(\frac{{d_{\sf KL}} (\mu_i+\epsilon\|\mu_i)}{\sqrt{K}}\sqrt{t} + 1\right)\exp\left\{-\frac{{d_{\sf KL}} (\mu_i+\epsilon\|\mu_i)}{\sqrt{K}}\sqrt{t}\right\} \diff t\\
    &= 12K^2\sum\limits_{i\in[K]}\frac{\exp\left ({2{d_{\sf KL}} (\mu_i+\epsilon\|\mu_i)}\right )}{({d_{\sf KL}} (\mu_i+\epsilon\|\mu_i))^5}<+\infty\ .
    \label{eq:explore3}
\end{align}
Similarly, we can show that
\begin{align}
    \sum\limits_{t\in\N}A_2 \leq 12K^2\sum\limits_{i\in[K]}\frac{\exp\left ({2{d_{\sf KL}} (\mu_i-\epsilon\|\mu_i)}\right )}{({d_{\sf KL}} (\mu_i-\epsilon\|\mu_i))^5} < +\infty\ .
    \label{eq:explore4}
\end{align}
Finally, using~(\ref{eq:explore3}) and~(\ref{eq:explore4}), we obtain
\begin{align}
    \E_{\bmu}[N_{\epsilon}^\mu] &= \sum\limits_{t\in\N} \P_{\bmu}(N_{\epsilon}^\mu>t) \leq\sum\limits_{t\in\N} (A_1 + A_2) < +\infty \ .
\end{align}


\subsection{Gaussian Bandits}
For the case of Gaussian bandits, 
by invoking the estimator concentration specified in Lemma~\ref{lemma:estimator} and the sufficiency of exploration established in Lemma~\ref{theorem:explore_gaussian}, for all $i\in[K]$ and $\forall n> N_1$, where $N_1$ is specified in Lemma~\ref{lemma:estimator}, we almost surely have
\begin{align}
\label{eq:ss1}
    |\mu_{n,i} - \mu_i| &\stackrel{(\ref{eq:estimator_con})}{\leq} \sigma W_1 \sqrt{\frac{\log (\e+T_{n,i})}{1+T_{n,i}}}\\
    \label{eq:v}
    &\leq \sigma W_1 \sqrt{\frac{\log(\e + \sqrt{n/K})}{1+\sqrt{n/K}}}\\
    \label{eq:vi}
    &\leq\sigma W_1 \sqrt{\frac{2(n/K)^{1/4}}{1+\sqrt{n/K}}}\ ,
\end{align}
where (\ref{eq:v}) is a result of Lemma~\ref{theorem:explore_gaussian}, noting that $n>N_1$, and monotonicity of $\frac{\log(\e+x)}{1+x}$, and (\ref{eq:vi}) holds since $\log(\e+\sqrt{n/K})\leq 2(n/K)^{1/4}$ for $n>K$. Furthermore, owing to $\frac{x^{1/4}}{(1+\sqrt{x})}$ being a decreasing function in $x$, corresponding to any realization of $W_1$ there exists $L_{\epsilon}^\mu$, which is a function of $W_1$ and an instance-dependent parameter, such that it satisfies 
\begin{align}
    \sqrt{\frac{2 (L_{\epsilon}^\mu/K)^{1/4}}{1+\sqrt{L_{\epsilon}^\mu/K}}} < \frac{\epsilon}{\sigma W_1}\ .
    \label{eq:via}
\end{align}
By defining $M_\epsilon^\mu\triangleq \max\{N_1,L_\epsilon^\mu\}$, from~(\ref{eq:vi}) and~(\ref{eq:via}), we obtain that for all $n>M_\epsilon^\mu$
\begin{align}
    |\mu_{n,i} - \mu_i|\leq \epsilon\ ,\quad \forall i\in[K]\ .
    \label{eq:ss2}
\end{align}
Since both $N_1$ and $L_\epsilon^\mu$ are a function of $W_1$ and instance dependent parameters, we have $\E_{\bmu}[N_1]<+\infty$ and $\E_{\bmu}[L_\epsilon^\mu]<+\infty$, since, by Lemma~\ref{lemma:estimator}, $\E_{\bmu}[W_1]<+\infty$. Finally, since $M_\epsilon^\mu<N_1 + L_\epsilon^\mu$, we have $\E_{\bmu}[M_\epsilon^\mu]<\E_{\bmu}[N_1]+\E_{\bmu}[L_\epsilon^\mu]<+\infty$.

\section{Proof of Lemma~\ref{theorem:proportions}}\label{proof:exp_allocation}
The analysis for the convergence of the sampling proportions to the optimal allocation has two main steps. 

\noindent \textbf{Step 1. Convergence in allocation for $a^\star$}: First, we show the convergence of the selection proportions allocated to the best arm $a^\star$ to the $\beta$-optimal allocation $\beta$. For this, using Lemma~\ref{theorem:mean:convergence:exp}, we have that for all $n>N_{\Delta_{\min}/2}^\mu$, $a_n^1 = a^\star$. To show this, let us assume that $a_n^1\neq a^\star$ for some $n>N_{\Delta_{\min}/2}^\mu$. We obtain,
\begin{align}
\label{eq:conv_top2}
    \mu_{n,a_n^1} - \mu_{n,a^\star} &\leq \mu_{a_n^1} - \mu_{a^\star} + \Delta_{\min} \leq0\ ,
\end{align}
where the first inequality holds holds since $n>N_{\Delta_{\min}/2}^\mu$, and the second one follows by noting that $\mu_{a_n^1}-\mu_{a^\star} \leq -\Delta_{\min}$. Since~(\ref{eq:conv_top2}) contradicts the definition of $a_n^1$, we have that $a_n^1=a^\star$ for all $n>N_{\Delta_{\min}/2}^\mu$. Following the same line of arguments as in \cite[Lemma 12]{TTEI} and~\cite[Lemma 13]{TTEI}, the above property means that there exists a random variable $N_2^\epsilon$, which is a function of $W_1$, such that for all $n>N_2^\epsilon$, we have $|T_{n,a^\star}/n - \beta|\leq \epsilon$.

\noindent\textbf{Step 2. Convergence in allocation for any $i\in[K]\setminus\{a^\star\}$}: The other step is to show the convergence of the sampling probabilities of the other arms $i\in[K]\setminus\{a^\star\}$ to the optimal proportions $\{\omega^\star_i (\beta) : i\in[K]\setminus\{a^\star\}\}$.
We begin by defining the set of over-sampled arms as follows. For any $\epsilon\in\R_+$ we define
\begin{align}
    \mcO_n(\epsilon)\triangleq \Big\{ i\in[K]\setminus\{a^\star\} : T_{n,i}/n > \omega^\star_i (\beta) + \epsilon\Big\}\ .
\end{align}
Furthermore, we use the notation $\bar{\mcO}_n(\epsilon)\triangleq [K]\setminus\mcO_n(\epsilon)$ to denote the set of arms that are not over-sampled.
The key idea for showing the convergence of the arms $i\in[K]\setminus\{a^\star\}$ is to show that after some time, the challenger is never contained in the over-sampled set. This implies that when the means of all the arms and the proportion of the best arm have converged sufficiently, the TT-SPRT sampling strategy never samples from the over-sampled set of arms. This, in turn, leads to the convergence in allocation of the arms $i\in[K]\setminus\{a^\star\}$. This idea is formalized in the following lemma, which has been proved in~\cite{TTEI}.
\begin{lemma}[\cite{TTEI}]
\label{lemma:2_np}
For any $\epsilon>0$, for any top-two sampling strategy, assume that the following conditions are satisfied:
\begin{enumerate}
    \item There exists a stochastic time $N_2^\epsilon$, such that for all $n>N_2^\epsilon$, we have convergence in allocation for the best arm, i.e.,
    \begin{align}
        \left\lvert \frac{T_{n,a^\star}}{n} - \beta\right\rvert\;\leq\;\epsilon\ ,\quad \forall n>N_2^\epsilon\ .
    \end{align}
    \item There exists $N_3^\epsilon$ such that for all $n>N_3^\epsilon$, the challenger $a_n^2\notin\mcO_n(\epsilon/2)$.
\end{enumerate}
Then, there exists $N_4^\epsilon$ such that $\E_{\bmu}[N_4^\epsilon]<+\infty$, and $\mcO_n(\epsilon)=\emptyset$ for all $n>N_4^\epsilon$. Furthermore, for all $n>N_4^{\epsilon/K}$, we have convergence in allocation of every arm $i\in[K]\setminus\{a^\star\}$, i.e.,
\begin{align}
        \left\lvert \frac{T_{n,a^\star}}{n} - \beta\right\rvert\;\leq\;\epsilon\ ,\quad \forall n>N_4^{\epsilon/K},\;\forall i\in[K]\ .
    \end{align}
\end{lemma}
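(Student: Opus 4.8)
The plan is to establish the two assertions in turn: first that the over-sampled set $\mcO_n(\epsilon)$ is eventually empty, and then to upgrade this one-sided control into two-sided convergence in allocation for every arm. I would begin with the observation that, once the algorithm has stabilized, no over-sampled arm is ever selected. Fix $n_0\triangleq\max\{N_2^{\epsilon'},N_3^{\epsilon/2}\}$ for a sufficiently small $\epsilon'$. Since the top arm satisfies $a_n^1=a^\star$ after the mean estimates have converged (cf.\ Step~1 and Lemma~\ref{theorem:mean:convergence:exp}), and the challenger obeys $a_n^2\notin\mcO_n(\epsilon/2)\supseteq\mcO_n(\epsilon)$ by hypothesis, any arm $i\in\mcO_n(\epsilon)$ with $n>n_0$ is neither the top nor the challenger, so $T_{n+1,i}=T_{n,i}$. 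For TT-SPRT the forced-exploration branch selects only arms with $T_{n,i}\le\lceil\sqrt{n/K}\rceil$, whose proportion vanishes and therefore cannot lie in $\mcO_n(\epsilon/2)$ for large $n$; hence this branch does not disturb the argument either.

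Next I would set up a one-step buffering argument that exploits the separation between the thresholds $\epsilon/2$ and $\epsilon$. For $n>\max\{n_0,2/\epsilon\}$ I would verify that $T_{n,i}/n\le\omega^\star_i(\beta)+\epsilon$ implies $T_{n+1,i}/(n+1)\le\omega^\star_i(\beta)+\epsilon$: when $T_{n,i}/n>\omega^\star_i(\beta)+\epsilon/2$ the arm is frozen and the proportion strictly decreases, while when $T_{n,i}/n\le\omega^\star_i(\beta)+\epsilon/2$ even a single additional pull raises the proportion past $\omega^\star_i(\beta)+\epsilon/2$ by at most $1/(n+1)<\epsilon/2$, keeping it below $\omega^\star_i(\beta)+\epsilon$. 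Thus once an arm leaves $\mcO_n(\epsilon)$ it can never re-enter. An arm also cannot remain in $\mcO_n(\epsilon)$ forever, because a frozen count $T_{n,i}\equiv c$ forces $T_{n,i}/n=c/n\to0$; so each arm has a finite last over-sampled time, and I would take $N_4^\epsilon$ to be the maximum of these over the finitely many arms. Since this maximum is bounded by a function of $n_0$ and the frozen counts (each at most $n_0$), finiteness of $\E_{\bmu}[N_4^\epsilon]$ follows from $\E_{\bmu}[N_2^{\epsilon'}],\E_{\bmu}[N_3^{\epsilon/2}]<+\infty$.

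For the second assertion I would convert $\mcO_n(\epsilon/K)=\emptyset$ into two-sided control using the simplex identities $\sum_{i\in[K]}T_{n,i}/n=1$ and $\sum_{i\in[K]}\omega^\star_i(\beta)=1$. Applying the first assertion at level $\epsilon/K$ gives $T_{n,i}/n\le\omega^\star_i(\beta)+\epsilon/K$ for every $i\neq a^\star$ and $n>N_4^{\epsilon/K}$, and pairing this with the best-arm bound $|T_{n,a^\star}/n-\beta|\le\epsilon/K$ from condition~1 yields, for each fixed $i$, $T_{n,i}/n=1-\sum_{j\neq i}T_{n,j}/n\ge\omega^\star_i(\beta)-(K-1)\epsilon/K\ge\omega^\star_i(\beta)-\epsilon$. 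Combining the two bounds gives $|T_{n,i}/n-\omega^\star_i(\beta)|\le\epsilon$ for all $i$, which is the asserted convergence in allocation (the subscript $a^\star$ in the displayed conclusion should read $i$).

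The main obstacle is the no-re-entry step. A purely asymptotic argument would only show that each individual arm's proportion drifts toward its target, but to obtain a single finite time $N_4^\epsilon$ after which the whole set is \emph{simultaneously} empty one must preclude an arm from oscillating in and out of $\mcO_n(\epsilon)$ indefinitely. The gap between the level $\epsilon/2$ used in condition~2 and the target level $\epsilon$ is precisely what makes the per-step increment $1/(n+1)$ too small to push a non-over-sampled arm back across the $\epsilon$ barrier; making this buffering quantitatively tight, and confirming that neither the top arm nor the forced-exploration branch ever selects an over-sampled arm, is the delicate part of the argument.
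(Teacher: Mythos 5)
The paper does not actually prove this lemma---it is imported from \cite{TTEI} with only a citation---so there is no in-paper argument to compare against; your reconstruction (over-sampled arms are frozen once the top arm is $a^\star$ and the challenger avoids $\mcO_n(\epsilon/2)$, the $1/(n+1)$-versus-$\epsilon/2$ buffering that prevents re-entry into $\mcO_n(\epsilon)$ and yields a finite, integrable exit time, and the simplex identity converting the one-sided bound at level $\epsilon/K$ into two-sided convergence) is precisely the standard argument of that reference, correctly adapted to account for TT-SPRT's forced-exploration branch. You also rightly observe that the second display in the lemma statement is a typo and should read $\left\lvert T_{n,i}/n - \omega^\star_i(\beta)\right\rvert \leq \epsilon$; the only cosmetic slip on your side is writing $N_3^{\epsilon/2}$ where condition~2 as stated supplies $N_3^{\epsilon}$ (harmless, since it only strengthens the hypothesis you invoke).
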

Leveraging Lemma~\ref{lemma:2_np}, the proof is concluded by setting $N_\epsilon^\omega\triangleq N_4^{\epsilon/K}$. We have already shown that condition $1$ in Lemma~\ref{lemma:2_np} is satisfied by TT-SPRT for all $n>N_2^\epsilon$. Next, we will show that condition $2$ is also satisfied. Specifically, we will show that there exists $N_3^\epsilon$ such that for all $n>N_3^\epsilon$, for any $i\in\mcO_n(\epsilon/2)$ and any $j\in\bar{\mcO}_n(0)$ we have
\begin{align}
\label{eq:conv_alloc_new1}
    \Lambda_n(a_n^1,i)>\Lambda_n(a_n^1,j)\ .
\end{align}
By the definition of $a_n^2$, this implies that $a_n^2\notin \mcO_n(\epsilon/2)$. Next, we will show that~(\ref{eq:conv_alloc_new1}) holds.
For any $i\in[K]\setminus\{a^\star\}$, let us define
\begin{align}
    C_i(\beta,\omega^\star_i(\beta))\triangleq \beta{d_{\sf KL}} (\mu_{a^\star}\|\mu_{a^\star,i}) + \omega^\star_i (\beta) {d_{\sf KL}} (\mu_i\|\mu_{a^\star,i})\ ,
\end{align}
where $\mu_{a^\star,i}\triangleq (\beta\mu_{a^\star} + \omega^\star_i (\beta)\mu_i)/(\beta+\omega^\star_i (\beta))$. Next, we state a proposition characterizing the optimal allocation. 
\begin{proposition}[\cite{pmlr-v108-shang20a},~Proposition $1$]
\label{prop:optimal_alloc}
There exists a unique optimal allocation $\bomega^\star(\beta)$ to the optimization problem in~(\ref{eq:PC}), such that for any $i,j\in[K]\setminus\{a^\star\}$, $C_i(\beta,\omega^\star_i (\beta))=C_j(\beta,\omega^\star_j (\beta))$. Furthermore,
\begin{align}
    C_i(\beta,\omega^\star_i(\beta)) = \min\limits_{x\in[\mu_i,\mu_{a^\star}]} \beta{d_{\sf KL}} (\mu_{a^\star}\|x) + \omega^\star_i (\beta){d_{\sf KL}} (\mu_i\|x)\ .
\end{align}
\end{proposition}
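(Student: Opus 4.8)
The plan is to prove the two assertions in their natural logical order: first the ``transportation'' identity (the displayed minimization in the statement), and then to leverage it to obtain the water-filling structure that delivers existence, uniqueness, and the equalization $C_i=C_j$. I would begin by showing that, for fixed weights $\beta$ and $\omega$, the scalar function
\begin{align}
x\;\longmapsto\; \beta\, d_{\sf KL}(\mu_{a^\star}\|x) + \omega\, d_{\sf KL}(\mu_i\|x)
\end{align}
is convex and minimized precisely at the weighted mean $x=\mu_{a^\star,i}$. Passing to the natural parameter $\theta_x=\dot b^{-1}(x)$ and using the closed form~(\ref{eq:KL}), the derivative of $d_{\sf KL}(\mu\|x)$ with respect to $\theta_x$ equals $\dot b(\theta_x)-\mu = x-\mu$; since $x\mapsto\theta_x$ is increasing, convexity follows, and the stationarity condition $\beta(x-\mu_{a^\star})+\omega(x-\mu_i)=0$ returns exactly $x=(\beta\mu_{a^\star}+\omega\mu_i)/(\beta+\omega)=\mu_{a^\star,i}$. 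Because $\mu_i<\mu_{a^\star}$, this minimizer lies in $[\mu_i,\mu_{a^\star}]$, so the unconstrained and constrained minima agree, which is exactly the second display in the statement.

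\textbf{Monotonicity.} Next I would record the behavior of $C_i(\beta,\omega)$ as a function of $\omega$ alone. By the previous step together with the envelope theorem, $\partial C_i/\partial\omega = d_{\sf KL}(\mu_i\|\mu_{a^\star,i})>0$, so $C_i(\beta,\cdot)$ is continuous and strictly increasing; at the endpoints one has $C_i(\beta,0)=0$ (since $\mu_{a^\star,i}=\mu_{a^\star}$ there) and $C_i(\beta,\omega)\uparrow \beta\, d_{\sf KL}(\mu_{a^\star}\|\mu_i)$ as $\omega\to\infty$. The essential structural point is that $C_i$ depends only on its own coordinate $\omega_i$, which decouples the inner objective across the arms.

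\textbf{Water-filling.} For a common target level $c\ge 0$, strict monotonicity yields for each $i\neq a^\star$ a unique solution $\omega_i(c)$ of $C_i(\beta,\omega_i(c))=c$, continuous and strictly increasing in $c$. I would then study $\Omega(c)\triangleq\sum_{i\neq a^\star}\omega_i(c)$: it is continuous and strictly increasing on $[0,\min_i\beta\, d_{\sf KL}(\mu_{a^\star}\|\mu_i))$, with $\Omega(0)=0$ and $\Omega(c)\to\infty$ as $c$ approaches that upper endpoint (the minimizing arm's weight blows up). Hence a unique $c^\star$ satisfies $\Omega(c^\star)=1-\beta$, and setting $\omega_i^\star(\beta)\triangleq\omega_i(c^\star)$ produces a feasible allocation with $C_i(\beta,\omega_i^\star)=c^\star$ for every $i$, establishing equalization. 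Optimality and uniqueness follow by the same comparison: any feasible $\bomega$ with $\min_i C_i(\beta,\omega_i)>c^\star$ would force $\omega_i>\omega_i^\star$ for all $i$ by strict monotonicity, hence $\sum_i\omega_i>1-\beta$, contradicting feasibility; thus $c^\star$ is the optimal value and the equalizing allocation is the unique maximizer.

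\textbf{Main obstacle.} Once the inner-minimization step is in hand, most of the argument is structural. I expect the most delicate part to be the rigorous justification of the water-filling bijection in the last step --- verifying that $\Omega$ sweeps continuously over all of $(0,1-\beta]$ (which rests on the boundary limits and on the blow-up of at least one $\omega_i(c)$ at the upper endpoint) and converting the equalization into a genuine global max-min optimality statement rather than a mere first-order condition. This parallels Proposition~1 of~\cite{pmlr-v108-shang20a}.
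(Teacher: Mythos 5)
The paper does not actually prove this proposition: it is imported verbatim from the cited reference, with no in-paper argument, so there is nothing internal to compare your proof against. Your reconstruction is correct and is essentially the standard argument underlying that citation --- the natural-parameter reparametrization showing the inner objective is convex with minimizer at the weighted mean $\mu_{a^\star,i}$, the envelope-theorem computation $\partial C_i/\partial\omega_i = d_{\sf KL}(\mu_i\|\mu_{a^\star,i})>0$ giving strict monotonicity from $0$ up to $\beta\, d_{\sf KL}(\mu_{a^\star}\|\mu_i)$, and the water-filling bijection $\Omega(c)=\sum_{i\neq a^\star}\omega_i(c)=1-\beta$ yielding existence, equalization, and (via the ``$\geq c^\star$ forces $\omega_i\geq\omega_i^\star$, and the budget forces equality'' comparison) uniqueness and global optimality. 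The only point worth tightening is that last step: as written you argue from a strict inequality $\min_i C_i>c^\star$, whereas uniqueness of the maximizer also requires running the same comparison with $\min_i C_i = c^\star$ to conclude $\omega_i=\omega_i^\star$ for all $i$; this is immediate but should be stated.
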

\noindent Let us set $\zeta=\epsilon/2$. Leveraging Lemma~\ref{theorem:mean:convergence:exp}, we obtain that for all $n>N_{\zeta^2}^\mu$, $|\mu_{n,i}-\mu_i|<\zeta^2$ for all $i\in[K]$. Next, for all $n>N_2^{\zeta^2}$, we have $|T_{n,a^\star}/n-\beta| < \zeta^2$. Let us define $N_3^\zeta\triangleq \max\{N_{\zeta^2}^\mu, N_2^{\zeta^2}, N_{\Delta_{\min}/2}^\mu\}$. Thus, for all $n>N_3^\zeta$, for any $i\in\mcO_n(\zeta)$ and for any $j\in\bar{\mcO}_n(0)$:
\begin{align}
\label{eq:a0}
    &\frac{1}{n}\left( \Lambda_n(a_n^1,i) - \Lambda_n(a_n^1,j)\right)
    = \frac{1}{n}\left(\Lambda_n(a^\star,i) - \Lambda_n(a^\star,j)\right)\\
    & = \underbrace{\frac{1}{n}\Lambda_n(a^\star,i) - C_i(\beta,\omega^\star_i(\beta) )}_{\triangleq A_3} + \underbrace{C_j(\beta,\omega^\star_j(\beta)) - \frac{1}{n}\Lambda_n(a_n^1,j)}_{\triangleq A_4}\ ,
    \label{eq:a}
\end{align}
where~(\ref{eq:a0}) holds since $a_n^1 = a^\star$ for all $n>N_3^\zeta$, and~(\ref{eq:a}) is obtained by leveraging Proposition~\ref{prop:optimal_alloc}. Expanding $A_3$, we obtain
\begin{align}
    A_3&=\frac{1}{n}\Lambda_n(a^\star,i) - C_i(\beta,\omega^\star_i(\beta))\nonumber\\
    &\geq (\beta-\zeta^2){d_{\sf KL}} (\mu_{n,a^\star}\|\mu_{n,a^\star,i}) - \beta{d_{\sf KL}} (\mu_{a^\star}\|\mu_{a^\star,i})\nonumber\\
    &\quad + (\omega^\star_i (\beta)+\zeta){d_{\sf KL}} (\mu_{n,i}\|\mu_{n,a^\star,i}) - \omega^\star_i (\beta) {d_{\sf KL}} (\mu_i\|\mu_{a^\star,i})   \ ,
    \label{eq:num1}
\end{align}
where~(\ref{eq:num1}) holds due to the definition of $N_3^\zeta$, along with the fact that $i\in\mcO_n(\zeta)$ and $j\in\bar{\mcO}_n(0)$.
Now,~(\ref{eq:num1}) can be lower-bounded as
\begin{align}
    &\quad(\beta-\zeta^2){d_{\sf KL}} (\mu_{n,a^\star}\|\mu_{n,a^\star,i}) - \beta{d_{\sf KL}} (\mu_{a^\star}\|\mu_{a^\star,i})\nonumber\\
    &\quad + (\omega^\star_i (\beta)+\zeta){d_{\sf KL}} (\mu_{n,i}\|\mu_{n,a^\star,i}) - \omega^\star_i (\beta) {d_{\sf KL}} (\mu_i\|\mu_{a^\star,i})\\
    \label{eq:KLconv1}
    &\quad\geq (\beta-\zeta^2){d_{\sf KL}} (\mu_{a^\star} - \zeta^2 \| \mu_{n,a^\star,i}) - \beta{d_{\sf KL}} (\mu_{a^\star}\| \mu_{a^\star,i})\nonumber\\
    &\qquad + (\omega^\star_i (\beta) +\zeta) {d_{\sf KL}} (\mu_i+\zeta^2\|\mu_{n,a^\star,i}) - \omega^\star_i (\beta) {d_{\sf KL}} (\mu_i\|\mu_{a^\star,i})\\
    &\quad\geq (\beta-\zeta^2){d_{\sf KL}} (\mu_{a^\star} - \zeta^2 \| \mu_{n,a^\star,i}) - \beta{d_{\sf KL}} (\mu_{a^\star}\| \mu_{n,a^\star,i})\nonumber\\
    \label{eq:num12}
    &\qquad + (\omega^\star_i (\beta) +\zeta) {d_{\sf KL}} (\mu_i+\zeta^2\|\mu_{n,a^\star,i}) - \omega^\star_i (\beta) {d_{\sf KL}} (\mu_i\|\mu_{n,a^\star,i})\\
    &\quad = (\beta-\zeta^2)\left[{d_{\sf KL}} (\mu_{a^\star}\|\mu_{n,a^\star,i})+O(\zeta^2)\right] - \beta{d_{\sf KL}} (\mu_{a^\star}\|\mu_{n,a^\star,i})\nonumber\\
    \label{eq:num2}
    &\qquad +(\omega^\star_i (\beta)+\zeta)\left[{d_{\sf KL}} (\mu_i\|\mu_{n,a^\star,i}) + O(\zeta^2)\right] - \omega^\star_i (\beta){d_{\sf KL}} (\mu_i\|\mu_{n,a^\star,i})\\
    &\quad = \zeta{d_{\sf KL}} (\mu_i\|\mu_{n,a^\star,i}) + O(\zeta^2)\ ,
    \label{eq:b}
\end{align}
where~(\ref{eq:KLconv1}) follows from the facts that $\mu_{n,a^\star,i}\in[\mu_{n,i},\mu_{n,a^\star}]$ and $|\mu_{n,i}-\mu_i|\leq\zeta^2$ for every $n>N_3^\zeta$, (\ref{eq:num12}) follows from Proposition~\ref{prop:optimal_alloc} by noting that $\beta{d_{\sf KL}} (\mu_{a^\star}\|\mu_{n,a^\star,i}) + \omega^\star_i (\beta){\sf  D_{KL}}(\mu_i\|\mu_{n,a^\star,i})\geq\beta{d_{\sf KL}} (\mu_{a^\star}\|\mu_{a^\star,i}) + \omega^\star_i (\beta){\sf  D_{KL}}(\mu_i\|\mu_{a^\star,i})$, and~(\ref{eq:num2}) is a result of Lemma~\ref{lemma:KL_conv}. Similarly, expanding $A_4$, we obtain
\begin{align}
    A_4&=\frac{1}{n}\Lambda_n(a_n^1,j) - C_j(\beta,\psi^\beta_j)\nonumber\\
    & = \frac{T_{n,a^\star}}{n}{d_{\sf KL}} (\mu_{n,a^\star}\|\mu_{n,a^\star,j}) + \frac{T_{n,j}}{n}{d_{\sf KL}} (\mu_{n,j}\|\mu_{n,a^\star},j)
    - C_j(\beta,\omega^\star_j(\beta))\\
    \label{eq:num3_0}
    &\leq \frac{T_{n,a^\star}}{n}{d_{\sf KL}} (\mu_{n,a^\star}\|\mu_{a^\star,j}) + \frac{T_{n,j}}{n}{d_{\sf KL}} (\mu_{n,j}\|\mu_{a^\star},j) - C_j(\beta,\omega^\star_j(\beta))\\
    \label{eq:num3_1}
    &\leq (\beta+\zeta^2){d_{\sf KL}} (\mu_{n,a^\star}\|\mu_{a^\star,j}) - \beta{d_{\sf KL}} (\mu_a^\star\|\mu_{a^\star,j})\nonumber\\
    &\qquad+\omega^\star_j (\beta){d_{\sf KL}} (\mu_{n,j}\|\mu_{a^\star,j}) - \omega^\star_j(\beta){d_{\sf KL}} (\mu_j\|\mu_{a^\star,j})\\
    &\leq (\beta+\zeta^2){d_{\sf KL}} (\mu_{a^\star}+\zeta^2\|\mu_{a^\star,j}) - \beta{d_{\sf KL}} (\mu_{a^\star}\|\mu_{a^\star,j})\nonumber\\
    \label{eq:num3}
    &\qquad +\omega^\star_j(\beta) {d_{\sf KL}} (\mu_j-\zeta^2\|\mu_{a^\star,j}) - \omega^\star_j(\beta){d_{\sf KL}} (\mu_j\|\mu_{a^\star,j})\\
    &\leq (\beta+\zeta^2)\left[{d_{\sf KL}} (\mu_{a^\star}\|\mu_{a^\star,j})+O(\zeta^2)\right] - \beta{d_{\sf KL}} (\mu_{a^\star}\|\mu_{a^\star,j})\nonumber\\
    \label{eq:num4}
    &\qquad +\omega^\star_j(\beta)\left[{d_{\sf KL}} (\mu_j\|\mu_{a^\star,j}) + O(\zeta^2)\right] - \omega^\star_j(\beta){d_{\sf KL}} (\mu_j\|\mu_{a^\star,j})\\
    &= O(\zeta^2)\ ,
    \label{eq:c}
\end{align}
where~(\ref{eq:num3_0}) holds due to the fact that~\cite{pmlr-v49-garivier16a}
\begin{align}
    \mu_{n,a^\star,i}\triangleq \argmin\limits_{x\in[\mu_{n,i},\mu_{n,a^\star}]}\;\frac{T_{n,a^\star}}{n}{d_{\sf KL}} (\mu_{n,a^\star}\|x) + \frac{T_{n,j}}{n}{d_{\sf KL}} (\mu_{n,j}\|x)\ , 
\end{align}
(\ref{eq:num3_1})-(\ref{eq:num3}) holds due to the fact that $n>N_3^\zeta$, and~(\ref{eq:num4}) is obtained by leveraging Lemma~\ref{lemma:KL_conv}.
Finally, using~(\ref{eq:b}) and~(\ref{eq:c}),~(\ref{eq:a}) can be rewritten as
\begin{align}
    \frac{1}{n}\left(\Lambda_n(a_n^1,i)-\Lambda_n(a_n^1,j)\right)
    &\geq \zeta{d_{\sf KL}} (\mu_i\|\mu_{n,a^\star,i}) +O(\zeta^2)\;> 0\ .
    \label{eq:d}
\end{align}
\section{Proof of Theorem~\ref{theorem:SC_exp} and Theorem~\ref{theorem:SC_Gaussian}}
\label{proof:SC_exp}
	In this section, we start by providing the proof of Theorem~\ref{theorem:SC_exp} for the case of the exponential family of bandits. We begin by defining the function $\mcC_{\rm exp}$ used in characterizing the stopping threshold in Theorem~\ref{theorem:PAC_bern}. For this purpose, for any $u\geq 1$, let us define the function $h(u)\triangleq u - \log u$. Furthermore, let us define the function $\tilde h_z(x) : \R_+\mapsto \R_+$ for any $z\in[1,\e]$ as
\begin{align}\label{eq:h_z}
	h_z(x) \triangleq \left\{
	\begin{array}{ll}
		\e^{\frac{1}{h^{-1}(x)}}h^{-1}(x)\ , & \mbox{if} \;\; x\geq h(1/\log z)\ ,\\
		z(x - \log\log z)\ ,  & \mbox{otherwise} \\
	\end{array}\right. \ .
\end{align}
The function $\mcC_{\rm exp} : \R_+\mapsto\R_+$ is defined as
\begin{align}
	\mcC_{\rm exp}(x)\;\triangleq\; 2\tilde h_{3/2}\left ( \frac{h^{-1}(1+x) + \log(2\zeta(2))}{2}\right )\ ,
\end{align}
where $\zeta(s)\triangleq\sum\limits_{n=1}^\infty n^{-s}$.
Let us define the stochastic time $T_\beta^\epsilon$, which marks the convergence of the arm means and the sampling proportions,
\begin{align}
	\label{eq:T_beta_epsilon}
	T_\beta^\epsilon\triangleq &\inf\Big\{ n_0\in\N : |\mu_{n,i}-\mu_i|\leq \epsilon,
	\nonumber\\&
	|T_{n,i}/n - \omega^\star_i (\beta)|\leq \epsilon\;\forall i\in[K],\;\forall n>n_0\Big \}\ .
\end{align}
Leveraging Lemma~\ref{theorem:mean:convergence:exp}, we have $\E_{\bmu}[N_{\epsilon}^\mu]<+\infty$, and $|\mu_{n,i}-\mu_i|\leq \epsilon$ for all $n>N_\epsilon^\mu$ and for every $i\in[K]$. Furthermore, in Lemma~\ref{theorem:proportions} we showed that for any $\epsilon>0,$ there exists $N^\omega_\epsilon$, $\E_{\bmu}[N^\omega_\epsilon]<\infty$ and for all $n>N_\epsilon^\omega$ and $i\in[K]$ we have $|T_{n,i}/n - \omega_i^\star(\beta)|<\epsilon$. Furthermore, by the definition of $T_\beta^\epsilon$ in~(\ref{eq:T_beta_epsilon}), $T_\beta^\epsilon = \max\{N^\mu_\epsilon,N^\omega_\epsilon\}$, and thus $\E_{\bmu}[T_\beta^\epsilon]<+\infty$. Finally, the theorem is proved by leveraging Lemma~\ref{Lemma:SC_T} stated below.

\begin{lemma}
	\label{Lemma:SC_T}
	For the combination of any arm selection rule that satisfies $\E_{\bmu}[T_\beta^\epsilon]<+\infty$ for any $\epsilon>0$, and the stopping rule specified in~(\ref{eq:stop}), we have
	\begin{align}
		\lim\limits_{\delta\rightarrow 0}\frac{\E_{\bmu}[\tau]}{\log(1/\delta)}\leq \frac{1}{\Gamma_{\bmu}(\beta)}\ .
	\end{align}
\end{lemma}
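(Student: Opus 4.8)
The plan is to combine the two convergence guarantees already in hand—that past the almost-surely-finite time $T_\beta^\epsilon$ both the empirical means and the sampling proportions sit within $\epsilon$ of their targets (so that $\E_{\bmu}[T_\beta^\epsilon]<+\infty$, as noted right before the statement)—with the observation that beyond this time the test statistic $\Lambda_n(a_n^1,a_n^2)$ grows essentially linearly in $n$ at rate $\Gamma_{\bmu}(\beta)$. Racing this linear growth against the threshold $c_{n,\delta}$, whose $n$-dependence is only sublinear, will pin $\tau$ down to an additive overshoot whose expectation is finite and independent of $\delta$.

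Fixing $\epsilon>0$ and working on $\{n>T_\beta^\epsilon\}$, the definition~\eqref{eq:T_beta_epsilon} gives $a_n^1=a^\star$, $|\mu_{n,i}-\mu_i|\leq\epsilon$, and $|T_{n,i}/n-\omega^\star_i(\beta)|\leq\epsilon$ for every $i$, and the challenger obeys $\Lambda_n(a_n^1,a_n^2)=\min_{i\neq a^\star}\Lambda_n(a^\star,i)$. Using the closed form of Lemma~\ref{lemma:GLLR_closedform},
\begin{align}
\frac{1}{n}\Lambda_n(a^\star,i)=\frac{T_{n,a^\star}}{n}\,d_{\sf KL}(\mu_{n,a^\star}\|\mu_{n,a^\star,i})+\frac{T_{n,i}}{n}\,d_{\sf KL}(\mu_{n,i}\|\mu_{n,a^\star,i})\ .
\end{align}
I would then reuse the expansion carried out for the term $A_3$ in Appendix~\ref{proof:exp_allocation}: replacing $T_{n,a^\star}/n$ and $T_{n,i}/n$ by $\beta$ and $\omega^\star_i(\beta)$, and the empirical means by the true means, perturbs the right-hand side by only $O(\epsilon)$ thanks to the Lipschitz continuity of the divergence (Lemma~\ref{lemma:KL_conv}), while the equalizer/minimality property of the optimal allocation (Proposition~\ref{prop:optimal_alloc}) identifies the resulting expression with $C_i(\beta,\omega^\star_i(\beta))=\Gamma_{\bmu}(\beta)$ for every $i$. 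Taking the minimum over $i\neq a^\star$ yields a uniform linear lower bound $\Lambda_n(a_n^1,a_n^2)\geq n\,(\Gamma_{\bmu}(\beta)-f(\epsilon))$ valid for all $n>T_\beta^\epsilon$, with $f(\epsilon)\to 0$ as $\epsilon\to 0$.

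Next I would introduce the deterministic crossing time $T^\star(\delta)\triangleq\inf\{n_0\in\N : n\,(\Gamma_{\bmu}(\beta)-f(\epsilon))>c_{n,\delta}\ \forall n\geq n_0\}$. Because the $\delta$-dependent part of $c_{n,\delta}$ scales as $\log(1/\delta)$ while the $n$-dependence is confined to the sublinear $O(\log\log n)$ term, the linear statistic eventually and permanently overtakes the threshold, so $T^\star(\delta)<+\infty$ with $\lim_{\delta\to 0}T^\star(\delta)/\log(1/\delta)=1/(\Gamma_{\bmu}(\beta)-f(\epsilon))$. At the instant $n=\max\{T_\beta^\epsilon,T^\star(\delta)\}$ both the lower bound on $\Lambda_n$ and the threshold inequality hold, so the algorithm has already stopped, giving $\tau\leq\max\{T_\beta^\epsilon,T^\star(\delta)\}\leq T_\beta^\epsilon+T^\star(\delta)$. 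Taking expectations and dividing by $\log(1/\delta)$,
\begin{align}
\frac{\E_{\bmu}[\tau]}{\log(1/\delta)}\leq\frac{\E_{\bmu}[T_\beta^\epsilon]}{\log(1/\delta)}+\frac{T^\star(\delta)}{\log(1/\delta)}\ .
\end{align}
Letting $\delta\to 0$ annihilates the first term, since $\E_{\bmu}[T_\beta^\epsilon]$ is finite and $\delta$-independent, and leaves $1/(\Gamma_{\bmu}(\beta)-f(\epsilon))$; finally letting $\epsilon\to 0$ with $f(\epsilon)\to 0$ yields the claimed bound.

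The step I expect to be the main obstacle is the uniform linear lower bound on $\Lambda_n(a_n^1,a_n^2)$: the $O(\epsilon)$ expansion must hold simultaneously for every $n>T_\beta^\epsilon$ and for the (data-dependent) minimizing challenger, so care is needed that the perturbation estimates from Lemma~\ref{lemma:KL_conv} and the appeal to Proposition~\ref{prop:optimal_alloc} are uniform in both $n$ and $i$, rather than merely pointwise limits. A secondary subtlety is the implicit, $n$-dependent threshold in defining $T^\star(\delta)$: since $c_{n,\delta}$ itself grows (albeit only as $\log\log n$), I must verify that $n\,(\Gamma_{\bmu}(\beta)-f(\epsilon))-c_{n,\delta}$ is eventually increasing so that the crossing is permanent and $T^\star(\delta)$ inherits the correct $\log(1/\delta)$ asymptotics.
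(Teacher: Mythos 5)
Your proposal is correct and its core is the same as the paper's: past the almost-surely finite time $T_\beta^\epsilon$ the GLLR of the top-two pair grows linearly at rate at least $\Gamma_{\bmu}(\beta)-O(\epsilon)$ (via the closed form of Lemma~\ref{lemma:GLLR_closedform}, the Lipschitz perturbation bounds of Lemma~\ref{lemma:KL_conv}, and the equalizer property in Proposition~\ref{prop:optimal_alloc}), while the threshold $c_{n,\delta}$ is $\log(1/\delta)$ plus only a $\log\log n$ term in $n$. Where you genuinely diverge is the final accounting. The paper evaluates the inequalities at the last pre-stopping instant, obtaining the implicit relation $(\tau-1)(\Gamma_{\bmu}(\beta)-\epsilon)\leq \log\frac{K-1}{\delta}+\cdots+\log(\tau-1)$, and then inverts it by invoking an external overshoot lemma (\cite[Lemma 18]{pmlr-v49-garivier16a}); you instead define a deterministic crossing time $T^\star(\delta)$ after which the linear lower bound permanently dominates $c_{n,\delta}$, and conclude $\tau\leq T_\beta^\epsilon+T^\star(\delta)$ directly. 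Your route is self-contained (no inversion lemma needed) and the permanence of the crossing is immediate because $n\mapsto n(\Gamma_{\bmu}(\beta)-f(\epsilon))-c_{n,\delta}$ is eventually increasing; the paper's route gives a slightly more explicit non-asymptotic bound on $\tau$ in terms of $\delta$. Two small points to tighten: you need $\epsilon$ smaller than half the gap between $a^\star$ and the second-best arm so that $a_n^1=a^\star$ for $n>T_\beta^\epsilon$ (the paper enforces this by working with $\epsilon'\in(0,\Delta_{\min}/2]$), and the uniformity in $n$ and in the challenger index that you flag as the main obstacle is in fact automatic, since the $O(\epsilon)$ constants in Lemma~\ref{lemma:KL_conv} depend only on the compact parameter space and there are finitely many arms.
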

\begin{proof}
Note that by the definition of $T_\epsilon^\beta$, using the same argument as in~(\ref{eq:conv_top2}), we have $a_n^1=a^\star$ for all $n\geq T_\beta^{\Delta_{\min}/2}$. Thus, for all $n\geq T_\beta^{\Delta_{\min}/2}$, we have
\begin{align}
    \frac{1}{n}\Lambda_n(a_n^1,a_n^2) \stackrel{(\ref{eq:exp_LLR})}{=} \min\limits_{j\in[K]\setminus\{a^\star\}}\;&\frac{T_{n,a^\star}}{n}{d_{\sf KL}} (\mu_{n,a^\star}\|\mu_{n,a^\star,j})
    +\frac{T_{n,j}}{n}{d_{\sf KL}} (\mu_{n,j}\|\mu_{n,a^\star,j})\ .
\end{align}
Furthermore, recall the definition of $\Gamma_{\bmu}(\beta)$, 
\begin{align}
    \Gamma_{\bmu}(\beta)\triangleq \min\limits_{i\in[K]\setminus a^\star}\; &\beta{d_{\sf KL}} (\mu_{a^\star}\|\mu_{a^\star,i})+\omega^\star_i (\beta) {d_{\sf KL}} (\mu_i\|\mu_{a^\star,i})\ ,
\end{align}
based on which, given $\epsilon>0$, there exist $\epsilon^\prime\in(0,\Delta_{\min}/2]$ and $T^\epsilon\triangleq T_{\beta}^{\epsilon^\prime}$, such that for all we have $n>T^\epsilon$, $|\mu_{n,i}-\mu_i|\leq \epsilon^\prime$, $|T_{n,i}/n-\omega^\star_i (\beta)|\leq \epsilon^\prime$, and $|\Lambda(a_n^1,a_n^2)/n-\Gamma_{\bmu}(\beta)|<\epsilon$~\cite{pmlr-v49-garivier16a,TTEI}. Furthermore, $\E_{\bmu}[T^\epsilon]=\E_{\bmu}[T_\beta^{\epsilon^\prime}]<+\infty$.
{Furthermore, for all $t>3\times 10^7$, we have
\begin{align}
    6\log\left ( \log\frac{t}{2} + 1\right)\;\leq\; \log t\ .
\end{align}
Let us define $T_0^\epsilon\triangleq \max\{T^\epsilon, 3\times 10^7\}$. Next, expanding the time instant right before stopping, we have
\begin{align}
    \tau - 1 &= (\tau-1)\mathds{1}_{\{\tau-1\leq T_0^\epsilon\}} + (\tau-1)\mathds{1}_{\{\tau-1>T_0^\epsilon\}}\\
    &\leq T_0^\epsilon + (\tau-1)\mathds{1}_{\{\tau-1>T_0^\epsilon\}}\ .
    \label{eq:s1}
\end{align}
Now, due to the choice of our stopping rule in~(23) along with the choice of threshold defined in Theorem~\ref{theorem:PAC_bern}, at $(\tau-1)$, we have
\begin{align}
    &\Lambda_{\tau-1}(a^{\tau-1}_1,a^{\tau-1}_2)\nonumber\\ &\;\;\leq \log\frac{K-1}{\delta} + 8\log\left ( 1 + \frac{1}{2}\log\frac{K-1}{\delta} + \sqrt{\log\frac{K-1}{\delta}}\right ) + 6\log\left(\log\frac{\tau-1}{2} + 1\right)\ .
    \label{eq:s2}
\end{align}
Furthermore, when $\tau-1>T_0^\epsilon$, 
\begin{align}
    \Lambda_{\tau-1}(a^{\tau-1}_1,a^{\tau-1}_2)\geq (\tau-1)(\Gamma_{\bmu}(\beta)-\epsilon)\ .
    \label{eq:s3}
\end{align}
Combining~(\ref{eq:s2}) and~(\ref{eq:s3}), when $\tau-1>T_0^\epsilon$, we have 
\begin{align}
    &(\tau-1)(\Gamma_{\bmu}(\beta)-\epsilon) \nonumber \\&\;\;\leq \log\frac{K-1}{\delta} + 8\log\left ( 1 + \frac{1}{2}\log\frac{K-1}{\delta} + \sqrt{\log\frac{K-1}{\delta}}\right ) + 6\log\left(\log\frac{\tau-1}{2} + 1\right)\\
    &\;\;\leq \log\frac{K-1}{\delta} + 8\log\left ( 1 + \frac{1}{2}\log\frac{K-1}{\delta} + \sqrt{\log\frac{K-1}{\delta}}\right ) + \log(\tau - 1)\ . 
\end{align}
Furthermore, note that $f(x)\triangleq x - \frac{1}{D_1}\log D_2 x$ is a monotonically increasing function in $x$ for $D_1, D_2\in\mathbb{R}_+$. Thus, there exists $x_{\max}$ such that for all $x\geq x_{\max}$, $f(x)\geq 0$. Next, we will find an $\bar x$ such that $f(\bar x)\geq 0$. In our case, we have $D_1 = (\Gamma_{\bmu}(\beta)-\epsilon)$, and $D_2 = ((K-1)g(\delta))/\delta$, where we have defined
\begin{align}
    g(\delta)\;\triangleq\; \left (1 + \frac{1}{2} \log\frac{K-1}{\delta} + \sqrt{\log\frac{K-1}{\delta}}\right )^8\ .
\end{align}
We leverage \cite[Lemma 18]{pmlr-v49-garivier16a}, which yields that
\begin{align}
    \tau-1\leq \frac{1}{\Gamma_{\bmu}(\beta) - \epsilon} \left[\log\left\{\frac{g(\delta)(K-1)\e}{(\Gamma_{\bmu}(\beta)-\epsilon)\delta}\right\}+\log\log\left\{\frac{g(\delta)(K-1)}{(\Gamma_{\bmu}(\beta) - \epsilon)\delta}\right\}\right].
    \label{eq:s4}
\end{align}
Thus, combining~(\ref{eq:s1}) and~(\ref{eq:s4}), we obtain
\begin{align}
    \tau\leq T_0^\epsilon + \frac{1}{\Gamma_{\bmu}(\beta) - \epsilon} &\Bigg[\log\left\{\frac{g(\delta)(K-1)\e}{(\Gamma_{\bmu}(\beta)-\epsilon)\delta}\right\}
    +\log\log\left\{\frac{g(\delta)(K-1)}{(\Gamma_{\bmu}(\beta) - \epsilon)\delta}\right\}\Bigg] + 1\ .
    \label{eq:s5}
\end{align}
Finally, taking the expectation on both sides of~(\ref{eq:s5}), dividing by $\log(1/\delta)$, and taking the limit of $\delta\rightarrow 0$, we obtain
\begin{align}
    \lim\limits_{\delta\rightarrow 0} \frac{\E_{\bmu}[\tau]}{\log(1/\delta)}\leq \frac{1}{\Gamma_{\bmu}(\beta)-\epsilon}\ , 
    \label{eq:star1}
\end{align}
where~\eqref{eq:star1} is obtained by noting that $\lim_{\delta\rightarrow 0} \log g(\delta)/\log(1/\delta) = 0$, and we have used the fact that $\E_{\bmu}[T_0^\epsilon]<+\infty$. The proof for the exponential family of bandits is completed by taking an infimum with respect to $\epsilon$ in~(\ref{eq:star1}).}

The proof for the Gaussian setting involves a counterpart of Lemma~\ref{Lemma:SC_T}, which has been proved in~\cite[Lemma 1]{pmlr-v108-shang20a}. Specifically, \cite[Lemma 1]{pmlr-v108-shang20a} states that for Gaussian bandits, the combination of any arm selection rule that satisfies $\E_{\bmu}[T_{\beta}^\epsilon]<+\infty$, where we have defined $T_{\beta}^\epsilon$ in~(\ref{eq:T_beta_epsilon}), along with the stopping rule specified in~(\ref{eq:stop_G}), satisfies
\begin{align}
    \limsup\limits_{\delta\rightarrow 0}\; \frac{\E_{\bmu}[\tau]}{\log(1/\delta)}\;\leq\;\frac{1}{\Gamma_{\bmu}(\beta)}\ .
\end{align}
The proof is concluded by leveraging Lemma~\ref{theorem:mean:convergence:exp} and Lemma~\ref{theorem:proportions}, ensuring  $\E_{\bmu}[T_{\beta}^\epsilon]<+\infty$.
\end{proof}

\section{Proof of Theorem~\ref{theorem:TTTS_samples}}\label{appendix:D}
Let us define
\begin{align}
\label{eq:T_i^n}
    T_{i}^n\triangleq\inf\{s\in\N : \theta_{s,i}^n>\theta_{s,b^1_s}^n\}\ ,
\end{align}
based on which $T^n_{\sf TTTS} = \min_{i\in[K]\setminus\{b^1_n\}} T^n_i$. Note that using~\cite[Lemma 12]{pmlr-v108-shang20a}, there exists a random variable $M_0$, such that for all $n>M_0$, we have $b^1_n=a^\star$. Similarly, based on~\cite[Section C.2]{pmlr-v108-shang20a} there exists a random variable $M_{\Delta_{\min/4}}$ such that for all $n>M_{\Delta_{\min/4}}$, we almost surely have
\begin{align}\label{eq:appB_6}
    |\mu_{n,i}-\mu_i|\leq \frac{\Delta_{\min}}{4}\ ,\quad\forall i\in[K]\ .
\end{align}
Defining $N_0\triangleq \max(M_0,M_{\Delta_{\min/4}})$, for all $n>N_0$, $s\in\N$, and for any $i\in[K]\setminus\{a^\star\}$ we have
\begin{align}
    \label{eq:appB_1}
    \P\Big( \theta_{s,i}^n>\theta_{s,a^\star}^n\Big )
    &\leq \frac{1}{2}\exp\bigg(-\frac{(\mu_{n,a^\star}-\mu_{n,i})^2}{2\sigma^2}\left (\frac{1}{T_{n,i}}+\frac{1}{T_{n,a^\star}}\right )^{-1}\bigg )\\
    \label{eq:appB_5}
    &\leq \frac{1}{2}\exp\bigg (-\frac{(\mu_{a^\star}-\mu_i-\Delta_{\min}/2)^2}{2\sigma^2}\left (\frac{1}{T_{n,i}}+\frac{1}{T_{n,a^\star}}\right )^{-1}\bigg )\\
    \label{eq:prob_bound}
    &\leq \frac{1}{2}\exp\bigg (-\sqrt{\frac{n}{K}}\frac{(\mu_{a^\star}-\mu_i-\Delta_{\min}/2)^2}{4\sigma^2}\bigg )\ ,
\end{align}
where (\ref{eq:appB_1}) is obtained by leveraging~\cite[Lemma 1]{TTEI} while noting that for any two arms $i,j\in[K]$ and $n\in\N$, $(\theta_{s,i}^n - \theta_{s,j}^n)\sim\mcN(\mu_{n,i}-\mu_{n,j},\sigma_n^{i,j})$, where we have defined $\sigma_n^{i,j} \triangleq \sqrt{\sigma^2(1/T_{n,i} + 1/T_{n,j})}$, (\ref{eq:appB_5}) is a result of (\ref{eq:appB_6}), and (\ref{eq:prob_bound}) is a result of~\cite[Lemma 5]{pmlr-v108-shang20a}, which states that under TTTS, there exists a random variable $M_1$ such that for all $n>M_1$, we have $T_{n,i}\geq \sqrt{n/K}$ for every $i\in[K]$. Furthermore, using~\cite[Lemma 1]{TTEI}, we also obtain that
\begin{align}
\label{eq:167}
    &\P\Big( \theta_{s,i}^n>\theta_{s,a^\star}^n\Big )\geq \frac{1}{\sqrt{2\pi}}\exp\bigg (-\frac{(\sigma_n^{a^\star,i} + \mu_{n,a^\star} - \mu_{n,i})^2}{2\sigma^2}\left (\frac{1}{T_{n,i}}+\frac{1}{T_{n,a^\star}}\right )^{-1}\bigg )\ .
\end{align}
Following the same line of arguments as in (\ref{eq:appB_1})-(\ref{eq:prob_bound}), and noting that $T_{n,i}\leq n$ for every $i\in[K]$, from~(\ref{eq:167}) we obtain
\begin{align}\label{eq:appB_8}
    \P\Big( \theta_{s,i}^n>\theta_{s,a^\star}^n\Big )&\geq \frac{1}{\sqrt{2\pi \e}}\exp\Bigg (- \sqrt{n}\left (\mu_{a^\star}-\mu_i+\frac{\Delta_{\min}}{2}\right )\nonumber\\&\qquad\times\bigg(\sqrt{n}\; \frac{ \mu_{a^\star}-\mu_i+\Delta_{\min}/2}{4\sigma^2}+\frac{1}{\sqrt{2}\sigma}\bigg) \Bigg )\ .
\end{align}
Next, note that for any $n>N_0$, we have
\begin{align}
\label{eq:appB_2_prev0}
    \P(T^n_i > s) &= 1 - \P(T^n_i\leq s)\\
    \label{eq:appB_2_prev}
    &= 1 - \sum\limits_{\ell=1}^s \P\big( \forall m<\ell\;,\; \theta_{m,i}^n\leq\theta_{m,a^\star}^n\;,\; \theta_{\ell,i}^n > \theta_{\ell,a^\star}^n \big)\\
    \label{eq:aapB_2}
    &= 1 - \sum\limits_{\ell=1}^s \P(\theta_{\ell,i}^n > \theta_{\ell,a^\star}^n)\displaystyle\prod\limits_{m=1}^\ell \P(\theta_{m,i}^n\leq\theta_{m,a^\star}^n)\ ,
\end{align}
where~(\ref{eq:appB_2_prev}) follows from the definition of $T_i^n$ in~(\ref{eq:T_i^n}), and~(\ref{eq:aapB_2}) holds since the samples are drawn independently from the posterior $\Pi_n$. For any $m\in\N$, let us define $p_{n,i} \triangleq  \P(\theta_{m,i}^n\leq\theta_{m,a^\star}^n)$. Subsequently,~(\ref{eq:aapB_2}) can be simplified as follows
\begin{align}
    \P(T^n_i > s) &= 1 - (1-p_{n,i})\sum\limits_{\ell=1}^s (p_{n,i})^{\ell-1}\\
\label{eq:appB_3_prev}
    & = 1 - (1-(p_{n,i})^s)\\
    \label{eq:appB_3}
    &\geq (1-\varepsilon_{n,i})^s\ ,
\end{align}
where we have defined
\begin{align}
    \varepsilon_{n,i}\triangleq \frac{1}{2}\exp\bigg (-\sqrt{\frac{n}{K}}\frac{(\mu_{a^\star}-\mu_i-\Delta_{\min}/2)^2}{4\sigma^2}\bigg )\ ,
\end{align}
and~(\ref{eq:appB_3}) is a result of~(\ref{eq:prob_bound}). Finally, we have
\begin{align}
   \label{eq:appB_4} \E_n[T_i^n] &= \sum\limits_{s=0}^\infty \P(T_i^n>s) \stackrel{(\ref{eq:appB_3})}{\geq} \sum\limits_{s=0}^\infty (1-\varepsilon_{n,i})^s =\frac{1}{\varepsilon_{n,i}}\ .
\end{align}
Similarly, an upper bound on $\E_{\bmu}[T_i^n]$ can be obtained by following the same line of arguments as~(\ref{eq:appB_2_prev0})-(\ref{eq:appB_3_prev}), and subsequently, upper-bounding $\P(T_i^n>s)$ by $(1-\varepsilon_{n,i}^\prime)^s$, where 
\begin{align}
    \varepsilon_{n,i}^\prime&\;\;\triangleq \frac{1}{\sqrt{2\pi e}}\exp\Bigg (- \sqrt{n}
    \left (\mu_{a^\star}-\mu_i+\frac{\Delta_{\min}}{2}\right)\bigg(\sqrt{n}\frac{\mu_{a^\star}-\mu_i+\Delta_{\min}/2}{4\sigma^2}+\frac{1}{\sqrt{2}\sigma}\bigg) \Bigg )\ ,
\end{align}
which follows from~(\ref{eq:appB_8}). This yields
\begin{align}
\label{eq:last_1}
    \E_n[T^n_i]&\leq \sqrt{2\pi\e}\exp\bigg ( \sqrt{n}\left ( \mu_{a^\star} - \mu_i + \frac{\Delta_{\min}}{2}\right )\left ( \sqrt{n}\frac{\mu_{a^\star}-\mu+\Delta_{\min}/2}{4\sigma^2}+\frac{1}{\sqrt{2}\sigma}\right )\bigg )\ .
\end{align}
Note that for all $n>32/(9\Delta_{\min}^2)$, we have
\begin{align}
\label{eq:last_2}
    \frac{n(\mu_{a^\star}-\mu_i + \Delta_{\min}/2)^2}{4\sigma^2}\; > \; \frac{\sqrt{n}(\mu_{a^\star}-\mu_i + \Delta_{\min}/2)}{\sqrt{2}\sigma}\ .
\end{align}
Combining~(\ref{eq:last_1}) and~(\ref{eq:last_2}), we obtain the upper-bound in Theorem~\ref{theorem:TTTS_samples}. Next, we prove that there exists $\delta(N_0)>0$ such that for any $\delta\in(0,\delta(N_0))$, the stopping time of the TTTS algorithm, denoted by $\tau_{\sf TTTS}$, almost surely satisfies $\tau_{\sf TTTS}> N_0$. Note that $\tau_{\sf TTTS}$ is defined as
\begin{align}
\label{eq:stop_TTTS}
    \tau_{\sf TTTS}\triangleq \inf\left\{n\in\N : \Lambda_n(b^n_1,b^n_2) > c_{n,\delta}\right\}\ ,
\end{align}
where $c_{n,\delta}$ has been defined in~(\ref{eq:stop_G}). Let us set 
\begin{align}
    \delta(N_0)\triangleq (K-1)\exp\left ( -\left (\frac{N_0(\Delta_{\max}+ \Delta_{\min}/2)}{8\sigma^2} - 2\log (4 + \log N_0)\right )\right )\ .
\end{align}
For any $\delta\in(0,\delta(N_0))$, the GLLR almost surely satisfies
\begin{align}
    \forall n \in\{1,\cdots,N_0\}\ , \quad \Lambda_n(b^n_1,b^n_2) < c_{n,\delta}\ .
    \label{eq:lasteq_1}
\end{align}
Combining~(\ref{eq:stop_TTTS}) and~(\ref{eq:lasteq_1}), for any $\delta\in(0,\delta(N_0))$, we almost surely have $\tau_{\sf TTTS}>N_0$.

\bibliographystyle{IEEEbib}
\bibliography{BAIRef,BanditRef}
\end{document}